\documentclass[twoside,11pt]{article}
\usepackage{jmlr2e2}
\providecommand{\SetAlgoVlined}{\SetLine}

\usepackage[latin1]{inputenc}
\usepackage[T1]{fontenc} % Pout la césure en français...
\usepackage{lmodern}
\usepackage{natbib}

\usepackage{amscd}

\title{Off-policy Learning with Eligibility Traces:\\ A Survey}

\def\qed{\endproof}
\def%In the framework of Markov Decision Processes, we consider the problem
%of learning a linear approximation of the value function of some fixed
%policy from one trajectory possibly generated by some other policy. We
%review \emph{on-policy} learning least-squares algorithms of the
%literature (LSTD~\citep{Boyan:2002}, LSPE~\citep{ioffe},
%FPKF~\citep{Choi:2006} and GPTD~\citep{Engel:2005}/KTD~\citep{geist:jair}).
%We then describe a systematic approach for
%adapting them to \emph{off-policy} learning \emph{with eligibility
%traces}. This leads to two known algorithms,
%LSTD($\lambda$)/LSPE($\lambda$)~\citep{Yu:2010} and suggests new
%extensions of FPKF and GPTD/KTD.  We describe their recursive
%implementation, discuss their convergence properties, and illustrate
%their behavior experimentally. Overall, our study suggests that the
%state-of-art LSTD($\lambda$)~\citep{Yu:2010} remains the best
%least-squares algorithm.

In the framework of Markov Decision Processes, \emph{off-policy}
learning, that is the problem of learning a linear approximation of
the value function of some fixed policy from one trajectory possibly
generated by some other policy. We briefly review \emph{on-policy}
learning algorithms of the literature (gradient-based and
least-squares-based), adopting a unified algorithmic view.  Then, we
highlight a systematic approach for adapting them to \emph{off-policy}
learning \emph{with eligibility traces}. This leads to some known
algorithms -- off-policy LSTD($\lambda$), LSPE($\lambda$),
TD($\lambda$), TDC/GQ($\lambda$) -- and suggests new extensions --
off-policy FPKF($\lambda$), BRM($\lambda$), gBRM($\lambda$),
GTD2($\lambda$). We describe a comprehensive algorithmic derivation of
all algorithms in a recursive and memory-efficent form, discuss their
known convergence properties and illustrate their relative empirical
behavior on Garnet problems. Our experiments suggest that the most
standard algorithms on and off-policy LSTD($\lambda$)/LSPE($\lambda$)
-- and TD($\lambda$) if the feature space dimension is too large for a
least-squares approach -- perform the best.
{%In the framework of Markov Decision Processes, we consider the problem
%of learning a linear approximation of the value function of some fixed
%policy from one trajectory possibly generated by some other policy. We
%review \emph{on-policy} learning least-squares algorithms of the
%literature (LSTD~\citep{Boyan:2002}, LSPE~\citep{ioffe},
%FPKF~\citep{Choi:2006} and GPTD~\citep{Engel:2005}/KTD~\citep{geist:jair}).
%We then describe a systematic approach for
%adapting them to \emph{off-policy} learning \emph{with eligibility
%traces}. This leads to two known algorithms,
%LSTD($\lambda$)/LSPE($\lambda$)~\citep{Yu:2010} and suggests new
%extensions of FPKF and GPTD/KTD.  We describe their recursive
%implementation, discuss their convergence properties, and illustrate
%their behavior experimentally. Overall, our study suggests that the
%state-of-art LSTD($\lambda$)~\citep{Yu:2010} remains the best
%least-squares algorithm.

In the framework of Markov Decision Processes, \emph{off-policy}
learning, that is the problem of learning a linear approximation of
the value function of some fixed policy from one trajectory possibly
generated by some other policy. We briefly review \emph{on-policy}
learning algorithms of the literature (gradient-based and
least-squares-based), adopting a unified algorithmic view.  Then, we
highlight a systematic approach for adapting them to \emph{off-policy}
learning \emph{with eligibility traces}. This leads to some known
algorithms -- off-policy LSTD($\lambda$), LSPE($\lambda$),
TD($\lambda$), TDC/GQ($\lambda$) -- and suggests new extensions --
off-policy FPKF($\lambda$), BRM($\lambda$), gBRM($\lambda$),
GTD2($\lambda$). We describe a comprehensive algorithmic derivation of
all algorithms in a recursive and memory-efficent form, discuss their
known convergence properties and illustrate their relative empirical
behavior on Garnet problems. Our experiments suggest that the most
standard algorithms on and off-policy LSTD($\lambda$)/LSPE($\lambda$)
-- and TD($\lambda$) if the feature space dimension is too large for a
least-squares approach -- perform the best.
}

\jmlrheading{N}{20??}{?-?}{?/?}{?/?}{Matthieu Geist and Bruno Scherrer}
\ShortHeadings{Off-policy learning with traces}{M. Geist and B. Scherrer} \firstpageno{1} \editor{?}

%\bibliographystyle{natbib}
%%%%%%%%%%%%%%%%%%%%%%%%%%%%%%%%%%%%%%%%%%%%%%%%%%%%%%%%%%%%%%%%%%
%%% pour ne garder que les numéros d'équations citées par eqref AH AH AH !!!
%\usepackage{amsmath}
\usepackage{mathtools}
\mathtoolsset{showonlyrefs=true}
%%%%%%%%%%%%%%%%%%%%%%%%%%%%

\usepackage{graphicx} % more modern

\usepackage{amssymb}

\usepackage[ruled,algo2e]{algorithm2e}

 %\operatorname* pour les op. pouvant admettre des limites...
\newcommand{\argmin}{\operatorname*{argmin}}

\newcommand{\diag}{\operatorname*{diag}}
\newcommand{\commentout}[1]{}

\usepackage[usenames]{color} % Only used in comment commands
\definecolor{Blue}{rgb}{0,0.16,0.90}
\definecolor{Red}{rgb}{0.90,0.16,0}
\newcommand{\blue}[1]{{\color{Blue} #1}}
\newcommand{\red}[1]{{\color{Red} #1}}

\newcommand{\eqrefb}[1]{Equation~\eqref{eq:#1}}
\newcommand{\refsec}[1]{Section~\ref{sec:#1}}
\newcommand{\refsubsec}[1]{Section~\ref{subsec:#1}}

\newcommand{\tphi}{\tilde{\Phi}}

\def\ie{\textit{i.e.}}
\def\eg{for instance}
\def\cf{\textit{cf.} }
\def\aka{also known as }

\author{\name{Matthieu Geist} \email  matthieu.geist@supelec.fr\\
\addr IMS-MaLIS Research Group, Supélec (France) 
\AND
\name{Bruno Scherrer} \email bruno.scherrer@inria.fr \\
 \addr MAIA project-team, INRIA Lorraine (France)
}

%\date{}

\begin{document}

\maketitle

\begin{abstract}
  
\end{abstract}

\begin{keywords}
  Reinforcement Learning, Value Function Estimation, Off-policy Learning, Eligibility Traces %, Linear Least-Squares Algorithms, Convergence Analysis
\end{keywords}

\section{Introduction}
\label{sec:intro}

We consider the problem of learning a linear approximation of
the value function of some fixed policy in a Markov Decision Process
(MDP) framework, in the most general situation where
learning must be done from a single trajectory possibly generated by
some other policy, \aka \emph{off-policy} learning.
Given samples, well-known methods for estimating a value
function are temporal difference (TD) learning and Monte
Carlo~\citep{Sutton:1998}. TD learning with eligibility
traces~\citep{Sutton:1998}, known as TD($\lambda$), constitutes a nice
bridge between both approaches, and by controlling the bias/variance
trade-off \citep{kearns:2000}, their use can significantly speed up
learning.  When the value function is approximated through a linear
architecture, the depth $\lambda$ of the eligibility traces is also
known to control the quality of approximation~\citep{vanroy:1997}.
Overall, the use of these traces %(and the setting of $\lambda$)
often
plays an important practical role.

\begin{table}[tbh]
  \begin{center}
    \begin{tabular}{c||c|c}
      ~ & gradient-based & least-squares-based
      \\ \hline \hline
      bootstrapping & TD & FPKF
      \\
      ~ & \citep{Sutton:1998} & \citep{Choi:2006}
      \\ \hline
      residual & gBRM & BRM
      \\
      ~ & \citep{Baird:1995} & \citep{Engel:2005,geist:jair}
      \\ \hline
      projected fixed point & TDC/GTD2 & LSTD \citep{Bradtke:1996}
      \\
      ~ & \citep{Sutton:2009a} & LSPE \citep{Nedic:2003}
    \end{tabular}
    \caption{Taxonomy of linearly parameterized estimators for value function approximation~\citep{geist:vfa}.}
    \label{tab:taxonomy_simple}
  \end{center}
\end{table}

There has been a significant amount of research on parametric linear
approximation of the value function, \emph{without eligibility
traces} (in the on- or off-policy case). We follow the taxonomy
proposed by~\citet{geist:vfa}, briefly recalled in
Table~\ref{tab:taxonomy_simple} and further developped in
\refsec{methods}. Value function approximators can be
categorized depending on the cost function they minimize (based on
bootstrapping, on a Bellman residual minimization or on a projected
fixed point approach) and on how it is minimized
(gradient-descent-based or linear least-squares).
Most of these algorithms have been extended to take into account
eligibility traces, in the on-policy case. Works on extending these
approaches (based on eligibility traces) to off-policy learning are
scarser. They are summarized in Table~\ref{tab:sumary} (algorithms in
black).
The first motivation of this article is to argue that it is
conceptually simple to extend \emph{all} the algorithms of
Table~\ref{tab:taxonomy_simple} so that they can be applied to the
off-policy setting \emph{and} use eligibility traces. If this allows
rederiving existing algorithms (in black in Table~\ref{tab:sumary}),
it also leads to new candidate algorithms (in red in
Table~\ref{tab:sumary}). The second
motivation of this work is to discuss the subtle differences between
these intimately-related algorithms, and to
provide some comparative insights on their empirical behavior (a
topic that has to our knowledge not been considered in the
literature, even in the simplest \emph{on-policy} and
\emph{no-trace} situation).

\begin{table}[tbh]
  \begin{center}
    \begin{tabular}{c||c|c}
      ~ & gradient-based & least-squares-based
      \\ \hline \hline
      bootstrapping & off-policy TD($\lambda$) & \red{off-policy
      FPKF($\lambda$)}
      \\
      ~ & \citep{bertsekas:09proj} & ~
      \\ \hline
      residual & \red{off-policy gBRM($\lambda$)} & \red{off-policy
      BRM($\lambda$)}
      \\ \hline
      projected fixed point & GQ($\lambda$) (a.k.a. off-policy TDC($\lambda$)) & off-policy LSTD($\lambda$)
      \\
      ~ & \citep{Maei:2010} & off-policy LSPE($\lambda$)
      \\
      ~ & \red{off-policy GTD2($\lambda$)} & \citep{Yu:2010}
    \end{tabular}
    \caption{Surveyed off-policy and eligibility-traces-based approaches. Algorithms in black have been published before (provided references),
    algorithms in red are new.}
    \label{tab:sumary}
  \end{center}
\end{table}

The rest of this article is organized as follows.  \refsec{methods}
introduces the background of Markov Decision Processes, describes the
state-of-the-art algorithms for learning without eligibility traces,
and gives the fundamental idea to extend the methods to the off-policy
situation with eligibility traces. \refsec{algo} details this
extension for the least-squares based approaches: the resulting
algorithms are formalized, and we derive recursive and memory-efficient formula for their implementation  (this allows online learning without loss of
generality, all the more that half of these algorithms are recursive
by their very definition), and we discuss their convergence
properties.
\refsec{algo_grad} does the same job for
stochastic gradient based approaches, which offers a smaller
computational cost (linear per update, instead of quadratic).
Last but not least, \refsec{exp} describes an empirical comparison and \refsec{conclusion} concludes. 
%Some technical proofs
%are postponed to the appendices.

\section{Background}
\label{sec:methods}

We consider a Markov Decision Process (MDP), that is a tuple $\{S,A,P,R,\gamma\}$ in
which $S$ is a finite state space identified with $\{1,2,\dots,N\}$,  $A$ a finite action space, $P\in \mathcal{P}(S)^{S\times A}$ the set of transition
probabilities, $R\in \mathbb{R}^{S\times A}$ the
reward function and $\gamma$ the discount factor. A mapping
$\pi\in\mathcal{P}(A)^S$ is called a policy. For any policy $\pi$, let $P^\pi$ be the corresponding stochastic transition matrix, and $R^\pi$ the vector of mean reward when following $\pi$, \ie, of components $E_{a|\pi,s}[R(s,a)]$.
The value $V^\pi(s)$ of state $s$ for a policy $\pi$ is the expected discounted
cumulative reward starting in state $s$ and then following the
policy $\pi$:
$$
V^\pi(s) = E_\pi\left[\sum_{i=0}^\infty \gamma^i r_i|s_0=s\right],
$$
where $E_\pi$ denotes the expectation induced by policy $\pi$.
The value function satisfies the (linear) Bellman equation:
$$
\forall s,~~V^\pi(s) = E_{s',a|s,\pi}[R(s,a) + \gamma V^\pi(s')].
$$
It can be
rewritten as the fixed-point of the Bellman evaluation operator:
$V^\pi = T^\pi V^\pi$ where for all $V,~~T^\pi V=R^\pi + \gamma P^\pi V$.

In this article, we are interested in learning an approximation of this value function
$V^\pi$ under some constraints. First, we assume our
approximation to be linearly parameterized:
$$
\forall s,~~\hat{V}_\theta(s) =\theta^T \phi(s)
$$
with $\theta\in\mathbb{R}^p$ being the parameter vector and
$\phi(s) \in\mathbb{R}^p $ the feature vector in state $s$. Also, we want to estimate the value
function $V^\pi$ (or equivalently the associated parameter $\theta$)
from a single finite trajectory generated using a possibly different
behavioral policy $\pi_0$. Let $\mu_{0}$ be the stationary
distribution of the stochastic matrix $P_0=P^{\pi_0}$ of the
\emph{behavior policy} $\pi_0$ (we assume it exists and is unique).
Let $D_0$ be the diagonal matrix of which the elements are
$(\mu_0(s_i))_{1 \le i \le N}$. Let $\Phi$ be the matrix of feature
vectors:
$$
\Phi = [\phi(1)\dots\phi(N)]^T.
$$
The projection
$\Pi_0$ onto the hypothesis space spanned by $\Phi$ with respect to the
$\mu_0$-quadratic norm, which will be central for the understanding of the algorithms, has the following closed-form:
\begin{equation}
  \Pi_0 = \Phi(\Phi^T D_0 \Phi)^{-1} \Phi^T D_0.
\end{equation}
If $\pi_0$ is different from $\pi$, it is called an off-policy setting.
Notice that all algorithms considered in this paper use this $\Pi_0$
projection operator, that is the projection according to the
observed data\footnote{It would certainly be interesting to consider the
projection according to the stationary distribution of $\pi$, the
(unobserved) target policy: this would reduce off-policy learning to
on-policy learning. However, this would require reweighting samples
according to the stationary distribution of the target policy $\pi$,
which is unknown and probably as difficult to estimate as the value
function itself. As far as we know, the only work to move in this
direction is the off-policy approach of~\citet{Kolter:2011}: samples
are weighted such that the projection operator composed with the
Bellman operator is non-expansive (so, weaker than finding the
projection of the stationary distribution, but offering some
guarantees). In this article, we consider only the
$\Pi_0$ projection.}.

% Writing $N$ the number of states, and introducing the following \emph{feature matrix}:
% $$
% \Phi = \left( \phi(s_1) \dots \phi(s_N) \right)^T= \left( \begin{array}{c}
%   \phi(i_1)^T \\
%   \vdots \\
%   \phi(i_N)^T
% \end{array}\right)
% $$
% we also have the following compact representation for V:
% $\hat{V}_\theta=\Phi \theta$. Let $\mu$ be some distribution on the
% state space $S$. The linear projection onto the space defined by the
% features $\Phi$ with respect to the $\mu$-weighted quadratic norm
% weighted, \ie to $\|x\|=\sqrt{\mu(s)x(x)^2}$, has the following
% closed form: $\Pi_\mu = \Phi (\Phi^T D \Phi)^{-1} \Phi^T D$ where
% $D=\diag(\mu)$. Let $\mu_0$ the stationary distribution of the
% Markov Chain induced by $\pi_0$ (we assume it exists and is unique).
% We will write $\Pi$ without subscript its associated distribution.

\paragraph{Standard Algorithms for on-policy Learning without Traces.}
We now review existing on-policy linearly
parameterized temporal difference learning algorithms (see
Table~\ref{tab:taxonomy_simple}). In this case, the behavior and
target policies are the same, so we omit the subscript $0$ for the
policy ($\pi$) and the projection ($\Pi$). We assume that a
trajectory $(s_1,a_1,$ $r_1,s_2,\dots,$ $s_i,a_i,r_i,$
$s_{i+1},\dots, s_n,a_n,r_n,s_{n+1})$ sampled according to the policy $\pi$ is
available, and will explain how to compute the $i^{th}$ iterate for several algorithms. For all $j \le i$, let us introduce the empirical Bellman operator at step $j$:
\begin{align}
  \hat{T}_j~:~ \mathbb{R}^S &\rightarrow  \mathbb{R} \\
  V &\mapsto r_j + \gamma
  V(s_{j+1}) 
\end{align}
so that $\hat{T}_j V$ is an unbiased estimate of $T V(s_j)$.
% If
%values were observable, estimating the projected parameter vector
%$\theta$ would reduce to project the value function onto the
%hypothesis space using the empirical projection operator. This would
%be the classical least-squares approach.  Since values are not
%observed -- only transitions (rewards and next states) are --, we will
%rely on \emph{temporal differences}, terms of the form $\hat T_{j+1} V-V(s_{j+1})$, to estimate the value  $V(s_{j+1})$.

{\bf Projected fixed point approaches} aim at
%
%The Least-Squares Temporal Differences (LSTD) algorithm
%of~\citet{Bradtke:1996} aims at
finding the fixed-point of the operator being the composition of the
projection onto the hypothesis space and of the Bellman operator. In
other words, they search for the fixed-point $\hat{V}_\theta = \Pi T
\hat{V}_\theta$, $\Pi$ being the just introduced projection
operator. Solving the
following fixed-point problem:
\begin{equation}
  \theta_i = \argmin_{\omega\in\mathbb{R}^p} \sum_{j=1}^i
  \left(\hat{T}_j \hat{V}_{\blue{\theta_{i}}} - \hat{V}_\omega(s_j)\right)^2
  \label{eq:lstd_0}
\end{equation}
with a least-squares approach, corresponds to 
the Least-Squares Temporal Differences (LSTD) algorithm
of~\citet{Bradtke:1996}. Recently, \citet{Sutton:2009a}
proposed two algorithms reaching the same objective, Temporal Difference with
gradient Correction (TDC) and Gradient Temporal Difference 2 (GTD2), by performing  a stochastic gradient descent of the function $\theta \mapsto \|\hat{V}_\theta-\Pi T\hat{V}_\theta \|^2$ which is minimal (and equal to 0) when $\hat{V}_\theta = \Pi T
\hat{V}_\theta$.

A related approach consists in building a recursive algorithm that repeatedly mimicks the iteration $\hat{V}_{\theta_i} \simeq \Pi T \hat{V}_{\theta_{i-1}}$.
In practice,  we aim at minimizing
\begin{equation}
\omega \mapsto \sum_{j=1}^i
  \left(\hat{T}_j \hat{V}_{\blue{\theta_{i-1}}} - \hat{V}_\omega(s_j)\right)^2
  \label{eq:lspe_0}.
\end{equation}
Performing the minimization exactly through a least-squares method
leads to the Least-Squares Policy Evaluation (LSPE) algorithm
of~\citet{ioffe}.
If this minimization is approximated by a stochastic gradient descent, this leads to the classical Temporal Difference (TD) algorithm~\citep{Sutton:1998}.

%Notice that LSPE is fundamentally recursive (computing $\theta_i$
%assumes that $\theta_{i-1}$ is known).
%

%%
%The Fixed-Point Kalman Filter (FPKF) algorithm of~\citet{Choi:2006}
%is a least-squares variation of the classical temporal difference
%learning algorithm~\citep{Sutton:1998}.

{\bf Bootstrapping approaches} consist in treating value function
approximation after seeing the $i^{th}$ transition as a supervised learning problem, by replacing the unobserved values $V^\pi(s_j)$ at states $s_j$ by some estimate computed from the trajectory until the transition $(s_j,s_{j+1})$, the best such estimate being $\hat{T}_j\hat{V}_{\theta_{j-1}}$. This
amounts to minimizing the following function:%~\citep[Sec.~1.6]{Choi:2006}:
\begin{equation}
\omega \mapsto \sum_{j=1}^i
  \left(\hat{T}_j \hat{V}_{\blue{\theta_{j-1}}} - \hat{V}_\omega(s_j)\right)^2
  \label{eq:fpkf_0}.
\end{equation}
\citet{Choi:2006} proposed
the Fixed-Point Kalman Filter (FPKF), a least-square variation of TD
that minmizes exactly the function of~\eqrefb{fpkf_0}.  If the
minimization is approximated by a stochastic gradient descent, this
gives -- again -- the classical TD algorithm~\citep{Sutton:1998}.

%Remark that FPKF is
%fundamentally recursive (computing $\theta_i$ assumes that
%$\theta_{j}$ is known, for all $j<i$).

%
Finally, {\bf residual approaches} aim at minimizing the distance between
the value function and its image through the Bellman operator, $\|V
- T V\|_{\mu_0}^2$. Based on a trajectory, this suggests the following function to minimize
\begin{equation}
\omega \mapsto  \sum_{j=1}^i
  \left(\hat{T}_j \hat{V}_{\blue \omega} - \hat{V}_\omega(s_j)\right)^2
  \label{eq:brm_0},
\end{equation}
which is a biased surrogate of the objective  $\|V
- T V\|_{\mu_0}^2$ (\eg, see \cite{antos2006}).
This cost function has originally been proposed
by~\citet{Baird:1995} who minimized it using a stochastic gradient
approach (this algorithm being referred here as gBRM for
gradient-based Bellman Residual Minimization). 
%A slightly different  cost function (that uses double sampling to remove the bias) has been considered by~\citet{Munos:2003}.
Both the
\emph{parametric} Gaussian Process Temporal Differences (GPTD)
algorithm of~\citet{Engel:2005} and the \emph{linear} Kalman
Temporal Differences (KTD) algorithm of~\citet{geist:jair} can be
shown to minimize the above cost using a least-squares approach, and are thus the very same algorithm\footnote{This is only true in the linear case. GPTD and KTD were both introduced in a more general setting: GPTD is nonparametric and KTD is motivated by the goal of handling nonlinearities.}, that we will refer
to as BRM (for Bellman Residual Minimization) in the remaining of
this paper.

To sum up, it thus appears that after the $i^{th}$ transition has been observed, the above mentioned algorithms behave according to the following pattern:
\begin{align}
\mbox{move from $\theta_{i-1}$ to $\theta_i$ towards the minimum of } \omega \mapsto \sum_{j=1}^i
  \left(\hat{T}_j \hat{V}_{\red \xi} - \hat{V}_{\omega}(s_j)\right)^2,
  \label{eq:unify1}
\end{align}
either through a least-squares approach or a stochastic gradient
descent.
Each of the algorithms mentionned above is obtained by substituting $\blue{\theta_i}\text{, } \blue{\theta_{i-1}}\text{, } \blue{\theta_{j-1}}\text{ or }\blue{\omega}$ for  $\red \xi$.

%If more algorithms can be summarized under this generic
%equation~(see~\citet{geist:vfa}), the current paper focuses on
%linear least-squares based approaches.

\paragraph{Towards Off-policy Learning with Traces}

It is now easy to preview, at least at a high level, how one may extend the previously described algorithms so that they can deal with
eligibility traces and off-policy learning.
The idea of eligibility traces amounts to looking for the fixed-point of the
following variation of the Bellman operator~\citep{Bertsekas:1996}
\begin{equation}
\forall V \in \mathbb{R}^S,~~  T^\lambda V = (1-\lambda)
  \sum_{k=0}^\infty \lambda^k T^{k+1} V
  \label{eq:tl_bert}
\end{equation}
that makes a geometric average with parameter $\lambda \in (0,1)$ of the
powers of the original Bellman operator $T$. Clearly, any
fixed-point of $T$ is a fixed-point of $T^\lambda$ and vice-versa.
After some simple algebra, one can see that:
\begin{align}
T^\lambda V & =  (I-\lambda \gamma P)^{-1}(R+(1-\lambda)\gamma P V) \label{eq:tlambda} \\
 &=  V+(I-\lambda \gamma P)^{-1}(R+\gamma P V - V).
\end{align}
This leads to the following well-known \emph{temporal
difference}-based expression in some state $s$
\begin{align}
T^\lambda V(s) &= V(s)  + E_\pi\left[\sum_{k=i}^\infty (\gamma \lambda)^{k-i} \Big(r_k + \gamma V(s_{k+1}) - V(s_k)\Big) \Big|s_i=s\right]\\
& = V(s) + \sum_{k=i}^\infty (\gamma  \lambda)^{k-i} \delta_{ik}(s)
\end{align}
where we recall that $E_\pi$ means that the expectation is done according to the target policy $\pi$, and where we $\delta_{ik}(s)=E_\pi\left[r_k + \gamma V(s_{k+1}) - V(s_k) \Big|s_i=s\right]$ is the expected temporal-difference \citep{Sutton:1998}. With $\lambda=0$, we recover the
Bellman evaluation equation. With $\lambda = 1$, this
is the definition of the value function as the expected and
discounted cumulative reward: $T^1 V(s) = E_\pi[\sum_{k=i}^\infty
\gamma ^{k-i} r_k | s_i=s]$.

%As learning is done over a finite trajectory (say of total size $n$), it is natural to introduce the following truncated operator, which considers samples until time $n$:
%\begin{equation}
%\forall s,~  T^\lambda_n V(s) = V(s)
  %
%  + E_\pi\left[\sum_{j=i}^n (\gamma
%  \lambda)^{j-i} \Big(r_j + \gamma V(s_{j+1}) - V(s_j)\Big) \Big|s_i=s \right].
%\end{equation}
%To use it practically, we still need to remove the dependency to the
%model (\ie, the expectation) and to take into account the fact that we
%want to consider off-policy learning.
As before, we assume that we are given a trajectory  $(s_1,a_1,r_1,s_2,\dots,$ $s_j,a_j,r_j,s_{j+1} \dots,$ $ s_n,a_n,r_n, s_{n+1})$, except now that it may be generated from some behaviour policy possibly different from the target policy $\pi$ of which we want to estimate the value. We are going to describe how to compute the $i^{th}$ iterate for several algorithms. For any $i \le k$, unbiased estimates
of the temporal difference terms $\delta_{ik}(s_k)$ can be computed through
importance sampling~\citep{Ripley:87}. Indeed, for all $s,a$, let us introduce
the following weight:
\begin{equation}
\rho(s,a)= \frac{\pi(a|s)}{\pi_0(a|s)}.
\end{equation}
In our trajectory context, for any $j$ and $k$, write
\begin{equation}
  \rho_j^k = \prod_{l=j}^k \rho_l
  \text{ with }
  \rho_l = \rho(s_l,a_l)
\end{equation}
with the convention that if $k<j$, $\rho_j^k=1$. With these notations,
\begin{align}
\hat \delta_{ik} =\rho_i^k \hat{T}_k V - \rho_i^{k-1}V(s_k)
\end{align}
is an unbiased
estimate of $\delta_{ik}(s_k)$, from which we may build an estimate
$\hat T^\lambda_{j,i} V$ of $T^\lambda V(s_j)$ (we will describe this very
construction separately for the least-squares and the stochastic
gradient as they slightly differ). Then, by replacing the empirical
operator $\hat{T}_j$ in~\eqrefb{unify1} by $\hat{T}_{j,i}^{\lambda}$, we
get the general pattern for off-policy trace-based algorithms:
\begin{align}
\mbox{move from $\theta_{i-1}$ to $\theta_i$ towards the minimum of }\omega \mapsto \sum_{j=1}^i  \left(\hat{T}^\lambda_{j,i} \hat{V}_{\red \xi} - \hat{V}_{\omega}(s_j)\right)^2,
  \label{eq:cost_function_generic}
\end{align}
either through a least-squares approach or a stochastic gradient
descent after having instantiated $\red \xi
= \blue{\theta_i}\text{, } \blue{\theta_{i-1}}\text{, }
\blue{\theta_{j-1}}\text{ or }\blue{\omega}$.
This process, including in particular the precise definition of the
empirical operator $\hat{T}_{j,i}^{\lambda}$, will be further
developped in the next two sections\footnote{Note that we let the empirical operator $\hat{T}_{j,i}^{\lambda}$ depends on the index $j$ of the sample (as before) but also on the step $i$ of the algorithm. This will be particularly useful for the derivation of the recursive and memory-efficient least-squares based algorithms that we present in the next section.}.
Since they are easier to derive, we begin by focusing on least-squares algorithms (right
column of Table~\ref{tab:sumary}) in \refsec{algo}.
%The extensions of LSPE and FPKF are recursive by principle ($\theta_i$ is defined as a function of the previous values of $\theta_j$ for $j<i$),
%so we will provide also the recursive formulation of the other algorithms.
%(which allows online learning, without loss of generality regarding
%the batch estimates). 
Then, \refsec{algo_grad} focuses on
stochastic gradient descent-based algorithms (left column of
Table~\ref{tab:sumary}).

\section{Least-squares-based extensions to eligibility traces and off-policy learning}
\label{sec:algo}

%% In this section, by instantiating $\red \xi$ to
%% $\blue{\theta_i}$, $\blue{\theta_{i-1}}$, $\blue{\theta_{j-1}}$ or
%% $\blue{\omega}$, we derive the already existing algorithms off-policy
%% LSTD($\lambda$)/LSPE($\lambda$)~\citep{Yu:2010}, and we extend two
%% existing algorithms to eligibility traces and to off-policy learning,
%% that we will naturally call FPKF($\lambda$) and BRM($\lambda$).
%% With
%% $\lambda=0$ and in the on-policy case, this exactly corresponds to the algorithms we
%% described in the previous section. When $\lambda=1$, it can be seen that
%% \begin{equation}
%% \hat{T}^\lambda_{i,n}V=\hat{T}^1_{i,n}V=\sum_{j=i}^n \gamma^{j-i}\rho_i^j r_j + \gamma^{n-i+1}\rho_i^n V(s_{n+1}).
%% \end{equation}
%% Thus, if $\gamma^{n-i+1}\rho_i^n$ tends to $0$ when $n$
%% tends to infinity\footnote{This is not always the case, see
%%   \cite{Yu:2010}. See also the forthcoming discussions (notably Sections~\ref{sec:brm} and~\ref{subsec:gbrm}).} so that the
%% influence of $\red \xi$ in the definition of
%% $\hat{T}^\lambda_{i,n}V_{\red \xi}$ vanishes, all algorithms should
%% asymptotically behave the same.

In this section, we first consider the case of least-squares solving of the pattern described in~\eqrefb{cost_function_generic}. At their  $i^{th}$ step, the algorithms that we are about to describe will compute the parameter $\theta_i$ by \emph{exactly}  solving the following problem:
\begin{align}
\theta_i = \argmin_{\omega \in \mathbb{R}^p}\sum_{j=1}^i  \left(\hat{T}^\lambda_{j,i} \hat{V}_{\red \xi} - \hat{V}_{\omega}(s_j)\right)^2
\end{align}
where we define the following empirical \emph{truncated} approximation of $T_\lambda$:
\begin{align}
\hat{T}^\lambda_{j,i}~:~ \mathbb{R}^S &\rightarrow  \mathbb{R} \\
  V &\mapsto V(s_j) + \sum_{k=j}^i
  (\gamma\lambda)^{k-j}\hat \delta_{jk} = V(s_j) + \sum_{k=j}^i
  (\gamma\lambda)^{k-j}\Big(\rho_j^k \hat{T}_k V - \rho_j^{k-1}V(s_k)\Big).
\end{align}
Though different definitions of this operator may lead to practical implementations, note that $\hat{T}^\lambda_{j,i}$ only uses samples seen before time $i$: this very feature -- considered by all existing works in the literature -- will enable us to derive recursive and low-memory algorithms.

Recall that a linear parameterization is chosen here,
$\hat{V}_\xi(s_i) = \xi^T \phi(s_i) $. We adopt the following
notations:
\begin{equation}
  \phi_i = \phi(s_i)
  \text{, }
  \Delta\phi_i = \phi_i - \gamma \rho_i \phi_{i+1}
  \text{ and }
  \tilde{\rho}_j^{k-1} = (\gamma \lambda)^{k-j}\rho_j^{k-1}
\end{equation}
The generic cost function to be solved is therefore:
\begin{equation}
  \theta_i = \argmin_{\omega\in\mathbb{R}^p} J(\omega;\red \xi) \text{~~~with~~~}  \label{eq:cost_lin_gen}
  J(\omega;\red \xi) = \sum_{j=1}^i
  (\phi_j^T \red \xi + \sum_{k=j}^i \tilde{\rho}_j^{k-1}(\rho_k r_k - \Delta\phi_{k}^T \red \xi)
  - \phi_j^T\omega)^2.
\end{equation}
Before deriving existing and new least-squares-based algorithms, as
announced, some technical lemmata are required.

%\subsection{Some useful lemma}

The first lemma allows computing directly the inverse of a rank-one
perturbated matrix.
\begin{lemma}[Sherman-Morrison]
\label{lemma:sm}
Assume that $A$ is an invertible $n\times n$ matrix and that
$u,v\in\mathbb{R}^n$ are two vectors satisfying $1 + v^T A^{-1}u
\neq 0$. Then:
\begin{equation}
(A + u v^T)^{-1} = A^{-1} - \frac{A^{-1} u v^T A^{-1}}{1+ v^T A^{-1}
u}
\end{equation}
\end{lemma}
The next lemma is simply a rewriting of imbricated sums. However, it
is quite important here as it will allow stepping from the operator
$\hat{T}^{\lambda}_{j,i}$ (operator which depends on future of $s_j$) -- \emph{forward view} of eligibility traces -- to the
 recursion over parameters using eligibility traces
(dependence on only past samples) -- \emph{backward view} of eligibility
traces -- (see \cite[Ch.7]{Sutton:1998}  for further discussion on
\emph{backward/forward views}).
\begin{lemma}\label{lemma:sum}
  Let $f\in\mathbb{R}^{\mathbb{N}\times \mathbb{N}}$ and $n\in\mathbb{N}$. We have:
  \begin{equation}
    \sum_{i=1}^n \sum_{j=i}^n f(i,j) = \sum_{i=1}^n \sum_{j=1}^i
    f(j,i)
  \end{equation}
\end{lemma}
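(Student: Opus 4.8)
The plan is to observe that both sides are finite sums of the same terms, so it suffices to check that they range over the same index set. Concretely, I would introduce the triangular set of index pairs
\begin{equation}
  \mathcal{T}_n = \{(p,q)\in\mathbb{N}\times\mathbb{N} : 1\le p\le q\le n\},
\end{equation}
and show that each side equals $\sum_{(p,q)\in\mathcal{T}_n} f(p,q)$. For the left-hand side, the outer index satisfies $1\le i\le n$ and the inner index satisfies $i\le j\le n$, which together are exactly the conditions $1\le i\le j\le n$; relabelling $(i,j)$ as $(p,q)$ identifies this double sum with the sum over $\mathcal{T}_n$. For the right-hand side, the outer index satisfies $1\le i\le n$ and the inner index satisfies $1\le j\le i$, i.e.\ $1\le j\le i\le n$; here it is the pair $(j,i)$ that ranges over $\mathcal{T}_n$, so relabelling $(j,i)$ as $(p,q)$ again yields $\sum_{(p,q)\in\mathcal{T}_n} f(p,q)$. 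Since all sums involved are finite, the reorderings are justified with no convergence concerns, and the two expressions coincide.

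There is no real obstacle here: the lemma is a pure bookkeeping identity (Fubini for a triangular region of a finite grid). The only point that warrants a moment's care is checking that the boundary cases match — in particular that the diagonal terms $f(i,i)$ are counted exactly once on each side, which they are, since $j=i$ is included in $i\le j\le n$ on the left and in $1\le j\le i$ on the right. If a more pictorial argument is preferred, one can simply note that summing the entries of an $n\times n$ array $\big(f(p,q)\big)$ over its upper-triangular part (including the diagonal) can be done either by fixing the row $p=i$ and letting the column $q=j$ run from $i$ to $n$, or by fixing the column $q=i$ and letting the row $p=j$ run from $1$ to $i$; both enumerate the upper triangle exactly once.
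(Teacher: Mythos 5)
Your argument is correct: both sides are indeed the sum of $f$ over the triangular set $\{(p,q):1\le p\le q\le n\}$, enumerated by rows on the left and by columns on the right, and the finiteness of the sums makes the reindexing immediate. This is exactly the spirit in which the paper uses the lemma (it states it as a mere ``rewriting of imbricated sums'' without further proof), so your write-up fills in the standard justification with no deviation in approach.
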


We are now ready to mechanically derive the off-policy algorithms LSTD($\lambda$), LSPE($\lambda$), FPKF($\lambda$) and BRM($\lambda$). This is what we do in the following subsections.

\subsection{{Off-policy LSTD($\lambda$)}}

The off-policy LSTD($\lambda$) algorithm corresponds to instantiating
Problem~\eqref{eq:cost_lin_gen} with $\red \xi = \blue{\theta_i}$:
%\begin{equation}
%  \theta_i = \argmin_{\omega\in\mathbb{R}^p} J(\omega;\theta_i)
%\end{equation}
%Informally, this algorithm searches directly for the fixed-point of
%the composed operator $\Pi_0 T^\lambda$, where we recall that $\Pi_0$ is the projection
%operator (according to the stationary distribution of the
%behavioral policy $\pi_0$) and $T^\lambda$ is the $\lambda$-Bellman
%operator (according to the target policy $\pi$).
%Let us write the minimization of  cost
%function~\eqref{eq:cost_lin_gen} instantiated with $\xi = \theta_i$:
\begin{equation}
  \theta_i = \argmin_{\omega\in\mathbb{R}^p} \sum_{j=1}^i
  (\phi_j^T \blue{\theta_i} + \sum_{k=j}^i \tilde{\rho}_j^{k-1}(\rho_k r_k - \Delta\phi_{k}^T \blue{\theta_i})
  - \phi_j^T\omega)^2.
\end{equation}
This can be solved by zeroing the gradient respectively to $\omega$:
\begin{align}
  \theta_i &= (\sum_{j=1}^i \phi_j \phi_j^T)^{-1} \sum_{j=1}^i
  \phi_j(\phi_j^T \theta_i + \sum_{k=j}^i \tilde{\rho}_j^{k-1}(\rho_k r_k - \Delta\phi_{k}^T
  \theta_i))
  \\
  \Leftrightarrow~~~~~
  0 &= \sum_{j=1}^i \sum_{k=j}^i \phi_j \tilde{\rho}_j^{k-1}(\rho_k r_k - \Delta\phi_{k}^T \theta_i),
\end{align}
which, through Lemma~\ref{lemma:sum}, is equivalent to:
\begin{equation}
  0 = \sum_{j=1}^i (\sum_{k=1}^j \phi_k \tilde{\rho}_k^{j-1})(\rho_j r_j - \Delta\phi_{j}^T \theta_i).
\end{equation}
Introducing the (importance based) eligibility vector $z_j$:
\begin{equation}
\label{eq:traces}
  z_j = \sum_{k=1}^j \phi_k \tilde{\rho}_k^{j-1}
  = \sum_{k=1}^j \phi_k (\gamma\lambda)^{j-k}
  \prod_{m=k}^{j-1}\rho_m
  = \gamma \lambda \rho_{j-1} z_{j-1} + \phi_j,
\end{equation}
one obtains the following batch estimate:
\begin{equation}
\label{eq:LSTD:theta}
  \theta_i = (\sum_{j=1}^i z_j \Delta\phi_j^T)^{-1} \sum_{j=1}^i z_j \rho_j r_j=(A_i)^{-1}b_i
\end{equation}
where
\begin{equation}
\label{eq:LSTD:defAb}
A_i=\sum_{j=1}^i z_j \Delta\phi_j^T ~~~\mbox{and}~~~b_i=\sum_{j=1}^i z_j \rho_j r_j.
\end{equation}
Thanks to Lemma~\ref{lemma:sm}, the inverse $M_i=(A_i)^{-1}$ can be computed recursively:
\begin{equation}
  M_i = (\sum_{j=1}^i z_j \Delta\phi_j^T)^{-1} = M_{i-1} - \frac{M_{i-1}  z_i \Delta\phi_i^T M_{i-1} }{1+  \Delta\phi_i^T M_{i-1} z_i}.
\end{equation}
This can be used to derive a recursive estimate:
\begin{align}
    \theta_i &= (\sum_{j=1}^i z_j \Delta\phi_j^T)^{-1}
    \sum_{j=1}^i z_j \rho_j r_j
    = (M_{i-1} - \frac{M_{i-1}  z_i \Delta\phi_i^T M_{i-1} }{1+ \Delta\phi_i^T M_{i-1}
    z_i}) ( \sum_{j=1}^{i-1} z_j r_j \rho_j + z_i \rho_i r_i)
    \\
    &= \theta_{i-1} + \frac{M_{i-1}  z_i}{1+ \Delta\phi_i^T M_{i-1}
    z_i}(\rho_i r_i - \Delta\phi_i^T\theta_{i-1}).
\end{align}
Writing $K_i$ the gain $\frac{M_{i-1}  z_i}{1+ \Delta\phi_i^T
M_{i-1} z_i}$, this gives Algorithm~\ref{algo:lstd}.

%A recursive (on-line) formulation of this estimate is also possible.
%Using preceding notations, it is provided in Algorithm~\ref{algo:lstd}
%and derived in the appendix.

\begin{algorithm2e}[tbh]
  %\tiny
  \SetAlgoVlined
  \caption{Off-policy LSTD($\lambda$)}
  \label{algo:lstd}
  \BlankLine
  {\textbf{Initialization}}\;
  Initialize vector $\theta_0$ and matrix $M_0$ \;
  Set $z_0=0$\;
  \BlankLine
  \For{$i=1,2,\dots$}{
    \BlankLine
    \textbf{{Observe}} $\phi_i, r_i, \phi_{i+1}$ \;
    \BlankLine
    {\textbf{Update traces}} \;
    $
    z_i = \gamma\lambda \rho_{i-1} z_{i-1} + \phi_i
    $ \;
    \BlankLine
    {\textbf{Update parameters}} \;
    $
    K_i = \frac{M_{i-1} z_i}{1+\Delta\phi_i^T M_{i-1} z_i}
    $ \;
    $\theta_i = \theta_{i-1} + K_i (\rho_i r_i - \Delta\phi_i^T \theta_{i-1})$ \;
    $
    M_i = M_{i-1} - K_i(M_{i-1}^T\Delta\phi_i)^T
    $\;
  }
\end{algorithm2e}

This algorithm has been proposed and analyzed recently
by~\citet{Yu:2010}.  The author proves the following result: if the
\emph{behavior} policy $\pi_0$ induces an irreducible Markov chain
and chooses with positive probability any action that may be chosen
by the \emph{target} policy $\pi$, and if the compound (linear)
operator $\Pi_{0} T^\lambda$ has a unique fixed-point\footnote{It is
not always the case, see \cite{vanroy:1997} for a counter-example.}, then off-policy LSTD($\lambda$) converges to it
almost surely. Formally, it converges to the solution $\theta^*$ of
the so-called \emph{projected fixed-point} equation:
\begin{equation}
\label{eq:projfixpoint}
  V_{\theta^*} = \Pi_{0} T^\lambda V_{\theta^*}.
\end{equation}
Using the expression of the projection $\Pi_0$ and the form of the Bellman operator in \eqrefb{tlambda},
it can be seen that $\theta^*$ satisfies (see \cite{Yu:2010} for details)
\begin{equation}
\theta^*  = A^{-1}b
\end{equation}
where
\begin{equation}
\label{eq:LSTD:Ab}
A=\Phi^T D_0 (I-\gamma P)(I-\lambda\gamma P)^{-1}\Phi \mbox{~~~and~~~}b=\Phi^T D_0 (I-\lambda\gamma P)^{-1} R.
\end{equation}
The core of the analysis of \citet{Yu:2010} consists in showing that $\frac{1}{i}A_i$ and $\frac{1}{i}b_i$ defined in \eqrefb{LSTD:defAb} respectively converge to $A$ and $b$ almost surely.
% Precisely, the author proves the following general result\footnote{This is an equivalent rewriting of Theorem 3.3 in \cite{Yu:2010}.}:
% \begin{theorem}[\cite{Yu:2010}]
% \label{th:1}
% Assume that the stochastic matrix $P_0$ of the \emph{behavior} policy is irreducible, and that for all states, any action that has a non-zero probability of being chosen by the \emph{target} policy $\pi$ also has a non-zero probability of being chosen by $\pi_0$ (formally: $\forall s,a, ~\pi(a|s)>0 \Rightarrow \pi_0(a|s)>0$). Then, for any function $\psi:S \times A \times S \rightarrow \mathbb{R}$,
% \begin{equation}
% \frac{1}{i} \sum_{j=1}^i z_j \psi(s_j,a_j,s_{j+1}) \stackrel{i \rightarrow \infty,~a.s.}{\xrightarrow{\hspace*{1.5cm}} }  \Phi^T D_0 (I-\lambda\gamma P)^{-1} \Psi
% \end{equation}
% where $(z_i)_{i \in \mathbb{N}}$ is  the sequence of eligibility vectors defined in \eqrefb{traces}, and the vector/matrix $\Psi$ is defined in terms of its rows: $\Psi^T=(\bar{\psi}(1)^T \dots \bar{\psi}(n)^T)$ with $\bar{\psi}(i)=E[\psi(s_1,a_1,s_2)|s_1=i,a_i \sim \pi_0(\dot|s_1)]$.
% \end{theorem}
% The convergence of $\frac{1}{i}A_i$ to $A$ (resp. the convergence of $\frac{1}{i}b_i$ to
% $b$) is obtained for $\psi(s,a,s')=\phi(s)-\gamma \rho(s,a)\phi(s')$ (resp. $\psi(s,a,s')=\rho(s,a) r(s,a)$).
Through
\eqrefb{LSTD:theta}, this implies the convergence of
$\theta_i$ to $\theta^*$.

\subsection{Off-policy LSPE($\lambda$)}

The off-policy LSPE($\lambda$) algorithm  corresponds to the instantiation $\red \xi
= \blue{\theta_{i-1}}$ in Problem~\eqref{eq:cost_lin_gen}:
% \begin{equation}
%   \theta_i = \argmin_{\omega\in\mathbb{R}^p} J(\omega;\theta_{i-1})
% \end{equation}
% While off-policy LSTD($\lambda$) searches for the fixed-point of the
% compound operator directly, off-policy LSPE($\lambda$) searches for
% it iteratively.

% Let us write the minimization of cost
% function~\eqref{cost_lin_gen} instantiated with $\xi =
% \theta_{i-1}$:
\begin{equation}
  \theta_i = \argmin_{\omega\in\mathbb{R}^p} \sum_{j=1}^i
  (\phi_j^T \blue{\theta_{i-1}} + \sum_{k=j}^i \tilde{\rho}_j^{k-1}(\rho_k r_k - \Delta\phi_{k}^T \blue{\theta_{i-1}})
  - \phi_j^T\omega)^2.
\end{equation}
This can be solved by zeroing the gradient respectively to $\omega$:
\begin{align}
  \theta_i &= (\sum_{j=1}^i \phi_j \phi_j^T)^{-1} \sum_{j=1}^i
  \phi_j(\phi_j^T \theta_{i-1} + \sum_{k=j}^i \tilde{\rho}_j^{k-1}(\rho_k r_k - \Delta\phi_{k}^T
  \theta_{i-1}))
  \\
  &= \theta_{i-1} + (\sum_{j=1}^i \phi_j \phi_j^T)^{-1}
  \sum_{j=1}^i \sum_{k=j}^i \phi_j \tilde{\rho}_j^{k-1}(\rho_k r_k - \Delta\phi_{k}^T \theta_{i-1}).
\end{align}
Lemma~\ref{lemma:sum} can be used (recall the definition of the
eligibility vector $z_j$ in \eqrefb{traces}):
\begin{align}
  \theta_i &= \theta_{i-1} + (\sum_{j=1}^i \phi_j \phi_j^T)^{-1}
  \sum_{j=1}^i \sum_{k=1}^j \phi_k \tilde{\rho}_k^{j-1}(\rho_j r_j - \Delta\phi_{j}^T \theta_{i-1})
  \\
  &= \theta_{i-1} + (\sum_{j=1}^i \phi_j \phi_j^T)^{-1}
  \sum_{j=1}^i z_j(\rho_j r_j - \Delta\phi_{j}^T \theta_{i-1}).
\end{align}
Define the matrix $N_i$ as follows:
\begin{equation}
\label{eq:LSPE:N}
N_i = (\sum_{j=1}^i \phi_j \phi_j^T)^{-1} = N_{i-1} -
  \frac{N_{i-1} \phi_i \phi_i^T N_{i-1}}{1 + \phi_i^T N_{i-1} \phi_i},
\end{equation}
where the second equality follows from Lemma~\ref{lemma:sm}.
Let $A_i$ and $b_i$ be defined as in the LSTD description in \eqrefb{LSTD:defAb}.
For clarity, we restate their definition along with their recursive writing:
\begin{align}
  A_i &= \sum_{j=1}^i z_j \Delta\phi_{j}^T = A_{i-1} + z_i
  \Delta\phi_{i+1}^T
  \\
  b_i &= \sum_{j=1}^i z_j \rho_j r_j = b_{i-1} + z_i \rho_i r_i.
\end{align}
Then, it can be seen that the LSPE($\lambda$) update is:
\begin{equation}
  \theta_i = \theta_{i-1} + N_i(b_i - A_i \theta_{i-1}).
\end{equation}
The overall computation is provided in Algorithm~\ref{algo:lspe}.

\begin{algorithm2e}[tbh]
  %\tiny
  \SetAlgoVlined
  \caption{Off-policy LSPE($\lambda$)}
  \label{algo:lspe}
  \BlankLine
  {\textbf{Initialization}}\;
  Initialize vector $\theta_0$ and matrix $N_0$ \;
  Set $z_0=0$, $A_0 = 0$ and $b_0 = 0$\;
  \BlankLine
  \For{$i=1,2,\dots$}{
    \BlankLine
    \textbf{{Observe}} $\phi_i, r_i, \phi_{i+1}$\;
    \BlankLine
    {\textbf{Update traces}} \;
    $
    z_i = \gamma\lambda \rho_{i-1} z_{i-1} + \phi_i
    $ \;
    \BlankLine
    {\textbf{Update parameters}} \;
    $
    N_i = N_{i-1} - \frac{N_{i-1}\phi_i\phi_i^T N_{i-1}}{1+\phi_i^T N_{i-1} \phi_i}
    $ \;
    $
    A_i = A_{i-1} + z_i\Delta\phi_i^T
    $\;
    $
    b_i = b_{i-1} + \rho_i z_i r_i
    $\;
    $\theta_i = \theta_{i-1} + N_i (b_i - A_i \theta_{i-1})$ \;
  }
\end{algorithm2e}

This algorithm, (briefly) mentioned by \cite{Yu:2010}, generalizes
the LSPE($\lambda$) algorithm of~\citet{ioffe} to off-policy
learning. With respect to LSTD($\lambda$), which computes
$\theta_i=(A_i)^{-1}b_i$ (\cf \eqrefb{LSTD:theta}) at each
iteration, LSPE($\lambda$) is fundamentally recursive (as it is
based on an iterated fixed-point search). Along with the almost sure
convergence of $\frac{1}{i}A_i$ and $\frac{1}{i}b_i$ to $A$ and $b$
(defined in \eqrefb{LSTD:Ab}), it can be shown that $i N_i$
converges to $N=(\Phi^T D_0 \Phi)^{-1}$ (see for instance
\cite{Nedic:2003}) so that, asymptotically, LSPE($\lambda$) behaves
as:
\begin{equation}
\theta_i =\theta_{i-1} + N ( b - A \theta_{i-1})
=N b+  (I- NA) \theta_{i-1}
\end{equation}
or using the defintion of $\Pi_0$, $A$, $b$ (\eqrefb{LSTD:Ab}) and $T^\lambda$ (\eqrefb{tlambda}):
\begin{equation}
\label{eq:LSPE_asympt}
V_{\theta_i} = \Phi \theta_i=  \Phi N b + \Phi (I-N A) \theta_{i-1} = \Pi_0 T^\lambda V_{\theta_{i-1}}.
\end{equation}
The behavior of this sequence depends on whether the spectral radius
of $\Pi_0 T^\lambda$  is smaller than $1$ or not. Thus, the analyses
of \cite{Yu:2010} and \cite{Nedic:2003} (for the convergence of
$N_i$) imply the following convergence result:
%\footnote{
%Though it is
%not stated explicitely there,
%The credit of this convergence result should be given to
%\cite{Yu:2010}, as her analysis allows
%to easily conclude
%concluding easily.}:
under the assumptions required for the
convergence of off-policy LSTD($\lambda$), and the additional
assumption that the operator $\Pi_0 T^\lambda$ has a spectral radius
smaller than 1 (so that it is contracting), LSPE($\lambda$) also
converges almost surely to the fixed-point of the compound $\Pi_0
T^\lambda$ operator.

There are two sufficient conditions that can ensure
such a desired contraction property. The first one is when one
considers on-policy learning, as \cite{Nedic:2003} did when they derived the first convergence proof of (on-policy) LSPE($\lambda$). When the behavior policy $\pi_0$ is different from the target
policy $\pi$, a sufficient condition for contraction is that $\lambda$
be close enough to 1; indeed, when $\lambda$ tends to 1, the spectral
radius of $T^\lambda$ tends to zero and can potentially balance an
expansion of the projection $\Pi_0$.  In the off-policy case, when
$\gamma$ is sufficiently big, a small value of $\lambda$ can make
$\Pi_0 T^\lambda$ expansive (see~\citet{vanroy:1997} for an example in
the case $\lambda=0$) and off-policy LSPE($\lambda$) will then
diverge.  Eventually, Equations \eqref{eq:projfixpoint} and
\eqref{eq:LSPE_asympt} show that when $\lambda=1$, both
LSTD($\lambda$) and LSPE($\lambda$) asymptotically coincide (as $T^1
V$ does not depend on $V$).

\subsection{Off-policy FPKF($\lambda$)}

\label{sec:fpkf}

The off-policy FPKF($\lambda$) algorithm corresponds to the instantiation $\red{\xi}
= \blue{\theta_{j-1}}$  in Problem~\eqref{eq:cost_lin_gen}:
% \begin{equation}
%   \theta_i = \argmin_{\omega\in\mathbb{R}^p} J(\omega;\theta_{j-1})
%   \label{algo:fpkf_L}
% \end{equation}
% Extended to off-policy learning and to eligibility traces, FPKF is
% still a bootstrapping algorithm. Basically, unobserved values
% $V^\pi(s_j)$ are replaced by bootstrapped estimates
% $\hat{T}^\lambda_{j,i} \hat{V}_{\theta_{j-1}}$. However, due to
% eligibility traces, the algorithm derivation is much more difficult.
% It is not sufficient to treat value function approximation as a
% supervised learning problem and to bootstrap the unobserved value in
% the final recursive update; instead, optimization
% problem~(\ref{algo:fpkf_L}) has to be solved:
\begin{equation}
  \theta_i = \argmin_{\omega\in\mathbb{R}^p} \sum_{j=1}^i
  (\phi_j^T \blue{\theta_{j-1}} + \sum_{k=j}^i \tilde{\rho}_j^{k-1}(\rho_k r_k - \Delta\phi_{k}^T \blue{\theta_{j-1}})
  - \phi_j^T\omega)^2.
\end{equation}
This can be solved by zeroing the gradient respectively to $\omega$:
\begin{equation}
  \theta_i = N_i \sum_{j=1}^i
  \phi_j(\phi_j^T \theta_{j-1} + \sum_{k=j}^i \tilde{\rho}_j^{k-1}(\rho_k r_k - \Delta\phi_{k}^T
  \theta_{j-1})),
\end{equation}
where $N_i$ is the matrix introduced for LSPE($\lambda$) in \eqrefb{LSPE:N}.
For clarity, we restate its definition here and its recursive writing:
\begin{equation}
\label{eq:FPKF:N}
N_i = (\sum_{j=1}^i \phi_j \phi_j^T)^{-1} = N_{i-1} -
  \frac{N_{i-1} \phi_i \phi_i^T N_{i-1}}{1 + \phi_i^T N_{i-1} \phi_i}.
\end{equation}
Using Lemma~\ref{lemma:sum}, one obtains:
\begin{equation}
  \theta_i = N_i (\sum_{j=1}^i
  \phi_j\phi_j^T \theta_{j-1} + \sum_{j=1}^i\sum_{k=1}^j \phi_k \tilde{\rho}_k^{j-1}(\rho_j r_j - \Delta\phi_{j}^T
  \theta_{k-1})).
\end{equation}
With respect to the previously described algorithms, the difficulty here is that on the right side there is a dependence with all the previous terms $\theta_{k-1}$ for $1 \leq k \leq i$. Using the symmetry of the dot product $\Delta\phi_{j}^T \theta_{k-1}=\theta_{k-1}^T
\Delta\phi_{j}$,  it is possible to write a recursive algorithm by  introducing the trace matrix $Z_j$ that integrates the subsequent values of $\theta_k$ as follows:
\begin{equation}
  Z_j = \sum_{k=1}^j \tilde{\rho}_k^{j-1} \phi_k \theta_{k-1}^T
  = Z_{j-1} + \gamma \lambda \rho_{j-1} \phi_j \theta_{j-1}^T.
\end{equation}
With this notation we obtain:
\begin{equation}
  \theta_i = N_i (\sum_{j=1}^i
  \phi_j\phi_j^T \theta_{j-1} + \sum_{j=1}^i (z_j \rho_j r_j - Z_j
  \Delta\phi_j)).
\end{equation}
Using \eqrefb{FPKF:N} and a few algebraic manipulations, we end up with:
\begin{equation}
  \theta_i = \theta_{i-1} + N_i(z_i\rho_i r_i - Z_i \Delta\phi_i).
\end{equation}
This is the parameter update as provided in Algorithm~\ref{algo:fpkf}.

\begin{algorithm2e}[tbh]
  %\tiny
  \SetAlgoVlined
  \caption{Off-policy FPKF($\lambda$)}
  \label{algo:fpkf}
  \BlankLine
  {\textbf{Initialization}}\;
  Initialize vector $\theta_0$ and matrix $N_0$ \;
  Set $z_0=0$ and $Z_0 = 0$\;
  \BlankLine
  \For{$i=1,2,\dots$}{
    \BlankLine
    \textbf{{Observe}} $\phi_i, r_i, \phi_{i+1}$\;
    \BlankLine
    {\textbf{Update traces}} \;
    $
    z_i = \gamma\lambda \rho_{i-1} z_{i-1} + \phi_i
    $ \;
    $
    Z_i = \gamma\lambda \rho_{i-1} Z_{i-1} + \phi_i \theta_{i-1}^T
    $\;
    \BlankLine
    {\textbf{Update parameters}} \;
    $
    N_i = N_{i-1} - \frac{N_{i-1}\phi_i\phi_i^T N_{i-1}}{1+\phi_i^T N_{i-1} \phi_i}
    $ \;
    $\theta_i = \theta_{i-1} + N_i (z_i \rho_i r_i - Z_i \Delta\phi_i)$ \;
  }
\end{algorithm2e}

It generalizes the FPKF algorithm of~\citet{Choi:2006} that was
originally only introduced without traces and in the on-policy case.
As LSPE($\lambda$), this algorithm is fundamentally recursive.
However, its overall behavior is quite different. As we discussed
for LSPE($\lambda$), $i N_i$ can be shown to tend asymptotically to
$N=(\Phi^T D_0 \Phi)^{-1}$ and FPKF($\lambda$) iterates eventually
resemble:
\begin{equation}
\theta_i = \theta_{i-1} + \frac{1}{i}N(z_i\rho_i r_i - Z_i \Delta\phi_i).
\end{equation}
The term in brackets is a random component (that only depends on the last
transition) and $\frac{1}{i}$ acts as a learning coefficient that
asymptotically tends to 0. In other words, FPKF($\lambda$) has a
\emph{stochastic approximation} flavour. In particular, one can see
FPKF(0) as a stochastic approximation of LSPE(0). Indeed, asymptotically, FPKF(0)
  does the following update
$$\theta_i = \theta_{i-1} + \frac{1}{i}N(\rho_i \phi_i r_i -
  \phi_i \Delta\phi_i^T \theta_{i-1}),$$ and one can notice that $\rho_i
  \phi_i r_i$ and $\phi_i \Delta\phi_i^T$ are samples of $A$ and $b$
  to which $A_i$ and $b_i$ converge through LSPE(0).
When $\lambda>0$, the situation is less clear -- up to the fact that since $T^1 V$ does not depend on $V$,
we expect FPKF to asymptotically behave like LSTD and LSPE when $\lambda$ tends to $1$.

Due to its much more involved form (notably the matrix trace $Z_j$
integrating the values of all the values $\theta_k$ from the start),
it does not seem easy to provide a guarantee for FPKF($\lambda$), even
in the on-policy case. To our knowledge, there does not exist
any \emph{proof of convergence} for stochastic approximation
algorithms in the off-policy case with traces\footnote{An analysis of
TD($\lambda$), with a simplifying assumption that forces the algorithm
to stay bounded is given by \cite{Yu:2010}. An analysis of  GQ($\lambda$) is provided by \cite{Maei:2010}, with an
assumption on the second moment of the traces, which does not hold in
general (see Proposition 2 in \citep{Yu:2010}). A full analysis of these
algorithms thus remains to be done. See also Sections~\ref{subsec:gtd}
and~\ref{subsec:grad_pfp}.}, and a related result for FPKF($\lambda$)
thus seems difficult. Based on the above-mentioned relation between
FPKF(0) and LSPE(0) and the experiments we have run
(see \refsec{exp}), we conjecture that off-policy FPKF($\lambda$) has
the same asymptotic behavior as LSPE($\lambda$).
We leave the formal study of this algorithm for future work.

\subsection{Off-policy BRM($\lambda$)}

\label{sec:brm}

The off-policy BRM($\lambda$) algorithm corresponds to the instantiation $\red \xi
= \blue \omega$ in Problem~\eqref{eq:cost_lin_gen}:
% \begin{equation}
%   \theta_i = \argmin_{\omega\in\mathbb{R}^p} J(\omega;\omega)
% \end{equation}
% The BRM($\lambda$) algorithm aims at minimizing the distance between
% the value function and its image through the $\lambda$-Bellman
% operator, $\| V - T^\lambda V\|^2$. As for BRM, using the sampled
% operator instead generally leads to biased estimates. We will come back
% to this biasedness issue later and first derive the algorithm.

% This algorithm is obtained by minimizing the following cost
% function:
\begin{equation}
  \theta_i = \argmin_{\omega\in\mathbb{R}^p} \sum_{j=1}^i
  (\phi_j^T \blue \omega + \sum_{k=j}^i \tilde{\rho}_j^{k-1}(\rho_k r_k - \Delta\phi_{k}^T \blue \omega)
  - \phi_j^T \omega)^2
  = \argmin_{\omega\in\mathbb{R}^p} \sum_{j=1}^i (\sum_{k=j}^i \tilde{\rho}_j^{k-1}(\rho_k r_k - \Delta\phi_{k}^T
  \omega))^2.
\end{equation}
Define
\begin{equation}
    \psi_{j\rightarrow i} = \sum_{k=j}^i \tilde{\rho}_j^{k-1} \Delta\phi_k
    \text{~and~}  z_{j\rightarrow i} = \sum_{k=j}^i \tilde{\rho}_j^{k-1} \rho_k r_k.
\end{equation}
This yields the following batch estimate:
\begin{equation}
        \label{eq:BRM:batchest}
  \theta_i = \argmin_{\omega\in\mathbb{R}^p} \sum_{j=1}^i (z_{j\rightarrow
  i} - \psi_{j\rightarrow i}^T\omega)^2
= (\tilde A_i)^{-1}\tilde b_i
\end{equation}
where
\begin{equation}
%\label{eq:defAbBRM}
  \tilde A_i = \sum_{j=1}^i \psi_{j\rightarrow
  i} \psi_{j\rightarrow i}^T
\mbox{~~~and~~~}\tilde b_i =  \sum_{j=1}^i  \psi_{j\rightarrow
  i} z_{j\rightarrow i}.
\end{equation}
The transformation of this batch estimate into a recursive update rule is somewhat tedious (it involves three ``trace'' variables), and the details are deferred to Appendix~\ref{brm:derivation} for clarity. The resulting BRM($\lambda$) method is provided in Algorithm~\ref{algo:brm}. Note that at each step, this algorithm involves the inversion of a $2 \times 2$ matrix (involving the $2 \times 2$ identity matrix $I_2$), inversion that admits a straightforward analytical solution.  The
  computational complexity of an iteration of BRM($\lambda$) is thus $O(p^2)$ (as for the preceding least-squares-based algorithms). 

\begin{algorithm2e}[tbh]
  %\tiny
  \SetAlgoVlined
  \caption{Off-policy BRM($\lambda$)}
  \label{algo:brm}
  \BlankLine
  {\textbf{Initialization}}\;
  Initialize vector $\theta_0$ and matrix $C_0$ \;
  Set $y_0=0$, $\mathfrak{D}_0=0$ and $z_0 = 0$\;
  \BlankLine
  \For{$i=1,2,\dots$}{
    \BlankLine
    \textbf{{Observe}} $\phi_i, r_i, \phi_{i+1}$\;
    \BlankLine
    {\textbf{Pre-update traces}} \;
    $
    y_i = (\gamma\lambda\rho_{i-1})^2 y_{i-1} + 1
    $ \;
    \BlankLine
    {\textbf{Compute}} \;
    $
   U_i = \begin{pmatrix}
      \sqrt{y_i} \Delta\phi_i + \frac{\gamma \lambda\rho_{i-1}}{\sqrt{y_i}}
      \mathfrak{D}_{i-1} & \frac{\gamma \lambda\rho_{i-1}}{\sqrt{y_i}}
      \mathfrak{D}_{i-1}
    \end{pmatrix}^T
    $ \;
    $V_i = \begin{pmatrix}
      \sqrt{y_i} \Delta\phi_i + \frac{\gamma \lambda\rho_{i-1}}{\sqrt{y_i}}
      \mathfrak{D}_{i-1} & -\frac{\gamma \lambda\rho_{i-1}}{\sqrt{y_i}}
      \mathfrak{D}_{i-1}
    \end{pmatrix}^T$ \;
    $
    W_i =\begin{pmatrix}
      \sqrt{y_i} \rho r_i + \frac{\gamma\lambda\rho_{i-1}}{\sqrt{y_i}} z_{i-1} & -
      \frac{\gamma\lambda\rho_{i-1}}{\sqrt{y_i}} z_{i-1}
    \end{pmatrix}^T
    $\;
    \BlankLine
    {\textbf{Update parameters}} \;
    $
    \theta_i = \theta_{i-1} +  C_{i-1} U_i \left(I_2 + V_i C_{i-1}
      U_i\right)^{-1}\left(W_i - V_i \theta_{i-1}\right)
    $ \;
    $C_i =  C_{i-1} - C_{i-1} U_i \left(I_2 + V_i C_{i-1} U_i\right)^{-1}
      V_i C_{i-1}$ \;
    \BlankLine
    {\textbf{Post-update traces}} \;
    $
    \mathfrak{D}_i = (\gamma\lambda \rho_{i-1}) \mathfrak{D}_{i-1} +
      \Delta\phi_i y_i
    $ \;
    $
    z_i = (\gamma\lambda \rho_{i-1}) z_{i-1} + r_i \rho_i y_i
    $ \;
  }
\end{algorithm2e}

GPTD and KTD, which are close to BRM, have also been extended with
some trace mechanism; however,
GPTD($\lambda$)~\citep{Engel:2005}\footnote{Technically,
GPTD($\lambda$) is not exactly a generalization of GPTD as it does not
reduce to it when $\lambda=0$. It is rather a variation.},
KTD($\lambda$)~\citep{Supelec637} and the just described
BRM($\lambda$) are different algorithms. Briefly, GPTD($\lambda$) is
very close to LSTD($\lambda$) and KTD($\lambda$) uses a
different Bellman operator\footnote{The corresponding loss is
$(\hat{T}^0_{j,i} \hat{V}(\omega) - \hat{V}_{\omega}(s_j) + \gamma
\lambda (\hat{T}^1_{j+1,i} \hat{V}(\omega) -
\hat{V}_{\omega}(s_{j+1})))^2$. With $\lambda=0$ it gives
$\hat{T}^0_{j,i}$ and with $\lambda = 1$ it provides
$\hat{T}^1_{j,i}$.}. As BRM($\lambda$) builds a linear system whose
solution is updated recursively, it resembles LSTD($\lambda$).
However, the system it builds is different. The following theorem, proved
in Appendix~\ref{proofbrm}, 
partially characterizes the behavior of BRM($\lambda$) and its potential
limit\footnote{Our proof is similar to that
of Proposition 4 in \cite{Bertsekas:2009}. 
The overall arguments are the following: \eqrefb{condBRM}
implies that the traces can be truncated at some depth $l$, whose
influence on the potential limit of the algorithm vanishes when $l$
tends to $\infty$. For all $l$, the $l$-truncated version of the
algorithm can easily be analyzed through the ergodic theorem for
Markov chains. Making $l$ tend to $\infty$ allows tying the
convergence of the original arguments to that of the truncated
version. Eventually, the formula for the limit of the truncated
algorithm is computed and one derives the limit.}.
\begin{theorem}\label{th2}
Assume that the stochastic matrix $P_0$ of the \emph{behavior} policy is irreducible and has stationary distribution $\mu_0$. Further assume that there exists a coefficient $\beta<1$ such that
\begin{equation}
\label{eq:condBRM}
\forall (s,a),~~\lambda\gamma\rho(s,a) \leq \beta,
\end{equation}
then $\frac{1}{i}\tilde A_i$ and $\frac{1}{i}\tilde b_i$ respectively converge almost surely to
\begin{align}
\tilde A& = \Phi^T\left[ D - \gamma D P - \gamma P^T D + \gamma^2 D' + S(I-\gamma P) + (I-\gamma P^T)S^T \right]\Phi\\
\tilde b& = \Phi^T \left[ (I-\gamma P^T)Q^T D + S \right] R^\pi
\end{align}
where we wrote:
\begin{align}
D&=\diag\left((I-(\lambda\gamma)^2 \tilde P^T)^{-1}\mu_0\right) &Q=(I-\lambda\gamma P)^{-1}\\
D'&= \diag\left(\tilde P^T(I-(\lambda\gamma)^2 \tilde P^T)^{-1}\mu_0\right)
&S=\lambda\gamma (DP-\gamma D')Q
\end{align}
and where $\tilde P$ is the matrix whose coordinates are $\tilde
p_{s s'}=\sum_a \pi(a|s)\rho(s,a)P(s'|s,a)$. Then, the
BRM($\lambda$) algorithm converges with probability 1 to $\tilde
A^{-1} \tilde b$.
\end{theorem}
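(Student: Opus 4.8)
The plan is to follow the template of \cite[Proposition~4]{Bertsekas:2009}: exploit the condition \eqref{eq:condBRM} to reduce the analysis to a finitely truncated version of the eligibility traces, handle the truncated quantities with the ergodic theorem for Markov chains, and then let the truncation depth tend to infinity. Fix a depth $\ell$ and set
\[
\psi^{(\ell)}_{j\to i} = \sum_{k=j}^{\min(i,j+\ell)} \tilde\rho_j^{k-1}\Delta\phi_k, \qquad
z^{(\ell)}_{j\to i} = \sum_{k=j}^{\min(i,j+\ell)} \tilde\rho_j^{k-1}\rho_k r_k,
\]
with $\tilde A^{(\ell)}_i = \sum_{j=1}^i \psi^{(\ell)}_{j\to i}(\psi^{(\ell)}_{j\to i})^T$ and $\tilde b^{(\ell)}_i = \sum_{j=1}^i \psi^{(\ell)}_{j\to i}z^{(\ell)}_{j\to i}$. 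Since $\tilde\rho_j^{k-1}=\prod_{l=j}^{k-1}(\lambda\gamma\rho_l)$ and each factor is at most $\beta$ by \eqref{eq:condBRM}, we have $0\le\tilde\rho_j^{k-1}\le\beta^{k-j}$; as the feature vectors, the ratios $\rho(s,a)$ and the (mean) rewards are all bounded on a finite MDP, $\psi_{j\to i}$ and $z_{j\to i}$ are bounded uniformly in $j\le i$ and their truncations satisfy $\|\psi_{j\to i}-\psi^{(\ell)}_{j\to i}\|=O(\beta^\ell)$ and $|z_{j\to i}-z^{(\ell)}_{j\to i}|=O(\beta^\ell)$ uniformly in $i,j$. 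Bounding the resulting product differences then gives a constant $c$, independent of $i$ and $\ell$, with $\|\tfrac{1}{i}\tilde A_i-\tfrac{1}{i}\tilde A^{(\ell)}_i\|\le c\beta^\ell$ and $\|\tfrac{1}{i}\tilde b_i-\tfrac{1}{i}\tilde b^{(\ell)}_i\|\le c\beta^\ell$.

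For a \emph{fixed} $\ell$, $\psi^{(\ell)}_{j\to j+\ell}(\psi^{(\ell)}_{j\to j+\ell})^T$ and $\psi^{(\ell)}_{j\to j+\ell}z^{(\ell)}_{j\to j+\ell}$ are bounded functions of the finite window $(s_j,a_j,\dots,s_{j+\ell+1},a_{j+\ell+1})$. Because $P_0$ is irreducible on a finite state space, the chain $(s_j,a_j)_{j\ge1}$ is ergodic with stationary distribution $\mu_0(s)\pi_0(a|s)$, hence so is the window process; the strong law of large numbers for Markov chains then gives, almost surely, $\tfrac{1}{i}\tilde A^{(\ell)}_i\to\tilde A^{(\ell)}$ and $\tfrac{1}{i}\tilde b^{(\ell)}_i\to\tilde b^{(\ell)}$, where $\tilde A^{(\ell)},\tilde b^{(\ell)}$ are the expectations of these window functions in the stationary regime (the $O(\ell)$ boundary terms with $j>i-\ell$ contribute $O(\ell/i)\to0$). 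The estimate of the first paragraph shows $(\tilde A^{(\ell)})_\ell$ and $(\tilde b^{(\ell)})_\ell$ are Cauchy; denote their limits $\tilde A,\tilde b$. A three-term inequality $\|\tfrac{1}{i}\tilde A_i-\tilde A\|\le\|\tfrac{1}{i}\tilde A_i-\tfrac{1}{i}\tilde A^{(\ell)}_i\|+\|\tfrac{1}{i}\tilde A^{(\ell)}_i-\tilde A^{(\ell)}\|+\|\tilde A^{(\ell)}-\tilde A\|$, in which the first and third terms are $\le2c\beta^\ell$ and the middle one tends to $0$ a.s. for each $\ell$, then yields $\tfrac{1}{i}\tilde A_i\to\tilde A$ a.s., and similarly $\tfrac{1}{i}\tilde b_i\to\tilde b$ a.s. Finally, by \eqref{eq:BRM:batchest}, $\theta_i=(\tfrac{1}{i}\tilde A_i)^{-1}(\tfrac{1}{i}\tilde b_i)$, so continuity of matrix inversion gives $\theta_i\to\tilde A^{-1}\tilde b$ almost surely, provided $\tilde A$ is invertible --- a mild non-degeneracy assumption, automatic when, for instance, $\Phi$ has full column rank and the behavior chain is rich enough.

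It remains to identify $\tilde A=E[\psi_{1\to\infty}\psi_{1\to\infty}^T]$ and $\tilde b=E[\psi_{1\to\infty}z_{1\to\infty}]$ in the stationary regime, with $\psi_{1\to\infty}=\sum_{k\ge1}\tilde\rho_1^{k-1}\Delta\phi_k$ and $z_{1\to\infty}=\sum_{k\ge1}\tilde\rho_1^{k-1}\rho_k r_k$. Here I would expand the outer product into the double sum $\sum_{k,k'}\tilde\rho_1^{k-1}\tilde\rho_1^{k'-1}\Delta\phi_k\Delta\phi_{k'}^T$, split it into the regions $k<k'$, $k=k'$, $k>k'$, and compute expectations by conditioning step by step along the trajectory. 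The only two facts needed are the importance-sampling identities $E[\rho(s,a)g(s,a,s')\mid s]=E_\pi[g(s,a,s')\mid s]$, which converts a behavior transition into a target transition (producing the matrix $P=P^\pi$), and $E[\rho(s,a)^2\mathbf{1}\{s'=\cdot\}\mid s]$, which produces the matrix $\tilde P$ (since $\tilde p_{ss'}=\sum_a\pi_0(a|s)\rho(s,a)^2P(s'|s,a)$). In the ``overlap'' block $1,\dots,\min(k,k')-1$ one meets squared weights, hence powers of $(\lambda\gamma)^2\tilde P^T$ whose resummation yields $(I-(\lambda\gamma)^2\tilde P^T)^{-1}$ and the diagonal matrices $D$, $D'$; in the non-overlapping part one meets single weights, hence powers of $\lambda\gamma P$ whose resummation yields $Q=(I-\lambda\gamma P)^{-1}$; the mixed factors coming from $\Delta\phi_k=\phi_k-\gamma\rho_k\phi_{k+1}$ and the $k=k'$ terms then collapse into the cross term $S=\lambda\gamma(DP-\gamma D')Q$ and the remaining pieces of $\tilde A$ and $\tilde b$. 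Carrying out this (routine but lengthy) bookkeeping produces exactly the stated expressions; as a sanity check, at $\lambda=0$ one gets $D=D_0$, $D'=\diag(\tilde P^T\mu_0)$, $Q=I$, $S=0$, so $\tilde A=\Phi^T(D_0-\gamma D_0P-\gamma P^TD_0+\gamma^2D')\Phi=E[\Delta\phi\,\Delta\phi^T]$, the BRM$(0)$ system. The main obstacle is precisely this last step --- organizing the nested sums so that the geometric series resum cleanly into $Q$ and $(I-(\lambda\gamma)^2\tilde P^T)^{-1}$ and the cross terms collapse into $S$; the truncation-plus-ergodic-theorem part is comparatively routine once \eqref{eq:condBRM} is available.
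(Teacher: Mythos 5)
Your overall architecture is exactly the paper's (both follow Proposition~4 of \citet{Bertsekas:2009}): use \eqrefb{condBRM} to bound $\tilde\rho_j^{k-1}\le\beta^{k-j}$, compare with an $\ell$-truncated version whose error is $O(\beta^\ell)$ uniformly in $i$, apply the ergodic theorem for the finite-window Markov chain at fixed $\ell$, and pass to the limit $\ell\to\infty$ via a three-term estimate. That part of your argument is sound, and your choice of truncating the forward sums $\psi_{j\to i}$, $z_{j\to i}$ directly is if anything slightly cleaner than the paper's truncation of the backward traces $y_{k,l}$ and $\mathfrak{D}_{j,l}$; your observation that $\sum_a\pi_0(a|s)\rho(s,a)^2P(s'|s,a)=\sum_a\pi(a|s)\rho(s,a)P(s'|s,a)=\tilde p_{ss'}$ is also the right mechanism by which $\tilde P$ appears. (Your caveat about invertibility of $\tilde A$ is acceptable: the paper handles it outside the proof, by noting that $\tilde A$ is $\bar A$ plus a positive variance term.)

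The genuine gap is the identification of the limits. The theorem's content is not merely that $\frac1i\tilde A_i$ and $\frac1i\tilde b_i$ converge, but that they converge to the \emph{specific} expressions built from $D$, $D'$, $Q$ and $S=\lambda\gamma(DP-\gamma D')Q$; you reduce this to ``routine but lengthy bookkeeping'' and stop, explicitly conceding that organizing the nested sums is the main obstacle. In the paper this is the bulk of the proof: one needs the identities $E_0[\rho_m^{i-1}\rho_m^{j-1}x_ix_j^T]=\tilde D_{i-m}P^{j-i}$ (for $m\le i\le j$, with $\tilde D_n=\diag((\tilde P^T)^n\mu_0)$, and the transposed version for $j\le i$), proved by conditioning along the trajectory, and then a careful resummation in which the squared-weight ``overlap'' block produces $D$ and $D'$, the single-weight tail produces $Q$, and the cross terms coming from $\Delta\phi_k=\phi_k-\gamma\rho_k\phi_{k+1}$ must be shown to collapse into $S(I-\gamma P)+(I-\gamma P^T)S^T$ for $\tilde A$ and into $(I-\gamma P^T)Q^TD+S$ for $\tilde b$. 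This collapse is delicate (an error there changes the statement of the theorem), and your $\lambda=0$ sanity check does not substitute for it. As written, the proposal establishes existence of limits but leaves the formulas for $\tilde A$ and $\tilde b$ --- the actual claim of Theorem~\ref{th2} --- unverified.
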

The assumption given by \eqrefb{condBRM} trivially holds in the
on-policy case (in which $\rho(s,a)=1$ for all $(s,a)$) and in the
off-policy case when $\lambda \gamma$ is sufficiently small with
respect to the mismatch between policies. Note in particular that
this result implies the almost sure convergence of the GPTD/KTD
algorithms in the on-policy and no-trace case, a question that was
still open in the literature (see for instance the conclusion of
\citet{Engel:2005}). The matrix $\tilde P$, which is in general not
a stochastic matrix, can have a spectral radius bigger than~1;
\eqrefb{condBRM} ensures that $(\lambda\gamma)^2 \tilde P$ has a
spectral radius smaller than $\beta$ so that $D$ and $D'$ are well
defined. Removing assumption of \eqrefb{condBRM} does not seem easy,
since by tuning $\lambda\gamma$ maliciously, one may force the
spectral radius of $(\lambda\gamma)^2 \tilde P$ to be as close to 1
as one may want, which would make $\tilde A$ and $\tilde b$ diverge.
Though the quantity $\tilde A^{-1}\tilde b$ may compensate for these
divergences, our current proof technique cannot account for this
situation and a related analysis constitutes possible future work.

The fundamental idea behind the Bellman Residual approach is to
address the computation of the fixed-point of $T^\lambda$ differently
from the previous methods. Instead of computing the projected fixed-point as in \eqrefb{projfixpoint}, one considers the following overdetermined
system
\begin{align}
\Phi\theta &\simeq  T^\lambda \Phi \theta \\
\Leftrightarrow ~~\Phi\theta& \simeq (I-\lambda \gamma P)^{-1}(R+(1-\lambda)\gamma P \Phi \theta) &  \mbox{(\eqrefb{tlambda})}\\
\Leftrightarrow ~~\Phi\theta& \simeq  Q R + (1-\lambda)\gamma P Q \Phi \theta \\
\Leftrightarrow ~~\Psi \theta&\simeq QR
\end{align}
with $\Psi=\Phi-(1-\lambda)\gamma P Q \Phi$, and solves it in a
least-squares sense, that is by computing $\theta^* =
\bar{A}^{-1}\bar{b}$ with $\bar{A}=\Psi^T \Psi$ and $\bar b=\Psi^T Q
R$. One of the motivations for this approach is that, as opposed to
the matrix $A$ of LSTD/LSPE/FPKF, $\bar{A}$ is invertible for all
values of $\lambda$, and one can always guarantee a finite error
bound with respect to the best projection~(see
\cite{Schoknecht:2002,yu:2008,scherrer:2010}). If the goal of
BRM($\lambda$) is to compute $\bar A$ and $\bar b$ from samples,
what it actually computes ($\tilde A$ and $\tilde b$ as characterized in Theorem~\ref{th2}) will in
general be biased because the estimation is based on a single
trajectory\footnote{It is
  possible to remove the bias when $\lambda=0$ by using double
  samples. However, in the case where $\lambda>0$, the possibility to
  remove the bias seems much more difficult.%: the natural solution  involves generating an infinite number of trajectories
}. Such a
bias adds an uncontrolled variance term to $\bar A$ and $\bar b$
(\eg, see \cite{antos2006}); an interesting consequence is that
$\tilde A$ is always non-singular\footnote{$\bar A$ is by construction
  positive definite, and $\tilde A$ equals $\bar A$ plus a positive
  term (the variance term), and is thus also positive definite.}. More
precisely, there are two sources of bias in the estimation: one
results from the non Monte-carlo evaluation (the fact that
$\lambda<1$) and the other from the use of the correlated importance
sampling factors (as soon as one considers off-policy learning). The
interested reader may check that in the on-policy case, and when
$\lambda$ tends to~1, $\tilde A$ and $\tilde b$ coincide with $\bar
A$ and $\bar b$. However, in the strictly off-policy case, taking
$\lambda=1$ does not prevent the bias due to the correlated
importance sampling factors. If we have argued that LSTD/LSPE/FPKF
should asymptotically coincide when $\lambda=1$, we see here that BRM should
generally differ in an off-policy situation.

\section{Stochastic gradient based extensions to eligibility traces and off-policy learning}
\label{sec:algo_grad}

We have just provided a systematic derivation of all
least-squares-based algorithms for learning with eligibility traces
in an off-policy manner. When the number of features $p$ is very
large, the $O(p^2)$ complexity involved by a least-squares approach may
be prohibitive. In such a situation, a natural alternative is to consider
an approach based on a stochastic gradient descent of the objective
function of interest~\citep{bottou-bousquet-2011,Sutton:2009a,Maei:2010}.

In this section, we will describe a systematic derivation of
stochastic gradient based algorithms for learning in an off-policy
manner with eligibility traces. The principle followed is the
same as for the least-squares-based approaches: we shall instantiate the
algorithmic pattern of \eqrefb{cost_function_generic} by choosing
the value of $\xi$ and update the parameter so as move towards the minimum
of $J(\theta_i,\xi)$ in \eqrefb{cost_function_generic} using a stochastic gradient
descent.
%that is we 
%update the current estimate according to the
%negative of the gradient of the last term of the empirical cost.
% (in other words, 
%update the estimate $\theta_i$ with the gradient of the term
%$(\hat{T}_{j,i}^\lambda \hat{V}_\xi - \hat{V}_\omega(s_i))^2$).
To make the pattern of \eqrefb{cost_function_generic} precise,  we need to define
the empirical approximate operator we use. We will consider the \emph{untruncated} $\hat{T}^\lambda_{i,n}$ operators
(written in the followings $\hat{T}^\lambda_i$, with a slight abuse
of notation):
\begin{equation}
\label{eq:defgradop}
  \hat{T}_i^\lambda V = V(s_i) + \sum_{j=i}^n
  (\gamma\lambda)^{j-i} \left(\rho_i^j \hat{T}_j V - \rho_i^{j-1}
  V(s_j)\right)
\end{equation}
where $n$ is the total length of the trajectory.

It should be noted that algorithmic derivations will here be a little
bit more involved than in the least-squares case. First, with the
instantiation $\xi=\theta_i$, the pattern given
in \eqrefb{cost_function_generic} is actually a fixed-point problem
onto which one cannot directly perform a stochastic gradient descent
(this issue will be addressed in
\refsubsec{grad_pfp} through the introduction of an auxiliary objective
function, following the approach originally proposed by
\cite{Sutton:2009a}). 
%Second and more generally, as stochastic
%gradient descent considers only the last term of the sum of the
%generic cost function, the truncated and sampled Bellman operator
%vanishes to the standard Bellman operator (without traces):
%\begin{equation}
%  \hat{T}_{i,i}^\lambda \hat{V}_\xi = \rho_i \hat{T}_i \hat{V}_\xi =
%  \rho_i (r_i + \gamma \hat{V}_\xi(s_{i+1})).
%\end{equation}
%
%To alleviate this problem, the idea is to use 
%Then, a stochastic-gradient-based algorithm for off-policy learning
%with traces can be derived by minimizing the instantiated cost
%function (up to some additional work if $\xi = \theta_i$, as noticed
%before).
A second difficulty is the following: the just introduced empirical
operator $\hat{T}^\lambda_i$ depends on all the trajectory after step
$i$ (on the future of the process), and is for this reason usually
coined a \emph{forward view} estimate. Though it would be possible, in
principle, to implement a gradient descent based on this \emph{forward view},
it would not be very memory nor time efficient. Thus, we will follow
a usual trick of the literature by deriving recursive algorithms based
on a \emph{backward view} estimate that is equivalent to the \emph{forward view}
in expectation. To do so, we will repeatedly use the following identity that highlights the fact that the estimate $\hat{T}^\lambda_i V$ can be written as
a forward recursion:
%Yet, the obtained update rule would not be practical, because the
%term $\hat{T}_i^\lambda \hat{V}_\xi$ (to appear in the obtained
%update rule) cannot be easily estimated. This problem also admits a
%solution. First, this untruncated operator can be written as a
%forward recursion, which is quantified by the following lemma.
%
\begin{lemma}
  \label{lemma:op_fw_rec}
  Let $\hat{T}_i^\lambda$ be the 
  operator defined in \eqrefb{defgradop} and let $V\in\mathbb{R}^S$. We have
  \begin{equation}
    \hat{T}_i^\lambda V = \rho_i r_i + \gamma \rho_i (1-\lambda)    V(s_{i+1}) + \gamma\lambda \rho_i \hat{T}_{i+1}^\lambda V.
  \end{equation}
\end{lemma}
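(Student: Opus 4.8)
The plan is to establish the identity by a direct, elementary rewriting of the finite sum that defines $\hat{T}_i^\lambda V$ in \eqrefb{defgradop}: peel off the first summand, factor a common term out of what remains, and recognize the resulting sum as (a piece of) $\hat{T}_{i+1}^\lambda V$.

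First I would isolate the $j=i$ term of the sum. Since $\rho_i^i=\rho_i$ and $\rho_i^{i-1}=1$ by the empty-product convention adopted before the lemma, that term equals $\rho_i \hat{T}_i V - V(s_i)$, which cancels the leading $V(s_i)$ in \eqrefb{defgradop}; using $\hat{T}_i V = r_i+\gamma V(s_{i+1})$ one is left with
$$\hat{T}_i^\lambda V = \rho_i r_i + \gamma\rho_i V(s_{i+1}) + \sum_{j=i+1}^{n}(\gamma\lambda)^{j-i}\bigl(\rho_i^j \hat{T}_j V - \rho_i^{j-1}V(s_j)\bigr).$$
Next I would factor the remaining sum. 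For every $j\ge i+1$ one has $\rho_i^j=\rho_i\,\rho_{i+1}^j$, $\rho_i^{j-1}=\rho_i\,\rho_{i+1}^{j-1}$ (the latter being correct even at $j=i+1$, where both sides equal $\rho_i$), and $(\gamma\lambda)^{j-i}=\gamma\lambda\,(\gamma\lambda)^{j-(i+1)}$. Pulling out $\gamma\lambda\rho_i$ and comparing with the definition \eqrefb{defgradop} written at index $i+1$, the sum equals $\gamma\lambda\rho_i\bigl(\hat{T}_{i+1}^\lambda V - V(s_{i+1})\bigr)$. Substituting this back and collecting the two $V(s_{i+1})$ contributions yields $\rho_i r_i + \gamma\rho_i(1-\lambda)V(s_{i+1}) + \gamma\lambda\rho_i\hat{T}_{i+1}^\lambda V$, which is the claim.

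There is no genuinely hard step here; the only care needed is in the bookkeeping of the convention $\rho_j^k=1$ for $k<j$ (so that the factorization $\rho_i^{j-1}=\rho_i\rho_{i+1}^{j-1}$ is legitimate at the lower endpoint $j=i+1$), and in the degenerate case $i=n$, where the tail sum is empty, $\hat{T}_{n+1}^\lambda V=V(s_{n+1})$, and the identity collapses to $\hat{T}_n^\lambda V=\rho_n\hat{T}_n V=\rho_n r_n+\gamma\rho_n V(s_{n+1})$, consistently with the right-hand side.
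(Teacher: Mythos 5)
Your proof is correct and follows essentially the same route as the paper: isolate the $j=i$ term (which cancels the leading $V(s_i)$), factor $\gamma\lambda\rho_i$ out of the tail via $\rho_i^j=\rho_i\rho_{i+1}^j$, and identify the remainder with $\hat{T}_{i+1}^\lambda V - V(s_{i+1})$ before expanding $\hat{T}_i V = r_i + \gamma V(s_{i+1})$. The extra care you take with the empty-product convention and the boundary case $i=n$ is consistent with the definition and only adds precision.
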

\begin{proof}
  Using notably the identity $\rho_i^j = \rho_ i \rho_{i+1}^j$, we
  have:
  \begin{align}
  \hat{T}_{i}^\lambda V &= V(s_i) + \sum_{j=i}^n
  (\gamma\lambda)^{j-i} \left(\rho_i^j \hat{T}_j V - \rho_i^{j-1}
  V(s_j)\right)
  \\
  &= V(s_i) + \rho_i \hat{T}_i V - V(s_i) + \gamma\lambda \rho_i
  \sum_{j=i+1}^n (\rho_i^j\hat{T}_j V - \rho_i^{j-1} V(s_j))
  \\
  &= \rho_i \hat{T}_i V + \gamma\lambda\rho_i \left(\hat{T}_{i+1}^\lambda V - V(s_{i+1})
  \right).
  \end{align}
~\vspace{-1.5cm}

\end{proof}

%% Thus, $\hat{T}_i^\lambda V$ can be estimated using the standard
%% quantity $\rho_i r_i + \gamma \rho_i (1-\lambda) V(s_{i+1})$ and the
%% future estimate $\hat{T}_{i+1}^\lambda V$. This is still not
%% practical, as it requires ``knowing the future''. This corresponds
%% to the standard forward view of eligibility traces.
%% %
%% The key idea of the next derivations is to replace this forward
%% recursion, wherever it appears in the update rules, by some
%% equivalent (in expectation) backward recursion. This will be
%% developed in Sections~\ref{subsec:gtd} to~\ref{subsec:gbrm} and it
%% corresponds to the backward view of eligibility traces. Notice that
%% in the least-squares case, this corresponds to the use of
%% Lemma~\ref{lemma:sum}, but it is a little bit more involved in the
%% gradient case.
To sum up, the ``recipe'' that we are about to use to derive off-policy gradient learning
algorithms based on eligibility traces will consist of the following steps:
\begin{enumerate}
  \item write the empirical generic cost
  function~\eqref{eq:cost_function_generic} with the untruncated
  Bellman operator of \eqrefb{defgradop}%(seen as a forward recursion, see
%  Lemma~\ref{lemma:op_fw_rec});
  %
  \item instantiate $\xi$ and derive the gradient-based update rule
  (with some additional work for $\xi=\theta_{i}$, see
  Section~\ref{subsec:grad_pfp});
  \item turn the \emph{forward view} into an equivalent (in
  expectation) \emph{backward view}.
\end{enumerate}
The next subsection details the precise derivation of the algorithms.

\subsection{Off-policy TD($\lambda$)}
\label{subsec:gtd}

Because it is the simplest, we begin by considering the bootstrap approach, that is the instantiation
$\xi = \theta_{j-1}$. The cost function to be minimized is
therefore:
\begin{equation}
  \sum_{j=1}^i \left(\hat{T}_j^\lambda
  \hat{V}_{\theta_{j-1}} - \hat{V}_\omega(s_j)\right)^2.
\end{equation}
Minimized with a stochastic gradient descent, the related update
rule is ($\alpha_i$ being a standard learning rate and recalling
that $\hat{V}_\omega(s_i) = \omega^T \phi(s_i) = \omega^T \phi_i$):
\begin{align}
  \theta_i &= \theta_{i-1} - \frac{\alpha_i}{2}\nabla_{\omega}  \left(\hat{T}_i^\lambda
  \hat{V}_{\theta_{i-1}} -
  \hat{V}_\omega(s_i)\right)^2\Big|_{\omega=\theta_{i-1}}
  \\
  &= \theta_{i-1} + \alpha_i \phi_i \left(\hat{T}_i^\lambda \hat{V}_{\theta_{i-1}} -
  \hat{V}_{\theta_{i-1}}(s_i)\right).
  \label{eq:update_td_forward}
\end{align}
At this point, one could notice that the exact same update rule
would have been obtained with the instantiation $\xi =
\theta_{i-1}$. This was to be expected: as only the last term of the
sum is considered for the update, we have $j=i$, therefore $\xi =
\theta_{i-1} = \theta_{j-1}$.

\eqrefb{update_td_forward} makes use of a $\lambda$-TD error
defined as
\begin{equation}
  \delta_i^\lambda(\omega) = \hat{T}_i^\lambda \hat{V}_\omega -
  \hat{V}_\omega(s_i).
\end{equation}
For convenience, let also $\delta_i$ be the standard (off-policy) TD
error defined as
\begin{equation}
  \delta_i(\omega)  = \delta^{\lambda=0}_i(\omega) = \rho_i \hat{T}_i
  \hat{V}_{\omega} - \hat{V}_{\omega}(s_i) = \rho_i\left(r_i + \gamma
  \hat{V}_{\omega}(s_{i+1})\right) - \hat{V}_{\omega}(s_i).
\end{equation}
The $\lambda$-TD error can be expressed as a forward recursion:
\begin{lemma}
  \label{lemma:td_fwd_rec}
  Let $\delta_i^\lambda$ be the $\lambda$-TD error and
  $\delta_i$ be the standard TD error.  Then for all $\omega$,
  \begin{equation}
    \delta_i^\lambda(\omega) = \delta_i(\omega) + \gamma \lambda \rho_i
    \delta_{i+1}^\lambda(\omega).
  \end{equation}
\end{lemma}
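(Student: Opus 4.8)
The plan is to use Lemma~\ref{lemma:op_fw_rec} directly. Both $\delta_i^\lambda(\omega)$ and $\delta_i(\omega)$ are defined in terms of the operators $\hat{T}_i^\lambda$ and $\hat{T}_i$, so the identity should drop out of the operator-level recursion with only a short computation. First I would expand $\delta_i^\lambda(\omega)$ using its definition, $\delta_i^\lambda(\omega) = \hat{T}_i^\lambda \hat{V}_\omega - \hat{V}_\omega(s_i)$, and substitute the forward recursion from Lemma~\ref{lemma:op_fw_rec}:
\begin{equation}
\delta_i^\lambda(\omega) = \rho_i r_i + \gamma\rho_i(1-\lambda)\hat{V}_\omega(s_{i+1}) + \gamma\lambda\rho_i \hat{T}_{i+1}^\lambda \hat{V}_\omega - \hat{V}_\omega(s_i).
\end{equation}

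Next I would rewrite the right-hand side to expose the two target terms. The trick is to split $\gamma\rho_i(1-\lambda)\hat{V}_\omega(s_{i+1}) = \gamma\rho_i \hat{V}_\omega(s_{i+1}) - \gamma\lambda\rho_i \hat{V}_\omega(s_{i+1})$, so that
\begin{equation}
\delta_i^\lambda(\omega) = \Big(\rho_i r_i + \gamma\rho_i \hat{V}_\omega(s_{i+1}) - \hat{V}_\omega(s_i)\Big) + \gamma\lambda\rho_i\Big(\hat{T}_{i+1}^\lambda \hat{V}_\omega - \hat{V}_\omega(s_{i+1})\Big).
\end{equation}
The first bracket is exactly $\delta_i(\omega)$ by its definition $\delta_i(\omega) = \rho_i(r_i + \gamma\hat{V}_\omega(s_{i+1})) - \hat{V}_\omega(s_i)$, and the second bracket is exactly $\delta_{i+1}^\lambda(\omega)$ by the definition of the $\lambda$-TD error at step $i+1$. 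This yields $\delta_i^\lambda(\omega) = \delta_i(\omega) + \gamma\lambda\rho_i\,\delta_{i+1}^\lambda(\omega)$, as claimed.

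There is essentially no obstacle here: the lemma is a purely algebraic restatement of Lemma~\ref{lemma:op_fw_rec} after applying the linear functional $V \mapsto V - V(s_i)$ evaluated at $\hat{V}_\omega$, and the only point requiring a moment's care is the correct splitting of the $(1-\lambda)$ factor so that the telescoping into $\delta_i$ and $\delta_{i+1}^\lambda$ is exact. One should also note implicitly that the recursion terminates: at $i = n$ the untruncated sum in \eqrefb{defgradop} has a single term and $\hat{T}_{n+1}^\lambda$ contributes nothing, consistent with the convention $\delta_{n+1}^\lambda(\omega) = 0$, so the forward recursion is well-defined over the finite trajectory.
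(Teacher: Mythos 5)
Your proposal is correct and follows essentially the same route as the paper: apply the forward recursion of Lemma~\ref{lemma:op_fw_rec}, subtract $\hat{V}_\omega(s_i)$, split the $(1-\lambda)$ term, and identify the brackets with $\delta_i(\omega)$ and $\delta_{i+1}^\lambda(\omega)$. Your closing remark on the boundary convention $\delta_{n+1}^\lambda(\omega)=0$ is a harmless (and consistent) addition that the paper leaves implicit.
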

\begin{proof}
  This is a corollary of
  Lemma~\ref{lemma:op_fw_rec}:
  \begin{align}
&    \hat{T}_i^\lambda V_{\omega} ~=~ \rho_i r_i + \gamma \rho_i (1-\lambda)
    V_{\omega}(s_{i+1}) + \gamma\lambda \rho_i \hat{T}_{i+1}^\lambda V_{\omega}
    \\
    \Leftrightarrow~~~&
    \hat{T}_i^\lambda V_{\omega} - V_{\omega}(s_i) ~=~ \rho_i r_i + \gamma \rho_i
    V_{\omega}(s_{i+1}) - V_{\omega}(s_i) + \gamma\lambda \rho_i (\hat{T}_{i+1}^\lambda V_{\omega} -
    V_{\omega}(s_{i+1}))
    \\
    \Leftrightarrow~~~&
    \delta_i^\lambda(\omega) ~=~ \delta_i(\omega) + \gamma \lambda \rho_i
    \delta_{i+1}^\lambda(\omega).
  \end{align}
~\vspace{-1.2cm}

\end{proof}
Therefore, we get the following update rule
\begin{equation}
  \theta_i = \theta_{i-1} + \alpha_i \phi_i
  \delta_i^\lambda(\theta_{i-1})
\end{equation}
with $\delta_i^\lambda(\theta_{i-1}) = \delta_i(\theta_{i-1}) + \gamma \lambda
\delta_{i+1}^\lambda(\theta_{i-1})$.
% being a (not practical) forward recursion. 
The key idea here is to find some backward recursion such
that in expectation, when the Markov chain has reached its steady
state, 
%(in other words, states are sampled according to $\mu_0$, the
%stationary distribution of the behavioral policy $\pi_0$),
 it
provides the same result as the forward recursion.
%\footnote{We recall
%that in the least-squares case, the derivation is much simpler,
%everything relies on Lemma~\ref{lemma:sum}.}. 
Such a backward
recursion is given by the following lemma.
\begin{proposition}
  \label{prop:phi_delta}
  Let $z_i$ be the eligibility vector, defined by the following
  recursion:
  \begin{equation}
    z_i = \phi_i + \gamma\lambda \rho_{i-1}z_{i-1}.
  \end{equation}
  For all $\omega$, we have
  \begin{equation}
    E_{\mu_0}[\phi_i \delta_{i}^\lambda(\omega)] = E_{\mu_0}[z_i    \delta_{i}(\omega)].
  \end{equation}
\end{proposition}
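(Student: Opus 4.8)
The plan is to unroll both the \emph{forward} $\lambda$-TD error and the \emph{backward} eligibility vector into geometric series of the same shape, and then exploit the stationarity of the behavior chain to match them term by term.

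First, I would iterate the forward recursion of Lemma~\ref{lemma:td_fwd_rec}. Starting from $\delta_i^\lambda(\omega) = \delta_i(\omega) + \gamma\lambda\rho_i\,\delta_{i+1}^\lambda(\omega)$ and using the identity $\rho_i^k = \rho_i\rho_{i+1}^k$, one obtains
\[
\delta_i^\lambda(\omega) = \sum_{k\ge i} (\gamma\lambda)^{k-i}\,\rho_i^{k-1}\,\delta_k(\omega),
\]
with the convention $\rho_i^{i-1}=1$ (we work in the steady-state / infinite-horizon regime, as the statement concerns the stationary expectation $E_{\mu_0}$). Multiplying by $\phi_i$, taking $E_{\mu_0}$, and interchanging sum and expectation yields, after setting $m=k-i$,
\[
E_{\mu_0}\!\left[\phi_i\,\delta_i^\lambda(\omega)\right] = \sum_{m\ge 0}(\gamma\lambda)^m\, E_{\mu_0}\!\left[\rho_i^{\,i+m-1}\,\phi_i\,\delta_{i+m}(\omega)\right].
\]

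Next I would use that under $\mu_0$ the process $(s_t,a_t,r_t)_t$ is stationary, so each summand — a fixed function of the finite time-window from $i$ to $i+m+1$ — is invariant under a backward time-shift of $m$ steps:
\[
E_{\mu_0}\!\left[\rho_i^{\,i+m-1}\,\phi_i\,\delta_{i+m}(\omega)\right] = E_{\mu_0}\!\left[\rho_{i-m}^{\,i-1}\,\phi_{i-m}\,\delta_i(\omega)\right],
\qquad \rho_{i-m}^{\,i-1} = \prod_{l=i-m}^{i-1}\rho_l .
\]
Summing the geometric series back up and recognizing, from the closed form of the trace in \eqref{eq:traces} extended to an infinite past, that $\sum_{m\ge 0}(\gamma\lambda)^m\,\rho_{i-m}^{\,i-1}\,\phi_{i-m}$ is exactly the unrolling of the recursion $z_i = \phi_i + \gamma\lambda\rho_{i-1}z_{i-1}$, I get
\[
E_{\mu_0}\!\left[\phi_i\,\delta_i^\lambda(\omega)\right] = E_{\mu_0}\!\left[\Big(\sum_{m\ge 0}(\gamma\lambda)^m\,\rho_{i-m}^{\,i-1}\,\phi_{i-m}\Big)\,\delta_i(\omega)\right] = E_{\mu_0}\!\left[z_i\,\delta_i(\omega)\right],
\]
which is the claim.

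The main obstacle is making the two infinite-series manipulations rigorous: the truncation at the trajectory length, the passage to a stationary process, and the interchange of $\sum$ with $E_{\mu_0}$. This needs an integrability bound; the key fact is that, since the behavior chain follows $\pi_0$, one has $E_{\pi_0}[\rho(s,a)\mid s] = \sum_a \pi(a\mid s) = 1$, so the products of importance weights have stationary expectation $1$ and, with bounded features and rewards, the factor $\sum_m (\gamma\lambda)^m$ provides an $L^1$-dominating bound, legitimizing Fubini/dominated convergence. Finally, the discrepancy between the finite recursion for $z_i$ initialized at $z_0=0$ and its infinite unrolling is of order $(\gamma\lambda)^i$ and vanishes in the steady-state limit, which is precisely the "when the Markov chain has reached its steady state" qualifier in the statement.
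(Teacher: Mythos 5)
Your proposal is correct and follows essentially the same route as the paper: the paper's proof also combines the forward recursion of Lemma~\ref{lemma:td_fwd_rec} with the stationarity shift $E_{\mu_0}[\phi_i\rho_i\delta_{i+1}^\lambda]=E_{\mu_0}[\phi_{i-1}\rho_{i-1}\delta_i^\lambda]$, applied step by step until the trace $\phi_i+\gamma\lambda\rho_{i-1}\phi_{i-1}+(\gamma\lambda)^2\rho_{i-1}\rho_{i-2}\phi_{i-2}+\dots$ appears, which is exactly your unrolled series done iteratively. Your added remarks on Fubini, the truncation at $z_0=0$, and the bound $E_{\pi_0}[\rho(s,a)\mid s]=1$ merely make explicit the steady-state and integrability fine print that the paper leaves implicit.
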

\begin{proof}
For clarity, we omit the dependence with respect to $\omega$ and write below
$\delta_i$ (resp. $\delta_i^\lambda)$ for $\delta_i(\omega)$ (resp. $\delta_i^\lambda(\omega))$.
  The result relies on successive applications of
  Lemma~\ref{lemma:td_fwd_rec}. We have:
  \begin{align}
    E_{\mu_0}[\phi_i \delta_i^\lambda] &=
    E_{\mu_0}[\phi_i (\delta_i + \gamma \lambda \rho_i \delta_{i+1}^\lambda)]
    \\
    &=E_{\mu_0}[\phi_i \delta_i] + E_{\mu_0}[\phi_i \gamma \lambda \rho_i
    \delta_{i+1}^\lambda].
  \end{align}
  Moreover, we have that $E_{\mu_0}[\phi_i\rho_i \delta_{i+1}^\lambda] = E_{\mu_0}[\phi_{i-1} \rho_{i-1} \delta_{i}^\lambda]$,
  as expectation is done according to the stationary distribution,
%  (which means that both $s_i$ and $s_{i-1}$ are sampled according to  $\mu_0$),
 therefore:
  \begin{align}
    E_{\mu_0}[\phi_i \delta_i^\lambda] &= E_{\mu_0}[\phi_i \delta_i] + \gamma
    \lambda E_{\mu_0}[\phi_{i-1}
    \rho_{i-1} \delta_{i}^\lambda]
    \\
    &= E_{\mu_0}[\phi_i \delta_i] + \gamma
    \lambda E_{\mu_0}[\phi_{i-1}
    \rho_{i-1} (\delta_i + \gamma \lambda \rho_i \delta_{i+1}^\lambda)]
    \\
    &= E_{\mu_0}[\delta_i(\phi_i +
    \gamma\lambda\rho_{i-1}\phi_{i-1} +
    (\gamma\lambda)^2\rho_{i-1}\rho_{i-2}\phi_{i-2} + \dots)]
    \\
    &= E_{\mu_0}[\delta_i z_i].
  \end{align}
~\vspace{-1.2cm}

\end{proof}
This suggests to replace
\eqrefb{update_td_forward} by the following
update rule,
\begin{equation}
  \theta_i = \theta_{i-1} + \alpha_i z_i \delta_i(\theta_{i-1}),
\end{equation}
which is equivalent in expectation when the Markov chain 
%(according
%to which states are sampled)
has reached its steady state. This is
summarized in Algorithm~\ref{algo:td}.
\begin{algorithm2e}[tbh]
  %\tiny
  \SetAlgoVlined
  \caption{Off-policy TD($\lambda$)}
  \label{algo:td}
  \BlankLine
  {\textbf{Initialization}}\;
  Initialize vector $\theta_0$\;
  Set $z_0=0$\;
  \BlankLine
  \For{$i=1,2,\dots$}{
    \BlankLine
    \textbf{{Observe}} $\phi_i, r_i, \phi_{i+1}$ \;
    \BlankLine
    {\textbf{Update traces}} \;
    $
    z_i = \gamma\lambda \rho_{i-1} z_{i-1} + \phi_i
    $ \;
    \BlankLine
    {\textbf{Update parameters}} \;
    $\theta_i = \theta_{i-1} + \alpha_i z_i (\rho_i r_i - \Delta\phi_i^T \theta_{i-1})$ \;
  }
\end{algorithm2e}

This algorithm has first been proposed in the tabular case
by~\citet{Precup:2000}. An off-policy TD($\lambda$) algorithm (with
function approximation) has been proposed by~\citet{Precup:2001},
but it differs significantly from the algorithm just described
(notably it differs in the definition of the traces and the
projected Bellman equation, and in the fact that it is constrained
to episodic trajectories). Algorithm~\ref{algo:td} has
actually first been proposed much more recently
by~\cite{bertsekas:09proj}.

An important issue for the analysis of this algorithm is the fact that
the trace $z_i$ may have an infinite variance, due to importance
sampling (see~\citet[Sec.~3.1]{yu:2010tech}). As far as we know, the
only existing analysis of off-policy TD($\lambda$) (as provided in
Algorithm~\ref{algo:td}) uses an additional contraint which forces the
parameters to be bounded: after each parameter update, the resulting
parameter vector is projected onto some predefined compact set. This
analysis is performed by~\citet[Sec.~4.1]{yu:2010tech}. Under the
standard assumptions of stochastic approximations and most of the
assumptions required for the on-policy TD($\lambda$) algorithm,
assuming moreover that $\Pi_0 T^\lambda$ is a contraction (which we
recall to hold for a big enough $\lambda$) and that the predefined
compact set used to project the parameter vector is a large enough
ball containing the fixed point of $\Pi_0 T^\lambda$, the constrained
version of off-policy TD($\lambda$) converges to this fixed-point
(therefore, the same solution as off-policy LSTD($\lambda$),
LSPE($\lambda$) and FPKF($\lambda$)). We refer
to~\citet[Sec.~4.1]{yu:2010tech} for further details.  An analysis of
the unconstrained version of off-policy TD($\lambda$) described in
Algorithm~\ref{algo:td} is an interesting topic for future research.

\subsection{Off-policy TDC($\lambda$) and off-policy GTD2($\lambda$)} \label{subsec:grad_pfp}

In this section, the case $\xi = \theta_i$ is considered. 
Following the general pattern, at step $i$, we would like to come up with a new parameter $\theta_i$ that moves (from $\theta_{i-1}$) closer to the minimum of the function
\begin{equation}
\omega \mapsto  J(\omega,\theta_i)= \left(\hat{T}_j^\lambda  \hat{V}_{\theta_i} - \hat{V}_\omega(s_j)\right)^2.  \label{eq:pfp_grad_1}
\end{equation}
This problem is tricky since the function to minimize contains what we want to compute -- $\theta_i$ -- as a parameter. For this reason we cannot directly perform a stochastic gradient descent of the right hand side.
%(this approach worked in the least-squares case
%because it implied that the analytical solution was tractable).
Instead, we will consider an alternative (but equivalent)
formulation of the projected fixed-point minimization
$\theta=\arg\min_\omega \|V_\omega - \Pi_0 T^{\lambda} V_\omega\|^2$,
and will move from $\theta_{i-1}$ to $\theta_i$ by making one step of gradient descent of an estimate of the function 
%(asymptotically)
$$ 
\theta \mapsto \|V_\theta - \Pi_0 T^{\lambda} V_\theta\|^2.
$$
With the following vectorial notations:
\begin{align}
  \mathbf{\hat{V}}_\omega &= \begin{pmatrix}
    \hat{V}_\omega(s_1) & \dots & \hat{V}_\omega(s_i)
  \end{pmatrix}^T,
  \\
  \mathbf{\hat{T}}^\lambda \mathbf{\hat{V}}_\omega &=
  \begin{pmatrix}
    \hat{T}^\lambda_1 \hat{V}_\omega & \dots & \hat{T}^\lambda_i
    \hat{V}_\omega
  \end{pmatrix}^T,
  \\
  \tphi &= \begin{bmatrix}
    \phi(s_1) & \dots & \phi(s_i)
  \end{bmatrix}^T,
  \\
  \tilde{\Pi}_0 &= \tphi (\tphi^T \tphi)^{-1} \tphi^T,
\end{align}
we consider the following objective function:
\begin{align}
  J(\theta) &= \left\|\mathbf{\hat{V}}_\theta -
  \tilde{\Pi}_0 \mathbf{\hat{T}}^\lambda \mathbf{\hat{V}}_\theta\right\|^2
  \\
  &= \left(\mathbf{\hat{V}}_\theta -
  \mathbf{\hat{T}}^\lambda
  \mathbf{\hat{V}}_\theta\right)^T \tilde{\Pi}_0 \left(\mathbf{\hat{V}}_\theta -
  \mathbf{\hat{T}}^\lambda
  \mathbf{\hat{V}}_\theta\right)
  \\
  &= \left(\sum_{j=1}^i \delta_j^\lambda(\theta)\phi_j\right)^T
  \left(\sum_{j=1}^i \phi_j \phi_j^T\right)^{-1}
  \left(\sum_{j=1}^i \delta_j^\lambda(\theta)\phi_j\right).
\end{align}
This is the derivation followed by~\cite{Sutton:2009a} in the case
$\lambda=0$ and by~\cite{Maei:2010} in the case $\lambda>0$ (and
off-policy learning). 
%The cost function is then minimized by
%performing a stochastic gradient descent on $J(\theta)$.
%
Let us introduce the following notation:
\begin{align}
\label{eq:defg}
g_j^\lambda = \nabla \hat{T}_j^\lambda \hat{V}_\theta.
\end{align} 
Note that since we consider a linear approximation this quantity does not depend on $\theta$. Noticing that $\nabla \delta_j^\lambda(\theta) = \phi_j - g_j^\lambda$, we can compute $\nabla J(\theta)$:
\begin{align}
  -\frac{1}{2}\nabla J(\theta) &= -\frac{1}{2}\nabla \left(\sum_{j=1}^i
  \delta_j^\lambda(\theta)\phi_j\right)^T
  \left(\sum_{j=1}^i \phi_j \phi_j^T\right)^{-1}
  \left(\sum_{j=1}^i \delta_j^\lambda(\theta)\phi_j\right)
  \\
  &= - \left(\nabla \sum_{j=1}^i
  \delta_j^\lambda(\theta)\phi_j\right)^T
  \left(\sum_{j=1}^i \phi_j \phi_j^T\right)^{-1}
  \left(\sum_{j=1}^i \delta_j^\lambda(\theta)\phi_j\right)
  \\
  &= \left(\sum_{j=1}^i (\phi_j - g_j^\lambda)\phi_j^T\right)
  \left(\sum_{j=1}^i \phi_j \phi_j^T\right)^{-1}
  \left(\sum_{j=1}^i \delta_j^\lambda(\theta)\phi_j\right)
  \label{eq:ankle1}
  \\
  &= \left(\sum_{j=1}^i
  \delta_j^\lambda(\theta)\phi_j\right) - \left(\sum_{j=1}^i g_j^\lambda \phi_j^T\right)
  \left(\sum_{j=1}^i \phi_j \phi_j^T\right)^{-1}
  \left(\sum_{j=1}^i \delta_j^\lambda(\theta)\phi_j\right).
\end{align}

Let $w_i(\theta)$ be a quasi-stationary estimate of the last part,
that can be recognized as the solution of a least-squares problem
(regression of $\lambda$-TD errors $\delta^\lambda_j$ on features $\phi_j$):
\begin{equation}
  w_i(\theta) \approx \left(\sum_{j=1}^i \phi_j \phi_j^T\right)^{-1}
  \left(\sum_{j=1}^i \delta_j^\lambda(\theta)\phi_j\right) = \argmin_\omega \sum_{j=1}^i \left(\phi_j^T \omega - \delta_j^\lambda(\theta)\right)^2.
\end{equation}
The identification with the above least-squares solution suggests to
use the following stochastic gradient descent to form the
quasi-stationary estimate:
\begin{equation}
  w_i = w_{i-1} + \beta_i \phi_i \left(\delta_i^\lambda(\theta_{i-1}) - \phi_i^T w_{i-1}\right).
  \label{eq:update_quasti_stat_estimate_w}
\end{equation}
This update rule makes use of the $\lambda$-TD error, defined
through a \emph{forward view}. % As before, we would like to use a
%backward recursion. This can be done thanks to
As for the previous algorithm, we can use 
Proposition~\ref{prop:phi_delta} to obtain the following \emph{backward view}  update rule that is equivalent (in
expectation when the Markov chain reaches its steady state):
\begin{equation}
  w_i = w_{i-1} + \beta_i  \left(z_i \delta_i(\theta_{i-1}) - \phi_i (\phi_i^T w_{i-1})\right).
  \label{eq:update_quasti_stat_estimate_w_backward}
\end{equation}
Using this quasi-stationary estimate, the gradient can be
approximated as:
\begin{equation}
  -\frac{1}{2}\nabla J(\theta) \approx \left(\sum_{j=1}^i
  \delta_j^\lambda(\theta)\phi_j\right) - \left(\sum_{j=1}^i g_j^\lambda \phi_j^T\right)
  w_i.
\end{equation}
Therefore, a stochastic gradient descent gives the following update
rule for the parameter vector $\theta$:
\begin{equation}
  \theta_i = \theta_{i-1} + \alpha_i \left(\delta_i^\lambda(\theta_{i-1}) \phi_i -
  g_i^\lambda \phi_i^T w_i\right).
  \label{eq:update_gq_forward}
\end{equation}
%As usual, this equation is based on forward recursions, whereas we
%would like to use backward recursions in order to get a practical
%algorithm. 
Once again the \emph{forward view} term $\delta_i^\lambda(\theta_{i-1}) \phi_i$ can be
turned into a \emph{backward view} by using Proposition~\ref{prop:phi_delta}. There remains to
work on the term $g_i^\lambda \phi_i^T$.

First, one can notice that the term $g_i^\lambda$ satisfies a forward recursion.
\begin{lemma}
  \label{lemma:gradTrec}
We have
  \begin{equation}
    g_i^\lambda = \gamma \rho_i (1-\lambda)\phi_{i+1} + \gamma\lambda \rho_i g_{i+1}^\lambda.
  \end{equation}
\end{lemma}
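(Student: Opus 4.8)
The statement is a direct consequence of Lemma~\ref{lemma:op_fw_rec}, obtained simply by taking the gradient with respect to $\theta$ of the forward recursion for the operator. The plan is as follows. First I would instantiate Lemma~\ref{lemma:op_fw_rec} at $V=\hat{V}_\theta$, which gives
\begin{equation}
\hat{T}_i^\lambda \hat{V}_\theta = \rho_i r_i + \gamma \rho_i (1-\lambda) \hat{V}_\theta(s_{i+1}) + \gamma\lambda \rho_i \hat{T}_{i+1}^\lambda \hat{V}_\theta .
\end{equation}
Then I would apply $\nabla = \nabla_\theta$ to both sides, using linearity of the gradient and the fact that the scalars $\rho_i$, $\gamma$, $\lambda$, $r_i$ do not depend on $\theta$.

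The key observations making the computation trivial are: (i) the term $\rho_i r_i$ is constant in $\theta$, so its gradient vanishes; (ii) since the approximation is linear, $\hat{V}_\theta(s_{i+1}) = \theta^T\phi_{i+1}$ and hence $\nabla \hat{V}_\theta(s_{i+1}) = \phi_{i+1}$; (iii) by the definition in \eqrefb{defg}, $\nabla \hat{T}_{i+1}^\lambda \hat{V}_\theta = g_{i+1}^\lambda$ (and, as already noted in the text, this quantity does not depend on $\theta$, so the recursion is well defined pointwise). Putting these together yields
\begin{equation}
g_i^\lambda = \nabla \hat{T}_i^\lambda \hat{V}_\theta = \gamma \rho_i (1-\lambda)\phi_{i+1} + \gamma\lambda \rho_i\, g_{i+1}^\lambda ,
\end{equation}
which is the claim.

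There is essentially no real obstacle here; the only point requiring a modicum of care is the boundary of the trajectory, i.e.\ making sure the recursion terminates consistently at $i=n$. One checks from \eqrefb{defgradop} that $\hat{T}_n^\lambda \hat{V}_\theta = \rho_n \hat{T}_n \hat{V}_\theta = \rho_n r_n + \gamma\rho_n \hat{V}_\theta(s_{n+1})$ and that the empty-sum convention gives $\hat{T}_{n+1}^\lambda \hat{V}_\theta = \hat{V}_\theta(s_{n+1})$, hence $g_n^\lambda = \gamma\rho_n\phi_{n+1}$ and $g_{n+1}^\lambda = \phi_{n+1}$, which are indeed compatible with the displayed recursion at $i=n$. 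Thus the lemma follows immediately by differentiating Lemma~\ref{lemma:op_fw_rec}.
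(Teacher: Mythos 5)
Your proposal is correct and follows essentially the same route as the paper, which also obtains the recursion by differentiating the forward recursion of Lemma~\ref{lemma:op_fw_rec} with respect to $\theta$ and using linearity of the parameterization; your additional check of the boundary case at $i=n$ is a harmless (and consistent) extra.
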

\begin{proof}
  This result is simply obtained by applying the gradient to the forward recursion of $\hat{T}_i^\lambda V_\theta$ provided in Lemma~\ref{lemma:op_fw_rec} (according to $\theta$).
\end{proof}
Using this, the term $g_i^\lambda \phi_i^T$ can be worked out similarly to the
term $\delta_i^\lambda(\theta_{i-1})\phi_i$.
\begin{proposition}
  \label{prop:g_phi}
  Let $z_i$ be the eligibility vector defined in Proposition~\ref{prop:phi_delta}. We have
  \begin{equation}
    E_{\mu_0}[g_i^\lambda \phi_i^T] = E_{\mu_0}[\gamma \rho_i
    (1-\lambda) \phi_{i+1} z_i^T].
  \end{equation}
\end{proposition}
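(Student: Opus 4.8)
The plan is to reproduce, essentially verbatim, the proof of Proposition~\ref{prop:phi_delta}, with the forward recursion for the $\lambda$-TD error (Lemma~\ref{lemma:td_fwd_rec}) replaced by the forward recursion for $g_i^\lambda$ established in Lemma~\ref{lemma:gradTrec}. The point is that the two recursions have exactly the same shape: in $g_i^\lambda = \gamma\rho_i(1-\lambda)\phi_{i+1} + \gamma\lambda\rho_i g_{i+1}^\lambda$ the ``base term'' $\gamma\rho_i(1-\lambda)\phi_{i+1}$ plays the role that $\delta_i$ plays in $\delta_i^\lambda = \delta_i + \gamma\lambda\rho_i\delta_{i+1}^\lambda$, and the recursive tail is multiplied by the same scalar $\gamma\lambda\rho_i$. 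Since the proof of Proposition~\ref{prop:phi_delta} only uses (i) this recursion and (ii) stationarity of the chain under $\mu_0$, the same two ingredients will close the argument here.

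Concretely, I would start from $E_{\mu_0}[g_i^\lambda \phi_i^T]$, substitute Lemma~\ref{lemma:gradTrec}, and split into a head term $\gamma(1-\lambda)E_{\mu_0}[\rho_i\phi_{i+1}\phi_i^T]$ and a tail term $\gamma\lambda E_{\mu_0}[\rho_i g_{i+1}^\lambda\phi_i^T]$. Stationarity gives $E_{\mu_0}[\rho_i g_{i+1}^\lambda\phi_i^T] = E_{\mu_0}[\rho_{i-1} g_i^\lambda\phi_{i-1}^T]$ (the exact analogue of $E_{\mu_0}[\phi_i\rho_i\delta_{i+1}^\lambda]=E_{\mu_0}[\phi_{i-1}\rho_{i-1}\delta_i^\lambda]$ used in Proposition~\ref{prop:phi_delta}). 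Iterating the pair of moves (expand $g^\lambda$ by Lemma~\ref{lemma:gradTrec}, then reindex by stationarity) produces, at stage $k$, a head contribution $\gamma(1-\lambda)(\gamma\lambda)^k E_{\mu_0}[\rho_{i-k}^i\,\phi_{i+1}\,\phi_{i-k}^T]$, and one observes that $\rho_i$ factors out of every term while what multiplies $\phi_{i+1}$ on the right is precisely $\phi_i^T + \gamma\lambda\rho_{i-1}\phi_{i-1}^T + (\gamma\lambda)^2\rho_{i-1}\rho_{i-2}\phi_{i-2}^T + \cdots = z_i^T$, the eligibility vector of Proposition~\ref{prop:phi_delta} (compare \eqrefb{traces}). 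Summing the head contributions yields $E_{\mu_0}[g_i^\lambda\phi_i^T] = \gamma(1-\lambda)E_{\mu_0}[\rho_i\phi_{i+1}z_i^T] = E_{\mu_0}[\gamma\rho_i(1-\lambda)\phi_{i+1}z_i^T]$, which is the claim. (Equivalently, one may transpose and invoke Proposition~\ref{prop:phi_delta} itself, reading $\gamma\rho_i(1-\lambda)\phi_{i+1}$ componentwise in the slot of the scalar $\delta_i$.)

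The only genuinely delicate step — and, as in the proof of Proposition~\ref{prop:phi_delta}, one that the survey-level exposition glosses over — is the passage to the limit in the telescoping expansion, i.e. that the residual tail $(\gamma\lambda)^{k+1}E_{\mu_0}[\rho_{i-k}^{i}\,g_{i+1}^\lambda\,\phi_{i-k}^T]$ vanishes as $k\to\infty$. In the on-policy case all $\rho$'s equal $1$ and the geometric factor $\gamma\lambda<1$ makes this immediate; in the off-policy case the expansion is absolutely convergent, and the limit exchange legitimate, under a boundedness condition of the same flavour as \eqrefb{condBRM}, namely $\lambda\gamma\rho(s,a)\le\beta<1$. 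I would therefore present the identity as a formal one, at the level of rigor of Proposition~\ref{prop:phi_delta}, and optionally append a one-line remark pointing to \eqrefb{condBRM} as the assumption under which the interchange is rigorously justified.
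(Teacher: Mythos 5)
Your proof is correct and follows essentially the same route as the paper: expand $g_i^\lambda$ via Lemma~\ref{lemma:gradTrec}, reindex by stationarity, and iterate so that the head terms assemble into $E_{\mu_0}[\gamma\rho_i(1-\lambda)\phi_{i+1}z_i^T]$, exactly mirroring the proof of Proposition~\ref{prop:phi_delta}. Your closing remark on the vanishing of the telescoped tail addresses a point the paper indeed leaves implicit, but it does not alter the argument.
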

\begin{proof}
  The proof is similar to that of Proposition~\ref{prop:phi_delta}. 
  Writing $b_i = \gamma \rho_i (1-\lambda) \phi_{i+1}$ and $\eta_i =
  \gamma \lambda \rho_i$, we have
  \begin{align}
    E_{\mu_0}[g_i^\lambda \phi_i^T] &= E_{\mu_0}[(b_i + \eta_i
    g_{i+1}^\lambda)\phi_i^T]
    \\
    &= E_{\mu_0}[b_i \phi_i^T] + E_{\mu_0}[\eta_{i-1}(b_i + \eta_i
    g_{i+1}^\lambda)\phi_{i-1}^T]
    \\
    &= E_{\mu_0}[b_i z_i^T].
  \end{align}
~\vspace{-1.3cm}

\end{proof}
Using this result and Proposition~\ref{prop:phi_delta}, it is natural to replace
\eqrefb{update_gq_forward} by an update based on a backward
recursion:
\begin{equation}
  \theta_i = \theta_{i-1} + \alpha_i \left(z_i \delta_i - \gamma
  \rho_i(1-\lambda)\phi_{i+1}(z_i^T w_{i-1})\right).
  \label{eq:update_gq_backward}
\end{equation}

Last but not least, for the estimate $w_i$ to be indeed
quasi-stationary, the learning rates should satisfy the following
condition (in addition to the classical conditions):
\begin{equation}
  \lim_{i\rightarrow \infty} \frac{\alpha_i}{\beta_i} = 0.
\end{equation}
Eqs.~\eqref{eq:update_gq_backward}
and~\eqref{eq:update_quasti_stat_estimate_w_backward} define the
off-policy TDC($\lambda$) algorithm, summarized in
Algorithm~\ref{algo:tdc}. It was originally proposed
by~\cite{Maei:2010} under the name GQ($\lambda$).
%, in the slightly more general context of options. 
We call it off-policy
TDC($\lambda$) to highlight the fact that it is the extension of the
original TDC algorithm of~\cite{Sutton:2009a} to off-policy learning
with traces. One can observe -- to our knowledge, this was never 
mentionned in the literature before -- that when $\lambda=1$, the learning rule
of TDC(1) reduces to that of TD(1).

\citet{Maei:2010}  show that the algorithm converges with probability 1 to the same
solution as the LSTD($\lambda$) algorithm (that is, to $\theta^* =
A^{-1} b$) under some technical assumptions. Contrary to off-policy
TD($\lambda$), this algorithm does not requires $\Pi_0 T^\lambda$ to
be a contraction in order to be convergent. Unfortunately, one of the
assumptions made in the analysis, requiring that the traces $z_i$ have
uniformly bounded second moments, is restrictive since in an
off-policy setting the traces $z_i$ may easily have an infinite
variance (unless the behavior policy is really close to the target
policy), as noted by~\cite{Yu:2010} (see also~\citet{Randhawa}).
A full proof of convergence thus still remains to be done.
%Yet, a constrained
%version of this algorithm may be shown to be convergent, without
%supposing that the variance of the traces is bounded, as for the
%off-policy TD($\lambda$) algorithm (see \refsubsec{gtd}).
\begin{algorithm2e}[tbh]
  %\tiny
  \SetAlgoVlined
  \caption{Off-policy TDC($\lambda$), also known as GQ($\lambda$)}
  \label{algo:tdc}
  \BlankLine
  {\textbf{Initialization}}\;
  Initialize vector $\theta_0$ and $w_0$\;
  Set $z_0=0$\;
  \BlankLine
  \For{$i=1,2,\dots$}{
    \BlankLine
    \textbf{{Observe}} $\phi_i, r_i, \phi_{i+1}$ \;
    \BlankLine
    {\textbf{Update traces}} \;
    $
    z_i = \gamma\lambda \rho_{i-1} z_{i-1} + \phi_i
    $ \;
    \BlankLine
    {\textbf{Update parameters}} \;
    $\theta_i = \theta_{i-1} + \alpha_i \left( z_i (\rho_i r_i - \Delta\phi_i^T \theta_{i-1})
    - \gamma \rho_i (1-\lambda) \phi_{i+1} (z_i^T w_{i-1})\right)$ \;
    $ w_i = w_{i-1} + \beta_i \left( z_i (\rho_i r_i - \Delta\phi_i^T
    \theta_{i}) - \phi_i (\phi_i^T w_{i-1})\right)
    $ \;
  }
\end{algorithm2e}

Using the same principle (that is, performing a stochastic gradient
descent to minimize $J(\theta)$), in the $\lambda=0$ case, an
alternative to TDC, the GTD2 algorithm was derived by \citet{Sutton:2009a}. As far as we
know, it has never been extended to off-policy learning with traces;
we do it now. Notice that, given the derivation of GQ($\lambda$),
obtaining this algorithm is pretty straightforward.

To do so, we can start back from \eqrefb{ankle1}:
\begin{align}
  -\frac{1}{2}\nabla J(\theta) &=
  \left(\sum_{j=1}^i (\phi_j - g_j^\lambda)\phi_j^T\right)
  \left(\sum_{j=1}^i \phi_j \phi_j^T\right)^{-1}
  \left(\sum_{j=1}^i \delta_j^\lambda(\theta)\phi_j\right)
  \\
  &\approx \left(\sum_{j=1}^i (\phi_j -
  g_j^\lambda)\phi_j^T\right)w_i.
\end{align}
This suggests the following alternative update rule (based on
forward recursion):
\begin{equation}
  \theta_i = \theta_{i-1} + \alpha_i (\phi_i - g_i^\lambda)
  \phi_i^T w_i.
\end{equation}
Using Proposition~\ref{prop:g_phi}, it is natural to  use the following
alternative update rule, based on a backward recursion:
\begin{equation}
  \theta_i = \theta_{i-1} + \alpha_i \left(\phi_i (\phi_i^T w_{i-1}) - \gamma \rho_i (1-\lambda) \phi_{i+1} (z_i^T
  w_{i-1})\right).
\end{equation}
The update of $w_i$ remains the same, and put together it gives
off-policy GTD2($\lambda$), summarized in Algorithm~\ref{algo:gtd2}.
The analysis of this new algorithm constitutes a potential topic for future research.

\begin{algorithm2e}[tbh]
  %\tiny
  \SetAlgoVlined
  \caption{Off-policy GTD2($\lambda$)}
  \label{algo:gtd2}
  \BlankLine
  {\textbf{Initialization}}\;
  Initialize vector $\theta_0$ and $w_0$\;
  Set $z_0=0$\;
  \BlankLine
  \For{$i=1,2,\dots$}{
    \BlankLine
    \textbf{{Observe}} $\phi_i, r_i, \phi_{i+1}$ \;
    \BlankLine
    {\textbf{Update traces}} \;
    $
    z_i = \gamma\lambda \rho_{i-1} z_{i-1} + \phi_i
    $ \;
    \BlankLine
    {\textbf{Update parameters}} \;
    $\theta_i = \theta_{i-1} + \alpha_i \left(\phi_i (\phi_i^T
    w_{i-1})
    - \gamma \rho_i (1-\lambda) \phi_{i+1} (z_i^T w_{i-1})\right)$ \;
    $ w_i = w_{i-1} + \beta_i \left( z_i (\rho_i r_i - \Delta\phi_i^T
    \theta_{i}) - \phi_i (\phi_i^T w_{i-1})\right)
    $ \;
  }
\end{algorithm2e}
%This algorithm is new, as far as we know. However, given the
%derivation of GQ($\lambda$)~\citep{Maei:2010} and knowing the
%derivation of TDC and GTD2~\citep{Sutton:2009a}, obtaining
%off-policy GTD2($\lambda$) is quite straightforward, the derivation
%is almost done by~\citet{Maei:2010}.

%As it has never been formally proposed, there is no published
%convergence analysis of this algorithm. One may however notice that the analysis of
%GQ($\lambda$) is mainly based on the one of TDC, the analysis of
%off-policy GTD2($\lambda$) may be mainly derived from the one of
%GTD2 (provided by~\citet{Sutton:2009a}). This would require assuming
%a bounded variance for the traces, but this could be also alleviated
%by considering a constrained variation of the algorithm.

\subsection{Off-policy gBRM($\lambda$)}
\label{subsec:gbrm}

The last considered approach is the residual approach, corresponding
to the instantiation $\xi = \omega$. The cost function to be
minimized is then:
\begin{equation}
  \sum_{j=1}^i \left(\hat{T}_j^\lambda \hat{V}_\omega -
  \hat{V}_\omega(s_j)\right)^2.
\end{equation}
Following the negative of the gradient of the last term leads to the following update rule:
\begin{align}
  \theta_i &= \theta_{i-1} - \alpha_i \nabla_\omega \left(\hat{T}_i^\lambda \hat{V}_\omega -
  \hat{V}_\omega(s_i)\right)^2 \Big|_{\omega=\theta_{i-1}}
  \\
  &= \theta_{i-1} - \alpha_i \nabla_\omega \left(\hat{T}_i^\lambda \hat{V}_\omega -
  \hat{V}_\omega(s_i)\right)\Big|_{\omega=\theta_{i-1}} \left(\hat{T}_i^\lambda \hat{V}_{\theta_{i-1}} -
  \hat{V}_{\theta_{i-1}}(s_i)\right)
  \\
  &= \theta_{i-1} + \alpha_i \left(\phi_i - g_i^\lambda\right) \delta_i^\lambda(\theta_{i-1}),
\end{align}
recalling the notation $g_i^\lambda = \nabla \hat{T}_i^\lambda
\hat{V}_\omega$ first defined in \eqrefb{defg}.

As usual, this update involves a \emph{forward view}, which we are going to turn into a \emph{backward view}.
%, while we can practically only consider backward recursions.
The term $\phi_i
\delta_i^\lambda$ can be worked thanks to
Proposition~\ref{prop:phi_delta}. The term $g_i^\lambda \delta_i^\lambda$
is more difficult to handle, as it is the product of two \emph{forward
views} (until now, we only considered the product of a \emph{forward
view} with a non-recursive term). This can be done thanks to the following
original relation (the proof being somewhat tedious, it is deferred to Appendix~\ref{appbeurk}):
\begin{proposition}
  \label{prop:beurk}
  Write $g_i^\lambda = \nabla_\omega\hat{T}_i^\lambda$ and define
  \begin{align}
    c_i &= 1 + (\gamma\lambda \rho_{i-1})^2 c_{i-1},
    \\
    \zeta_i &= \gamma \rho_i(1-\lambda)\phi_{i+1}c_i +
    \gamma\lambda\rho_{i-1}\zeta_{i-1}
    \\ \text{and }
    d_i &= \delta_i c_i + \gamma\lambda \rho_{i-1} d_{i-1},
  \end{align}
  we have that
  \begin{equation}
    E_{\mu_0}[\delta_i^\lambda g_i^\lambda] =
    E_{\mu_0}[\delta_i \zeta_i + d_i \gamma
    \rho_i(1-\lambda)\phi_{i+1} - \delta_i \gamma \rho_i
    (1-\lambda) \phi_{i+1} c_i].
  \end{equation}
\end{proposition}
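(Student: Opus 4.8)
The plan is to run the same shift-and-substitute argument used for Proposition~\ref{prop:phi_delta}, but now applied to a \emph{product of two} forward views. Throughout I would write $\eta_i=\gamma\lambda\rho_i$ and $b_i=\gamma\rho_i(1-\lambda)\phi_{i+1}$, so that Lemma~\ref{lemma:td_fwd_rec} and Lemma~\ref{lemma:gradTrec} become the compact recursions $\delta_i^\lambda=\delta_i+\eta_i\delta_{i+1}^\lambda$ and $g_i^\lambda=b_i+\eta_i g_{i+1}^\lambda$. As in the rest of the section I work in the steady state, and under the same boundedness assumption as for BRM($\lambda$) (\eqrefb{condBRM}, i.e. $\gamma\lambda\rho(s,a)\le\beta<1$) so that all the infinite series below converge and the backward recursions defining $c_i$, $\zeta_i$, $d_i$ have the well-defined steady-state closed forms $c_i=\sum_{m\ge0}\prod_{l=i-m}^{i-1}\eta_l^2$, $\zeta_i=\sum_{m\ge0}\big(\prod_{l=i-m}^{i-1}\eta_l\big)b_{i-m}c_{i-m}$ and $d_i=\sum_{m\ge0}\big(\prod_{l=i-m}^{i-1}\eta_l\big)\delta_{i-m}c_{i-m}$.

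First I would multiply out the two forward recursions and unroll the resulting $\eta_i^2\,\delta_{i+1}^\lambda g_{i+1}^\lambda$ tail:
\[
\delta_i^\lambda g_i^\lambda=\delta_i b_i+\eta_i\delta_i g_{i+1}^\lambda+\eta_i b_i\delta_{i+1}^\lambda+\eta_i^2\,\delta_{i+1}^\lambda g_{i+1}^\lambda=\sum_{k=i}^\infty\Big(\prod_{l=i}^{k-1}\eta_l^2\Big)\big(\delta_k b_k+\eta_k\delta_k g_{k+1}^\lambda+\eta_k b_k\delta_{k+1}^\lambda\big).
\]
Then I would apply $E_{\mu_0}$ and use time-shift invariance of the stationary distribution to re-anchor the $k=i+m$ summand at index $i$: shifting all time indices down by $m$ turns the \emph{forward} weight $\prod_{l=i}^{i+m-1}\eta_l^2$ into the \emph{backward} weight $\prod_{l=i-m}^{i-1}\eta_l^2$, and summing over $m$ collapses this into the factor $c_i$. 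This yields $E_{\mu_0}[\delta_i^\lambda g_i^\lambda]=E_{\mu_0}[c_i\delta_i b_i]+E_{\mu_0}[c_i\eta_i\delta_i g_{i+1}^\lambda]+E_{\mu_0}[c_i\eta_i b_i\delta_{i+1}^\lambda]$; call these three terms (A), (B), (C).

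Next I would turn the two remaining forward-view terms (B) and (C) into backward views by exactly the mechanism of Proposition~\ref{prop:phi_delta}. For (C), alternately applying stationarity (shift by one step) and the identity $\delta_\cdot^\lambda=\delta_\cdot+\eta_\cdot\delta_{\cdot+1}^\lambda$ peels the forward view off one step at a time, producing $E_{\mu_0}[c_i\eta_i b_i\delta_{i+1}^\lambda]=E_{\mu_0}\big[\delta_i\sum_{m\ge1}\big(\prod_{l=i-m}^{i-1}\eta_l\big)c_{i-m}b_{i-m}\big]=E_{\mu_0}[\delta_i(\zeta_i-c_i b_i)]$, so that (A)$+$(C)$=E_{\mu_0}[\delta_i\zeta_i]$. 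The same peeling, now with $g_\cdot^\lambda=b_\cdot+\eta_\cdot g_{\cdot+1}^\lambda$, gives (B)$=E_{\mu_0}[c_i\eta_i\delta_i g_{i+1}^\lambda]=E_{\mu_0}\big[b_i\sum_{m\ge1}\big(\prod_{l=i-m}^{i-1}\eta_l\big)c_{i-m}\delta_{i-m}\big]=E_{\mu_0}[b_i(d_i-c_i\delta_i)]$. Adding everything, $E_{\mu_0}[\delta_i^\lambda g_i^\lambda]=E_{\mu_0}[\delta_i\zeta_i+b_i d_i-b_i c_i\delta_i]$, which is the claimed identity once $b_i$ is written back as $\gamma\rho_i(1-\lambda)\phi_{i+1}$.

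The main obstacle is bookkeeping rather than ideas: one must keep the two shift directions straight (the forward product of $\eta_l^2$'s becomes a backward product, while the single-$\eta_l$ products created when peeling off $\delta^\lambda$ resp. $g^\lambda$ accumulate into $\zeta_i$ resp. $d_i$ minus their $m=0$ term) and must verify that the peeled series telescope exactly against the defining recursions of $c_i$, $\zeta_i$, $d_i$ — this is the ``somewhat tedious'' computation referred to in the text. A secondary point is the justification of interchanging the infinite sums with $E_{\mu_0}$ and of the steady-state closed forms, all of which rest on the uniform bound $\gamma\lambda\rho\le\beta<1$.
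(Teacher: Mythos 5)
Your proposal is correct and is essentially the paper's own argument: the paper proves an abstract version (Lemma~\ref{lemma:technical}) for two generic forward recursions $\alpha_i=a_i+\eta_i\alpha_{i+1}$, $\beta_i=b_i+\eta_i\beta_{i+1}$ using exactly your mechanism -- expand the product, collapse the $\eta^2$-weighted tail via stationarity into $w_i$ (your $c_i$), then peel the two residual forward views into $u_i,v_i$ (your $d_i,\zeta_i$) -- and then instantiates it with $a_i=\delta_i$, $b_i=\gamma\rho_i(1-\lambda)\phi_{i+1}$, $\eta_i=\gamma\lambda\rho_i$. The only differences are presentational (you work directly with $\delta_i^\lambda,g_i^\lambda$ rather than through the general lemma, and you state explicit convergence caveats under \eqrefb{condBRM} that the paper leaves implicit).
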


This result (together with Proposition~\ref{prop:phi_delta}) suggests to
update parameters as follows:
\begin{equation}
  \theta_i = \theta_{i-1} + \alpha_i \left(\delta_i(z_i + \gamma \rho_i
    (1-\lambda) \phi_{i+1} c_i -\zeta_i) - d_i \gamma \rho_i (1-\lambda)
  \phi_{i+1}\right).
\end{equation}
This gives the off-policy gBRM($\lambda$) algorithm, depicted in
Algorithm~\ref{algo:gbrm}. One can observe that gBRM(1) is equivalent to TD(1) (and thus also TDC(1), cf. the comment before the description of Algorithm~\ref{algo:tdc}).
\begin{algorithm2e}[tbh]
  %\tiny
  \SetAlgoVlined
  \caption{Off-policy gBRM($\lambda$)}
  \label{algo:gbrm}
  \BlankLine
  {\textbf{Initialization}}\;
  Initialize vector $\theta_0$\;
  Set $z_0=0$, $d_0 = 0$, $c_0 = 0$, $\zeta_0 = 0$\;
  \BlankLine
  \For{$i=1,2,\dots$}{
    \BlankLine
    \textbf{{Observe}} $\phi_i, r_i, \phi_{i+1}$ \;
    \BlankLine
    {\textbf{Update traces}} \;
    $
    z_i =  \phi_i + \gamma\lambda \rho_{i-1} z_{i-1}
    $ \;
    $ c_i = 1 + (\gamma\lambda\rho_{i-1})^2 c_{i-1}$
    \;
    $ \zeta_i = \gamma \rho_i (1-\lambda) \phi_{i+1} c_i +
    \gamma\lambda \rho_{i-1} \zeta_{i-1} $
    \;
    $ d_i = (\rho_i r_i - \Delta\phi_i^T \theta_{i-1})c_i + \gamma\lambda \rho_{i-1} d_{i-1}$
    \;
    \BlankLine
    {\textbf{Update parameters}} \;
    $\theta_i = \theta_{i-1} + \alpha_i \left((\rho_i r_i - \Delta\phi_i^T \theta_{i-1})(z_i + \gamma \rho_i
    (1-\lambda) \phi_{i+1} c_i - \zeta_i)
    - d_i \gamma \rho_i(1-\lambda)\phi_{i+1}\right)$ \;
  }
\end{algorithm2e}
%
%We do not provide a convergence analysis for this algorithm, yet we
%briefly discuss the predictable results.
%We conjecture that if this algorithm converges, it should do converge to
% the same solution as the one of BRM($\lambda$), characterized in
% Theorem~\ref{th2}. 
The analysis of this new algorithm is left for future research. 
% Therefore, there is
%naturally still a bias problem. A convergence analysis would require
%at least the standard stochastic approximation assumptions.
%Moreover, off-policy gBRM($\lambda$) makes also use of the trace
%vector $z_i$, which may have an unbounded variance. Therefore,
%either this variance is assumed to be bounded, or one can study a
%constrained version of the algorithm.

\section{Empirical Study}
\label{sec:exp}

This section aims at empirically comparing the surveyed algorithms. As they only address the policy evaluation problem, we compare the algorithms in their ability to perform policy evaluation (no control, no policy optimization); however, they may straightforwardly be used in an approximate policy iteration approach \citep{Bertsekas:1996,Munos:2003}). In order to assess their quality, we consider finite problems where the exact value function can be computed.

More precisely, we consider Garnet problems~\citep{Archibald:95}, which are a class of randomly constructed finite MDPs. They do not correspond to any specific application, but are totally abstract while remaining representative of the kind of MDP that might be encountered in practice. In our experiments, a Garnet is parameterized by 4 parameters and is written $\mathcal{G}(n_S, n_A, b, p)$: $n_S$ is the number of states, $n_A$ is the number of actions, $b$ is a branching factor specifying how many possible next states are possible for each state-action pair ($b$ states are chosen uniformly at random and transition probabilities are set by sampling uniform random $b-1$ cut points between 0 and 1) and $p$ is the number of features (for function approximation). The reward is state-dependent: for a given randomly generated Garnet problem, the reward for each state is uniformly sampled between 0 and 1. Features are chosen randomly: $\Phi$ is a $n_S\times p$ feature matrix of which each component is randomly and uniformly sampled between 0 and 1, except the first row of which each component is set to 1 (this corresponds to a constant feature). The discount factor $\gamma$ is set to $0.95$ in all experiments.

We consider two types of problems, ``small'' and ``big'', respectively corresponding to instances $\mathcal{G}(30, 4, 2, 8)$ and $\mathcal{G}(100, 10, 3, 20)$. We also consider two types of learning: on-policy learning and off-policy learning. In the on-policy setting, for each Garnet a policy $\pi$ to be evaluated is randomly generated (by sampling randomly $n_A-1$ cut points between $0$ and $1$ for each state), and trajectories (to be used for learning) are sampled according to this same policy. In the off-policy setting, the policy $\pi$ to be evaluated is randomly generated the same way, but trajectories are sampled according to a uniform policy $\pi_0$ (that chooses each action with equal probability, that is $\pi_0(a|s) = \frac{1}{n_A}$ for any state-action couple).

For all algorithms, we choose $\theta_0 = 0$. For least-squares-based algorithms (LSTD, LSPE, FPKF and BRM), we set the initial matrices $(M_0, N_0, C_0)$ to $10^3 I$ (the higher this value, the more negligible its effect on estimates\footnote{We observed that this parameter did not play a crucial role in practice.}). 
We run a first set of experiments in order to set all other parameters (eligibility factor and learning rates). We use the following schedule for the learning rates:
\begin{equation}
  \alpha_i = \alpha_0  \frac{\alpha_c}{\alpha_c + i} \text{ and } \beta_i = \beta_0 \frac{\beta_c}{\beta_c + i^{\frac{2}{3}}}.
\end{equation}
%The idea to optimize these meta-parameters is as follows: we fix a problem, choose the best parameters, and then use them in new randomly generated problems.
More precisely, we generate one problem (MDP and policy) for each possible combination small/big on-policy/off-policy (leading to four problems). For each problem, we generate 10 trajectories of length $10^4$ using the behaviorial policy (which is the randomly generated target policy in the on-policy case and the uniform policy in the off-policy case), to be used by all algorithms. For each meta-parameter, we consider the following ranges of values: $\lambda \in \{0, 0.4, 0.7, 0.9, 1\}$, $\alpha_0 \in \{10^{-2}, 10^{-1}, 10^0\}$, $\alpha_c \in \{10^{1}, 10^{2}, 10^3\}$, $\beta_0 \in \{10^{-2}, 10^{-1}, 10^0\}$ and $\beta_c \in \{10^{1}, 10^{2}, 10^3\}$. Then, we compute the parameter estimates considering all algorithms instantiated with each possible combination of the meta-parameters. This gives for each combination a family $\theta_{i,d}$ with $i$ the number of transitions encountered in the $d^\text{th}$ trajectory. Finally, for each problem and each algorithm, we choose the combination of meta-parameters which minimizes the average error on the second half of the learning curves (we do this to reduce the sensitivity to the initialization and the transient behavior). Formally, we pick the set of parameters that minimizes the following quantity:
\begin{equation}
  \text{err} = \frac{1}{10}  \sum_{d=1}^{10} \frac{1}{5.10^3} \sum_{i=5.10^3}^{10^4} \|\Phi\theta_{i,d} - V^\pi\|_2.
\end{equation}
We provide the empirical results of this first set of experiments in Table~\ref{tab:small_on} to~\ref{tab:big_off}. As a complement, we detail in Figure~\ref{fig:comparisonlambda} the sensitivity of all algorithms with respect to the \emph{main} parameter $\lambda$ that controls the eligibility traces. We comment these results below.

\begin{table}[tbh]
  \begin{center}
  \begin{tabular}{|l||r|r|r|r|r||r|}
    \hline
     &  $\lambda$ & $\alpha_0$ & $\alpha_c$ & $\beta_0$ & $\beta_c$ & err
    \\
    \hline \hline
    LSTD & 0.9 & ~ & ~ & ~ & ~ & 1.60 \\
    \hline
    LSPE & 0.9 & ~ & ~ & ~ & ~ & 1.60 \\
    \hline
    FPKF & 1 & ~ & ~ & ~ & ~ & 1.60 \\
    \hline
    BRM & 0.9 & ~ & ~ & ~ & ~ & 1.60 \\
    \hline \hline
    TD & 0 & $10^{-1}$ & $10^3$ & ~ & ~ & 1.76 \\
    \hline
    gBRM & 0.7 & $10^{-1}$ & $10^2$ & ~ & ~ & 1.68 \\
    \hline
    TDC & 0.9 & $10^{-1}$ & $10^3$ & $10^{-1}$ & $10^3$ & 2.11 \\
    \hline
    GTD2 & 0.7 & $10^{-1}$ & $10^3$ & $10^{-1}$ & $10^3$ & 1.69 \\
    \hline
  \end{tabular}
  \caption{Small problem ($\mathcal{G}(30, 4, 2, 8)$), on-policy learning ($\pi = \pi_0$).}
  \label{tab:small_on}
  \end{center}
\end{table}
Table~\ref{tab:small_on} shows the best meta-parameters for 10
trajectories of a single instance of a small Garnet problem in an
on-policy setting, as well as related efficiencies. Numerically, all
least-squares-based methods provide equivalent performance, with
similar choices of the eligibility factor (which is the only
meta-parameter). TD gets its best results with a small $\lambda$ (we
believe this is the case because the MDP has few states). The new gBRM
algorithm and GTD2 algorithm work well, but picking a higher value of
$\lambda$. Eventually, TDC performs slightly
worse. Figure~\ref{fig:comparisonlambda} (top, left) shows that the
choice of $\lambda$ does not matter, except for FPKF that requires
$\lambda=1$ to be efficient; with this value FPKF is almost identical
to LSPE(1) and LSTD(1) (cf. the discussion at the end of
Section~\ref{sec:fpkf}).

\begin{table}[tbh]
  \begin{center}
  \begin{tabular}{|l||r|r|r|r|r||r|}
    \hline
     &  $\lambda$ & $\alpha_0$ & $\alpha_c$ & $\beta_0$ & $\beta_c$ & err
    \\
    \hline \hline
    LSTD & 0.4 & ~ & ~ & ~ & ~ & 2.75 \\
    \hline
    LSPE & 0.7 & ~ & ~ & ~ & ~ & 2.75 \\
    \hline
    FPKF & 1 & ~ & ~ & ~ & ~ & 2.77 \\
    \hline
    BRM & 0.9 & ~ & ~ & ~ & ~ & 2.77 \\
    \hline \hline
    TD & 0.4 & $10^{-1}$ & $10^3$ & ~ & ~ & 3.95 \\
    \hline
    gBRM & 0.9 & $10^{-1}$ & $10^3$ & ~ & ~ & 4.45 \\
    \hline
    TDC & 0.9 & $10^{-1}$ & $10^3$ & $10^{-1}$ & $10^3$ & 6.31 \\
    \hline
    GTD2 & 0.9 & $10^{-2}$ & $10^3$ & $10^{-1}$ & $10^3$ & 11.90 \\
    \hline
  \end{tabular}
  \caption{Big problem ($\mathcal{G}(100, 10, 3, 20)$), on-policy learning ($\pi = \pi_0$).}
  \label{tab:big_on}
  \end{center}
\end{table}
Table~\ref{tab:big_on} shows the best meta-parameters for 10
trajectories of a single instance of a big Garnet problem in an
on-policy setting, as well as related performance.  These results are
consistent with those of the small problem, in the on-policy setting
(with slightly different meta-parameters). The main difference are
that GTD2 performs here the worse. Figure~\ref{fig:comparisonlambda}
(top, right) suggests that as the problem's size grows, the role of the
eligibity factor gets more prominent: most algorithms need a
relatively high value of $\lambda$ to perform the best.

\begin{figure}[tbh]
\begin{minipage}[c]{.49\linewidth}
\begin{center}
\includegraphics[width=\linewidth]{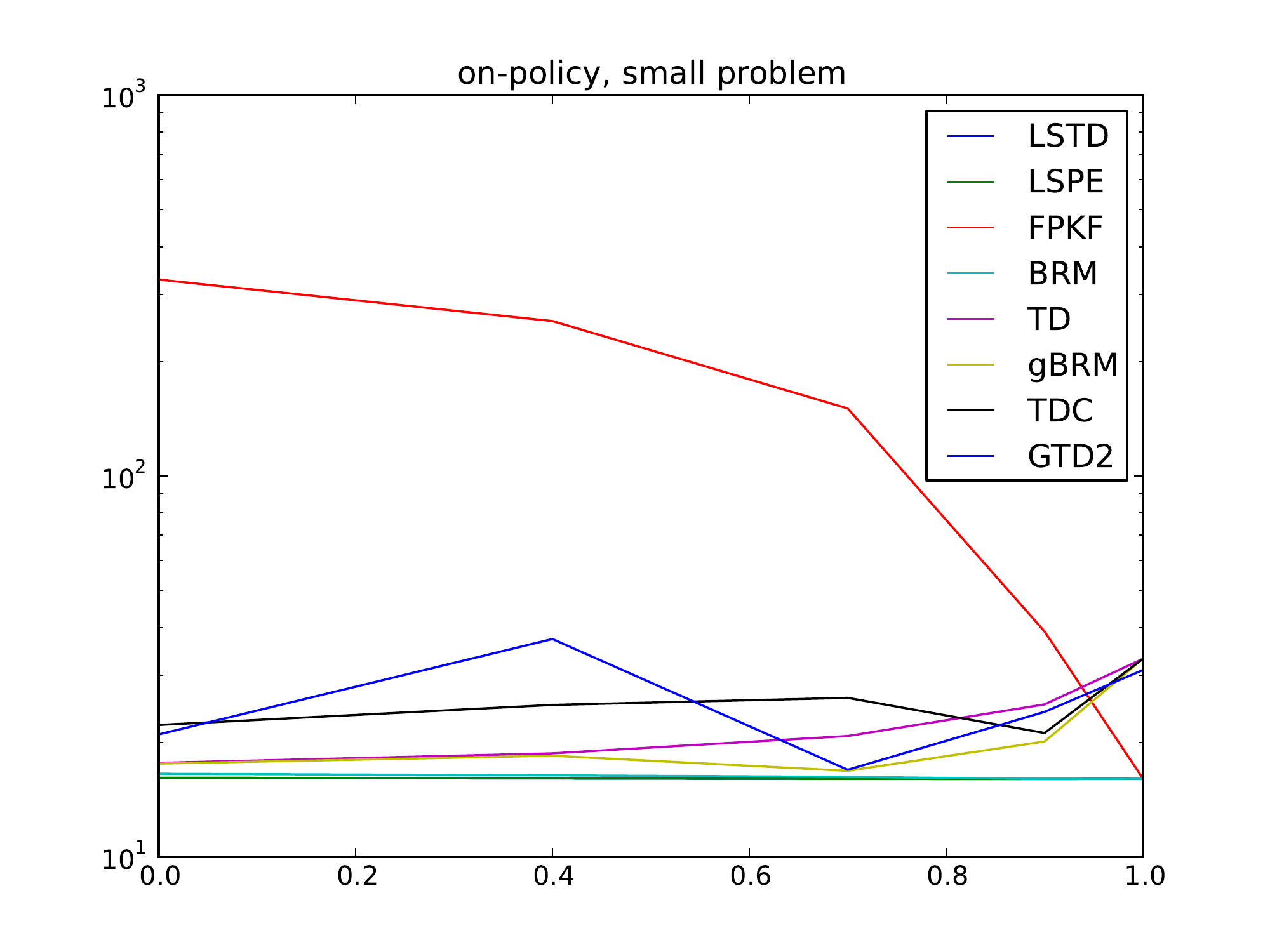}
\end{center}
\end{minipage}
\begin{minipage}[c]{.49\linewidth}
\begin{center}
\includegraphics[width=\linewidth]{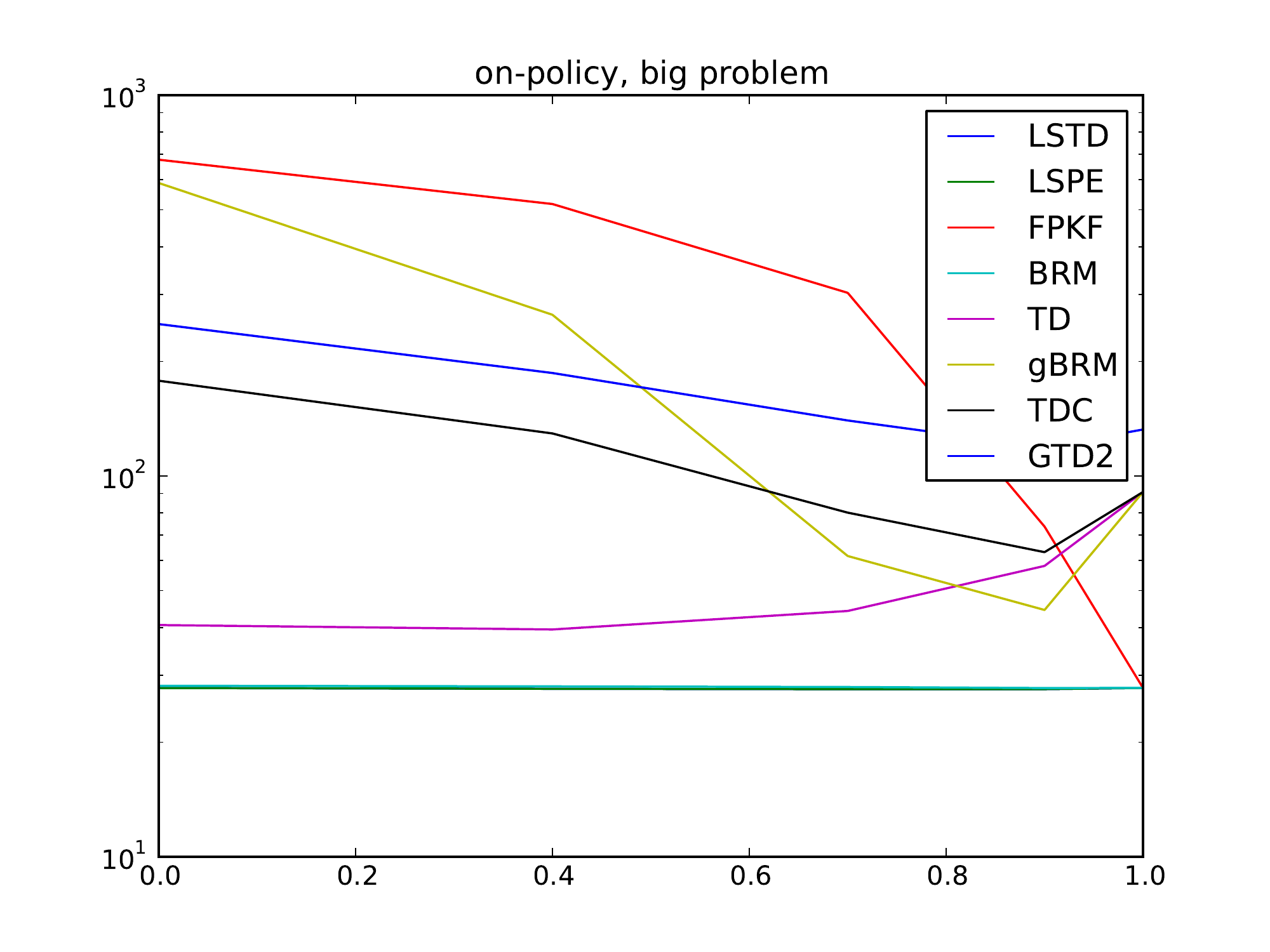}
\end{center}
\end{minipage}

\begin{minipage}[c]{.49\linewidth}
\begin{center}
\includegraphics[width=\linewidth]{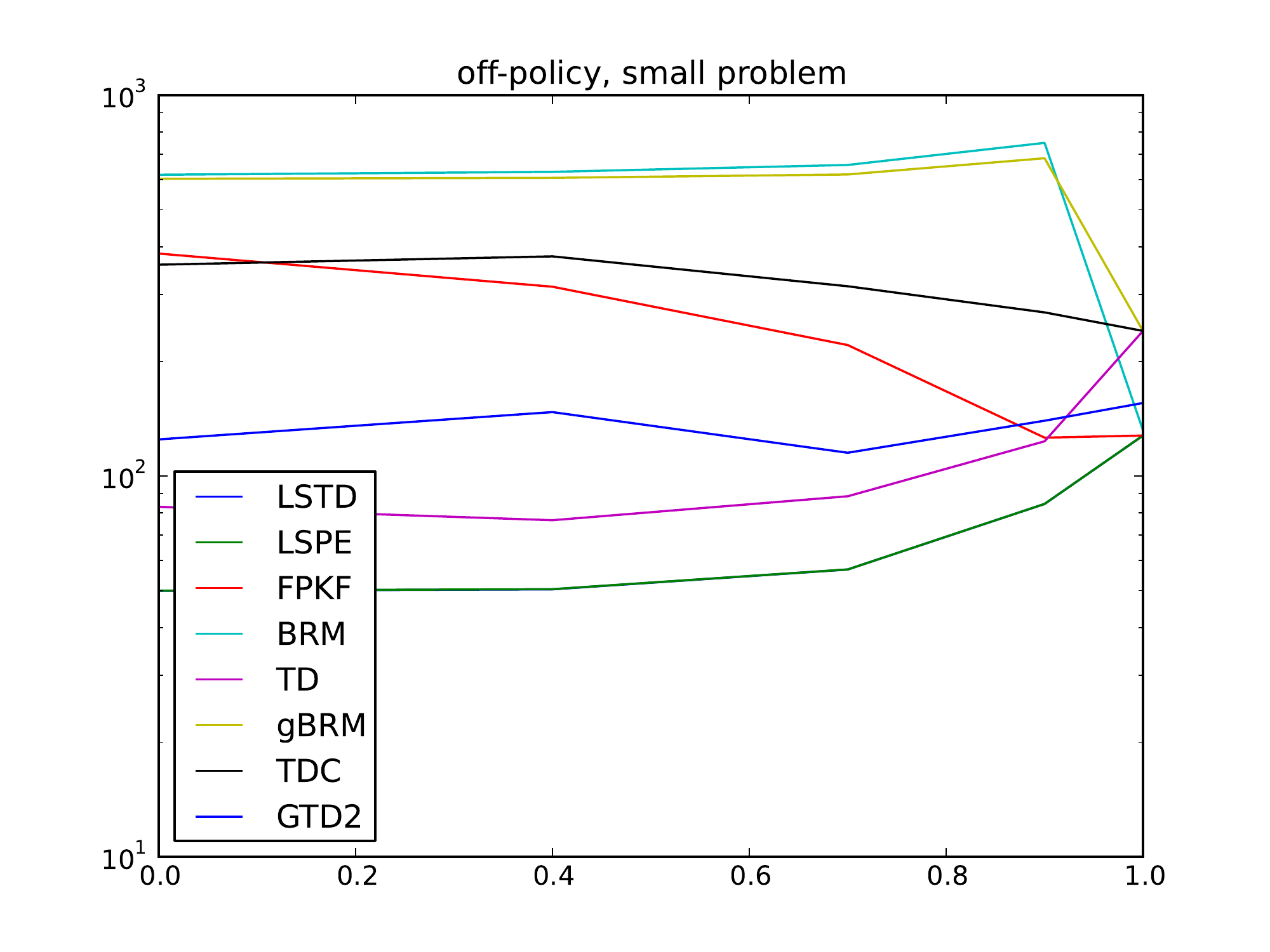}
\end{center}
\end{minipage}
\begin{minipage}[c]{.49\linewidth}
\begin{center}
\includegraphics[width=\linewidth]{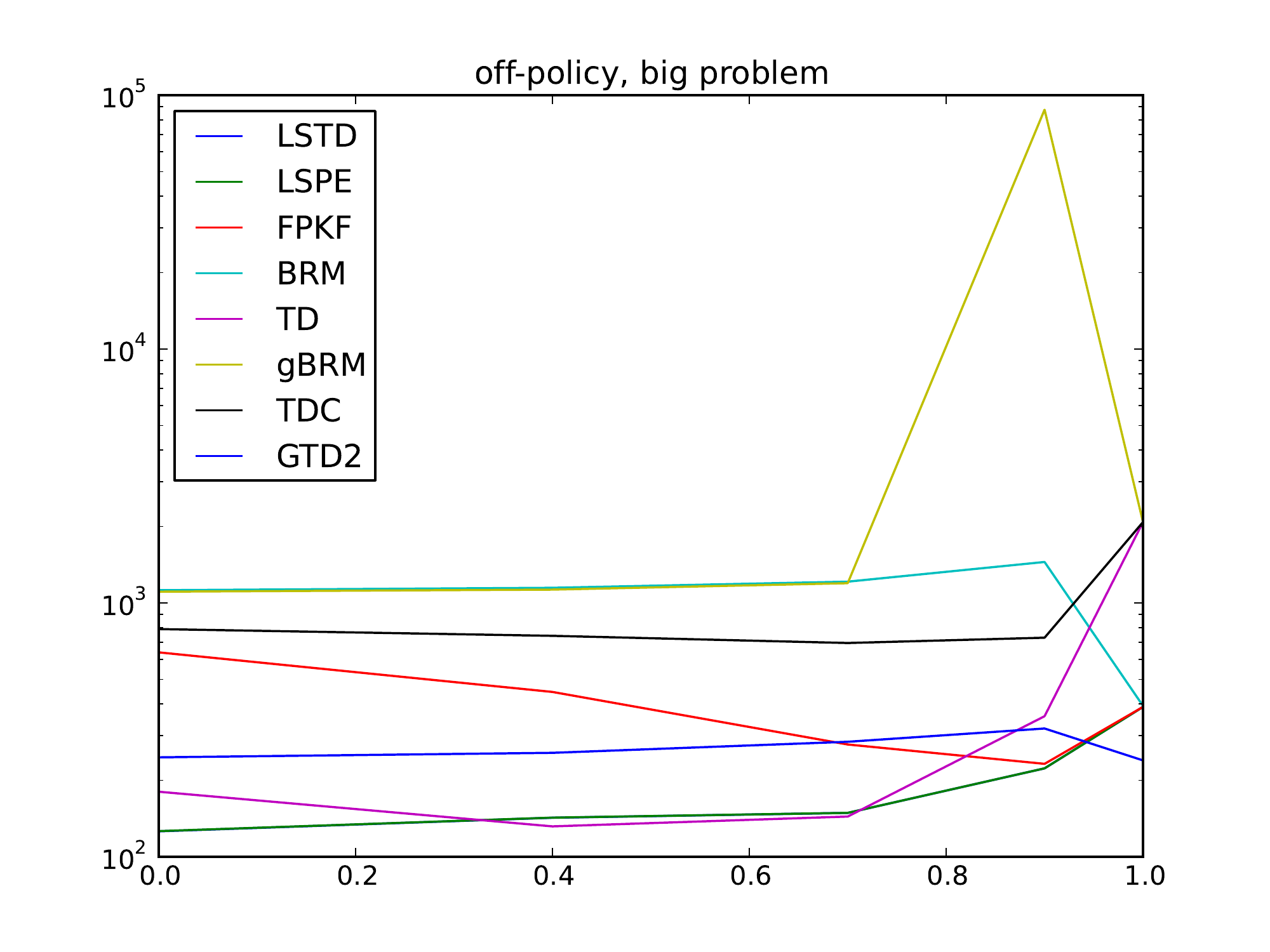}
\end{center}
\end{minipage}
\caption{Sensitivity of performance of the algorithms ($y$-axis, in logarithmic scale) with respect to the eligibility trace parameter $\lambda$ ($x$-axis). Left: Small problem ($\mathcal{G}(30, 4, 2, 8)$), right: Big problem ($\mathcal{G}(100, 10, 3, 20)$). Top: on-policy learning ($\pi = \pi_0$), bottom: off-policy learning ($\pi \neq \pi_0$).}
\label{fig:comparisonlambda}
\end{figure}

\begin{table}[tbh]
  \begin{center}
  \begin{tabular}{|l||r|r|r|r|r||r|}
    \hline
     &  $\lambda$ & $\alpha_0$ & $\alpha_c$ & $\beta_0$ & $\beta_c$ & err
    \\
    \hline \hline
    LSTD & 0 & ~ & ~ & ~ & ~ & 4.99 \\
    \hline
    LSPE & 0 & ~ & ~ & ~ & ~ & 5.00 \\
    \hline
    FPKF & 0.9 & ~ & ~ & ~ & ~ & 12.61 \\
    \hline
    BRM & 1 & ~ & ~ & ~ & ~ & 13.17 \\
    \hline \hline
    TD & 0.4 & $10^{-1}$ & $10^2$ & ~ & ~ & 7.65 \\
    \hline
    gBRM & 1 & $10^{-2}$ & $10^2$ & ~ & ~ & 24.04 \\
    \hline
    TDC & 1 & $10^{-2}$ & $10^2$ & $10^{-2}$ & $10^1$ &  24.04  \\
    \hline
    GTD2 & 0.7 & $10^{-1}$ & $10^3$ & $10^{-2}$ & $10^1$ & 11.51 \\
    \hline
  \end{tabular}
  \caption{Small problem ($\mathcal{G}(30, 4, 2, 8)$), off-policy learning ($\pi \neq \pi_0$).} 
  \label{tab:small_off}
  \end{center}
\end{table}
Table~\ref{tab:small_off} reports the best meta-parameters in an
off-policy setting for a small problem. In terms of performance,
Least-squares approaches are no longer equivalent. LSTD and LSPE get
the best results, with the smallest possible value $\lambda=0$: we
believe that this choice is due to the fact that higher eligibility
factor increases the variance due to importance sampling. FPKF and BRM
need large values of $\lambda$ to work well, and suffer much more from
the off-policy aspect. Figure~\ref{fig:comparisonlambda} (bottom,
left) suggests that FPKF and BRM need a high value of $\lambda$ (to
``catch'' the good performance of LSTD/LSPE) but then suffers from the
variance due to importance sampling. Regarding gradient-based methods,
TD's performance is good (it is close that of LSTD/LSPE), followed
closely by GTD2 (both being better than FPKF/BRM). TDC and gBRM lead
to the worse results; as both methods choose $\lambda=1$, they here
turn out to be equivalent to TD(1) (cf. the discussions after
Algorithms~\ref{algo:tdc} and \ref{algo:gbrm})\footnote{In particular,
one can observe that the performance of gBRM(1) and TDC(1) in
Table~\ref{tab:small_off} are numerically equal.}.  As for FPKF/BRM
with respect to LSTD/LSPE, Figure~\ref{fig:comparisonlambda} (bottom,
left) further suggests that TDC and gBRM need a high value of
$\lambda$ in order to get a reasonable performance, but then suffer
from the variance of importance sampling.
\begin{table}[tbh]
  \begin{center}
  \begin{tabular}{|l||r|r|r|r|r||r|}
    \hline
     &  $\lambda$ & $\alpha_0$ & $\alpha_c$ & $\beta_0$ & $\beta_c$ & err
    \\
    \hline \hline
    LSTD & 0 & ~ & ~ & ~ & ~ & 12.60 \\
    \hline
    LSPE & 0 & ~ & ~ & ~ & ~ & 12.62 \\
    \hline
    FPKF & 0.9 & ~ & ~ & ~ & ~ & 23.24 \\
    \hline
    BRM & 1 & ~ & ~ & ~ & ~ & 39.32 \\
    \hline \hline
    TD & 0.4 & $10^{-1}$ & $10^1$ & ~ & ~ & 13.18 \\
    \hline
    gBRM & 0 & $10^{-2}$ & $10^1$ & ~ & ~ & 110.74 \\
    \hline
    TDC & 0.7 & $10^{-2}$ & $10^3$ & $10^{-2}$ & $10^1$ & 69.53 \\
    \hline
    GTD2 & 1 & $10^{-2}$ & $10^1$ & $10^{-1}$ & $10^3$ & 23.97 \\
    \hline
  \end{tabular}
  \caption{Big problem ($\mathcal{G}(100, 10, 3, 20)$), off-policy learning ($\pi \neq \pi_0$).}
  \label{tab:big_off}
  \end{center}
\end{table}
Eventually, Table~\ref{tab:big_off} shows the meta-parameters and performance in the most difficult situation:  the off-policy setting of the big problem. These results are consistent with the off-policy results of the small problem, summarized in Table~\ref{tab:small_off}.  LSTD and LSPE are the most efficient algorithms and choose the smallest possible value $\lambda=0$. FPKF and BRM's performance deteriorate (significantly for the latter). TD behaves reasonably good (it is in particular much better than FPKF) and GTD2 follows closely. The performance of TDC and gBRM are the worse, the latter's by a significant amount. Figure~\ref{fig:comparisonlambda} (bottom, right) is similar to that of the small problem.

%These results seem to be consistent with the literature. However, the difference between GTD2 and TDC is quite surprising. In published experimental results~\citep{Sutton:2009a}, TDC is more efficient than GTD2, and perhaps that is why TDC is used as the basis of GQ($\lambda$), instead of TDC~\citep{Maei:2010}. However, these experiments are less general (a random walk and the Boyan chain), which can explain this difference.

The main goal of the series of experiments we have just described was to choose reasonable values for the meta-parameters. We have also used these experiments to quickly comment the relative performance of the algorithms, but this is not statistically significant as this was based on a single (random) problem. Though we will see that the general behavior of the algorithm is globally consistent with what we have seen so far, the  series of experiments that we are about to describe  aims at providing such a statistically significant performance comparison. For each situation (small and big problems, on- and off-policy), we fix the meta-parameters to the previously reported values and we compare the algorithms on several new instances of the problems. These results are reported on Figures.~\ref{fig:small_on} to~\ref{fig:big_off}. For each of the 4 problems, we randomly generate 100 instances (MDP and policy to be evaluated). For each such problem, we generate a trajectory of length $10^5$. Then, all algorithms learn using this very trajectory. On each figure, we report the average performance (left), measured as the difference between the true value function (computed from the model) and the currently estimated one, $\|V^\pi - \Phi \theta\|_2$, as well as the associated standard deviation (right).

\begin{figure}[tbh]
\begin{minipage}[c]{.49\linewidth}
\begin{center}
\includegraphics[width=\linewidth]{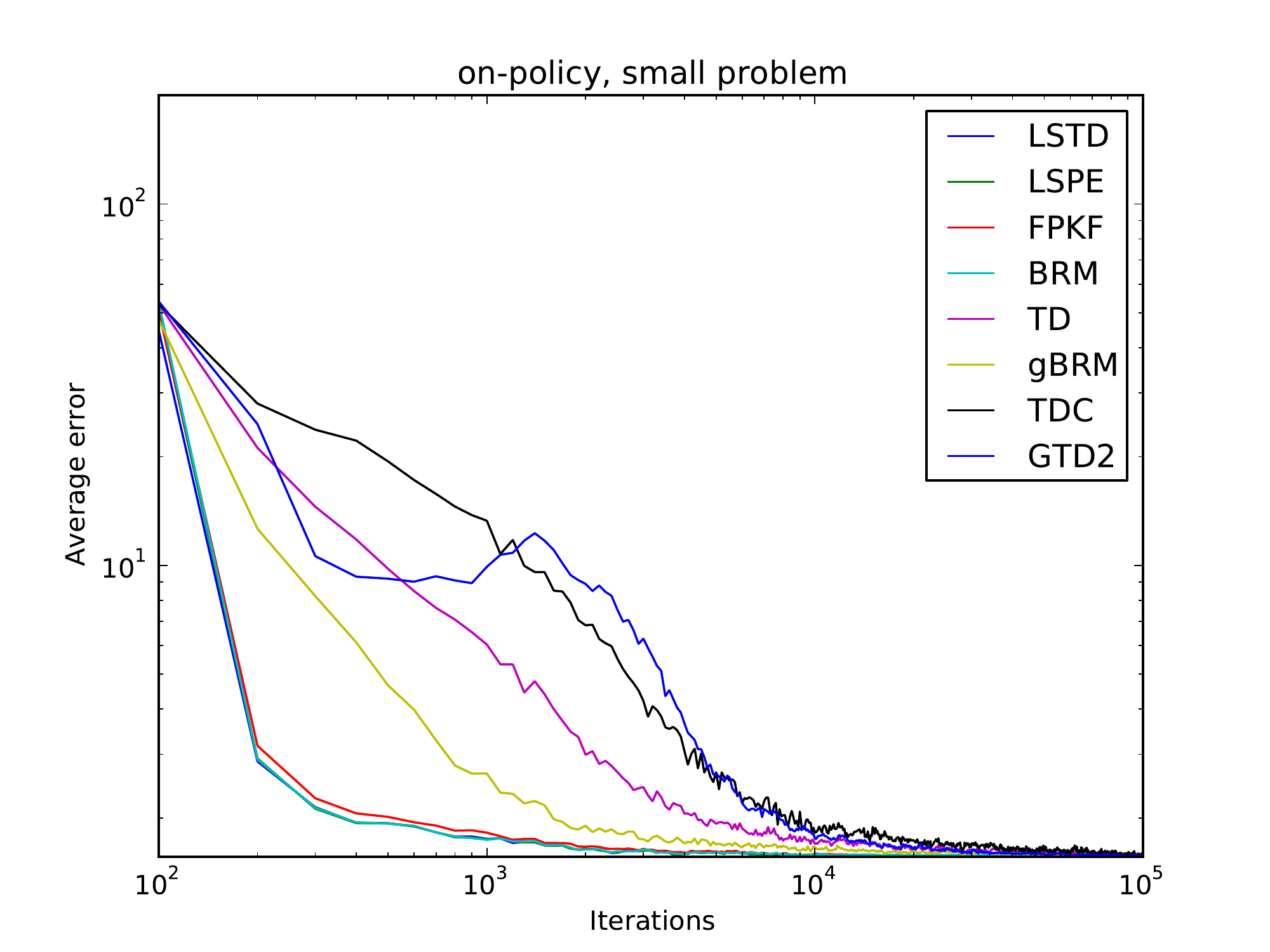}
\end{center}
\end{minipage}
\begin{minipage}[c]{.49\linewidth}
\begin{center}
\includegraphics[width=\linewidth]{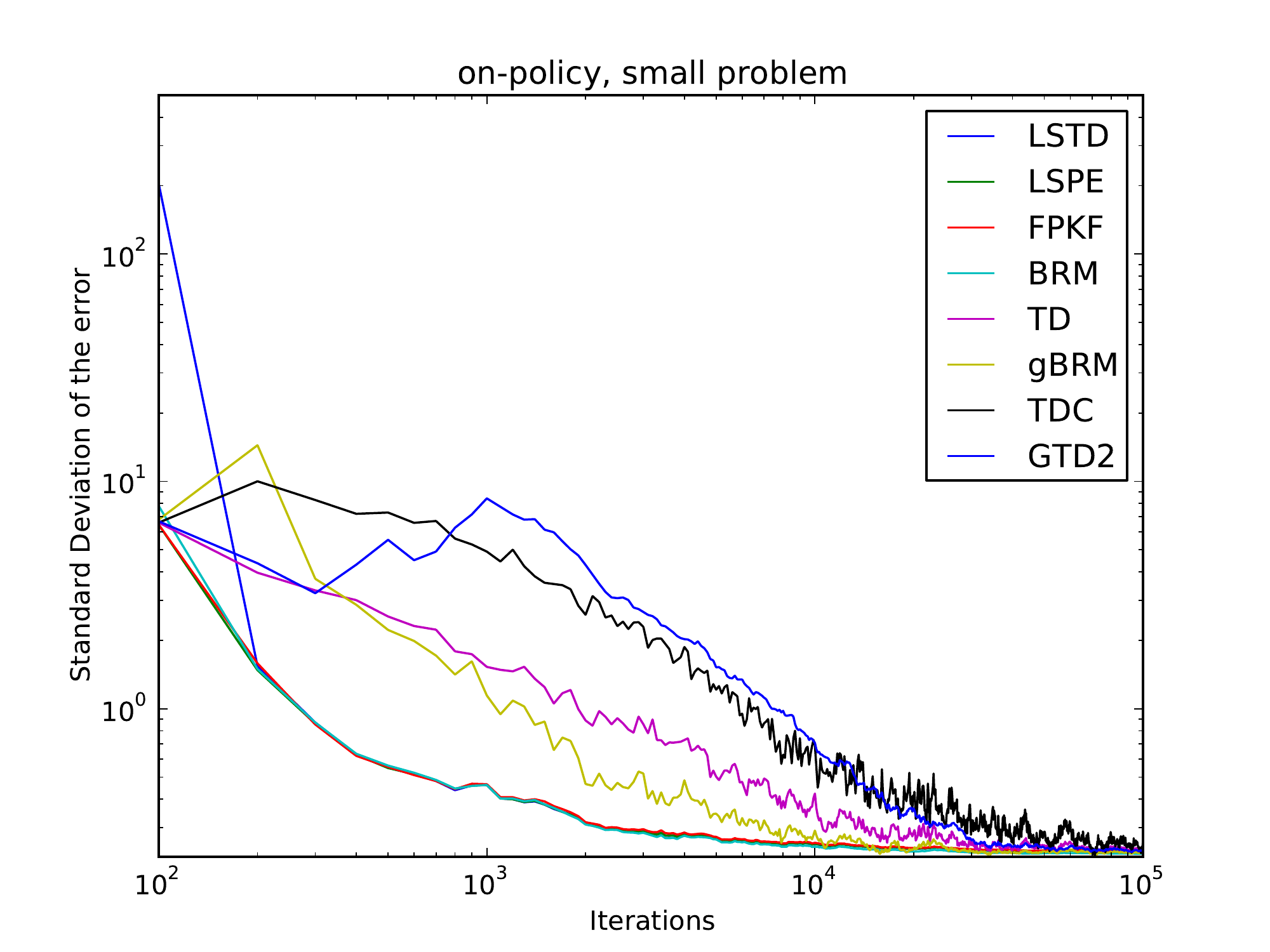}
\end{center}
\end{minipage}
\caption{Performance for small problems ($\mathcal{G}(30, 4, 2, 8)$), on-policy learning ($\pi = \pi_0$) (left: average error, right: standard deviation).}
\label{fig:small_on}
\end{figure}
\begin{figure}[tbh]
\begin{minipage}[c]{.49\linewidth}
\begin{center}
\includegraphics[width=\linewidth]{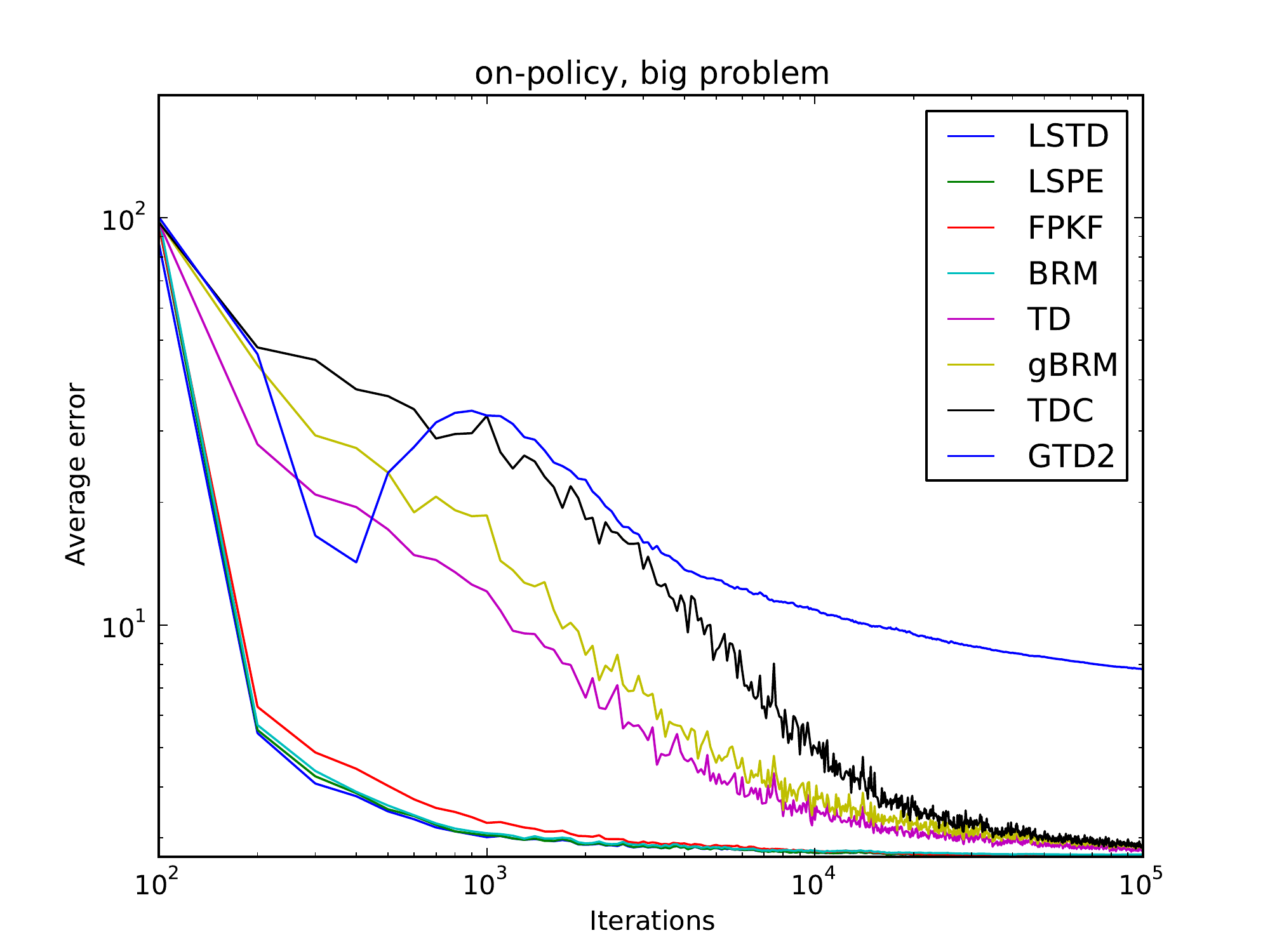}
\end{center}
\end{minipage}
\begin{minipage}[c]{.49\linewidth}
\begin{center}
\includegraphics[width=\linewidth]{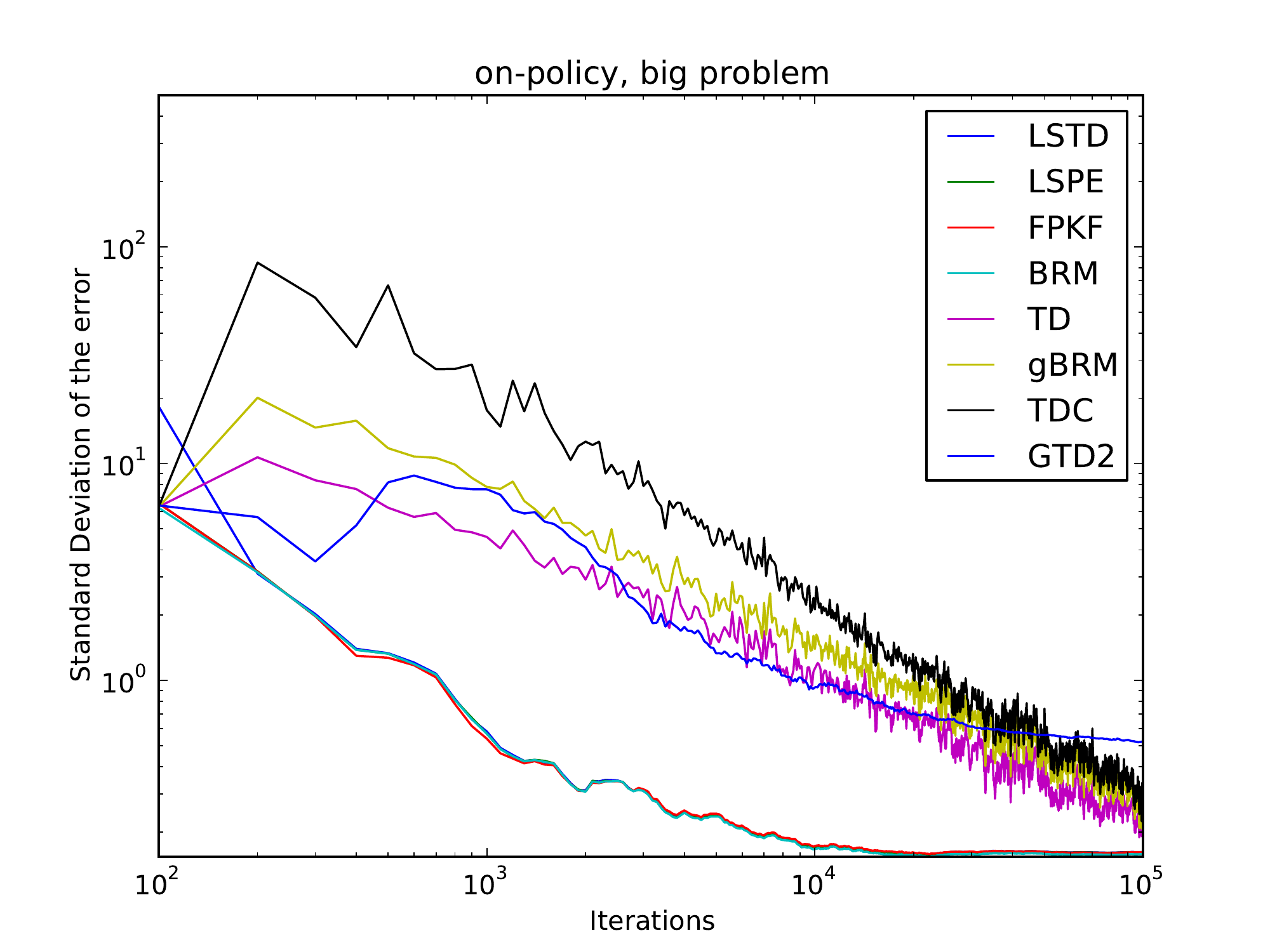}
\end{center}
\end{minipage}
\caption{Performance for big problems ($\mathcal{G}(100, 10, 3, 20)$), on-policy learning ($\pi = \pi_0$) (left: average error, right: standard deviation).}
\label{fig:big_on}
\end{figure}
We begin by discussing the results in the on-policy setting. 
Figure~\ref{fig:small_on} compares all algorithms  for 100 randomly generated small problems (that is, each run corresponds to different dynamics, reward function, features and evaluated policy), the meta-parameters being those provided in Table~\ref{tab:small_on}. All least-squares approaches provide the best results and are bunched together; this was to be expected, as all algorithms use $\lambda$ close to $1$. In these problems, gBRM works better than other gradient-based methods, followed by TD and GTD2/TDC. 
%Overall, these results are consistent with those of Table~\ref{tab:small_on}.
%
Figure~\ref{fig:big_on} compares the algorithms for 100 randomly generated big problems, the meta-parameters being those provided in Table~\ref{tab:big_on}. These result are similar to those of the small problem in an off-policy setting, except that TDC is now faster than GTD2,  that TD is slightly faster than gBRM. %Results are globally consistent with those described after Table~\ref{tab:big_on}.

\begin{figure}[tbh]
\begin{minipage}[c]{.49\linewidth}
\begin{center}
\includegraphics[width=\linewidth]{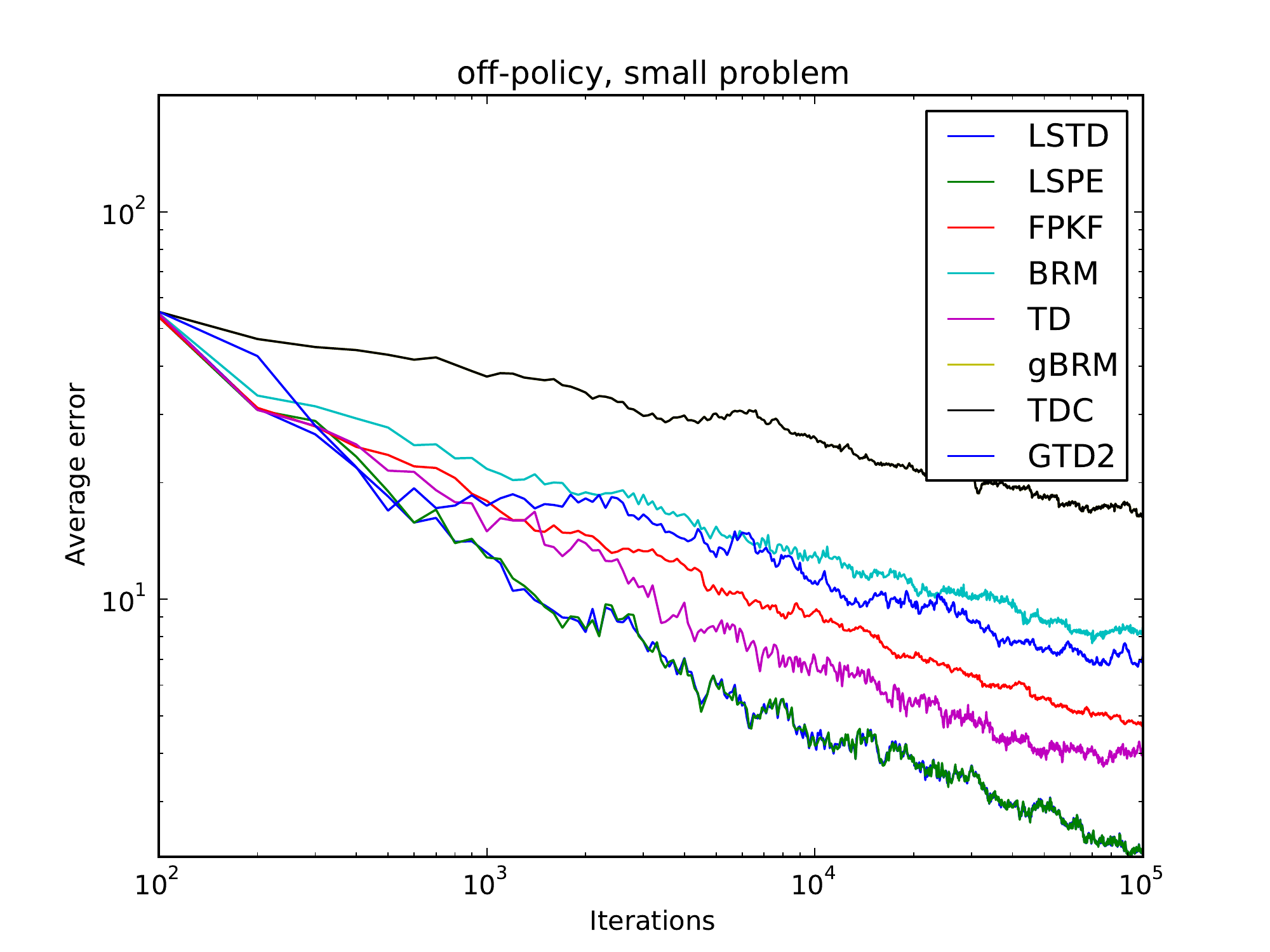}
\end{center}
\end{minipage}
\begin{minipage}[c]{.49\linewidth}
\begin{center}
\includegraphics[width=\linewidth]{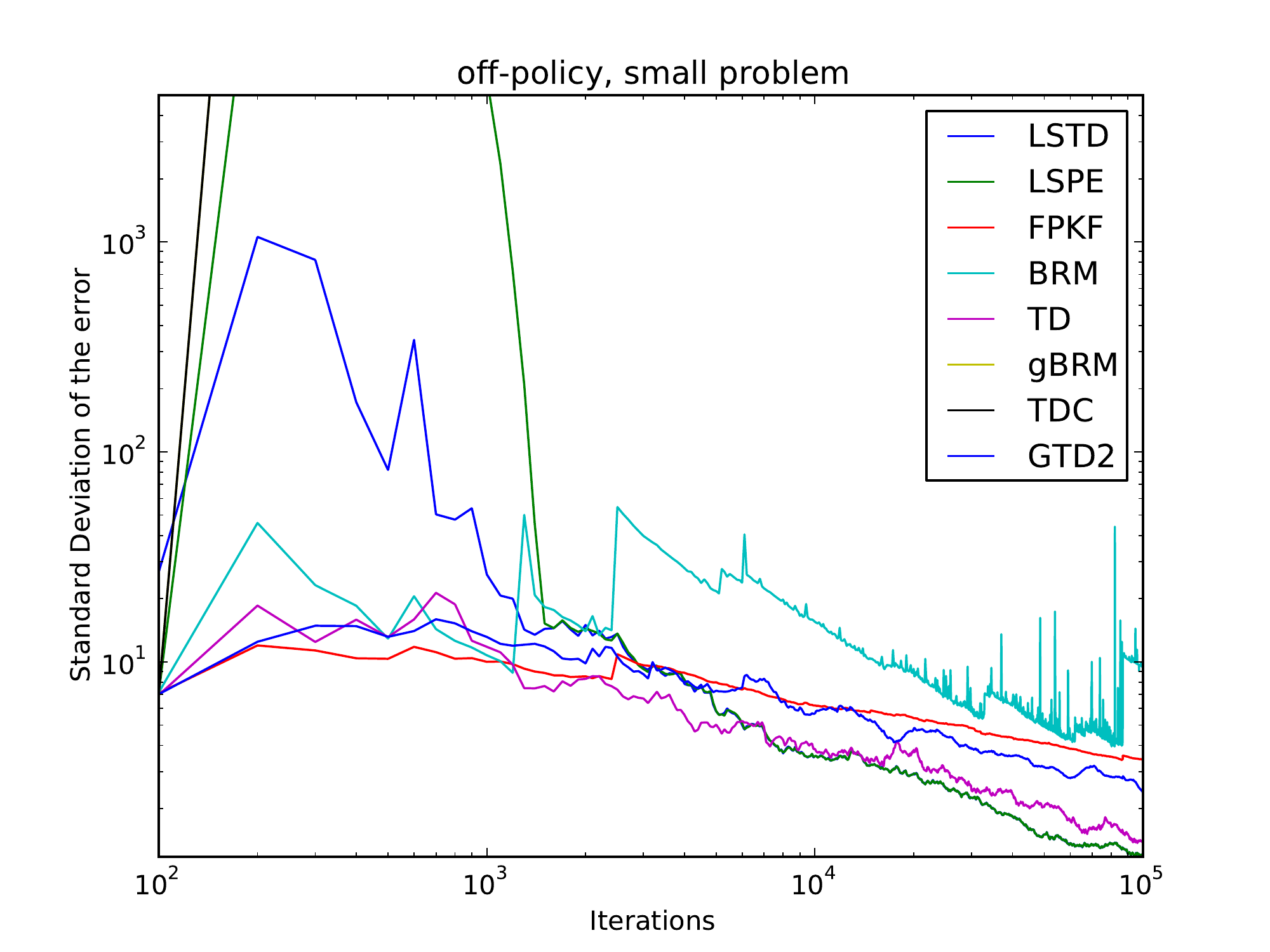}
\end{center}
\end{minipage}
\caption{Performance for small problems ($\mathcal{G}(30, 4, 2, 8)$), off-policy learning ($\pi \neq \pi_0$) (left: average error, right: standard deviation). }
\label{fig:small_off}
\end{figure}
\begin{figure}[tbh]
\begin{minipage}[c]{.49\linewidth}
\begin{center}
\includegraphics[width=\linewidth]{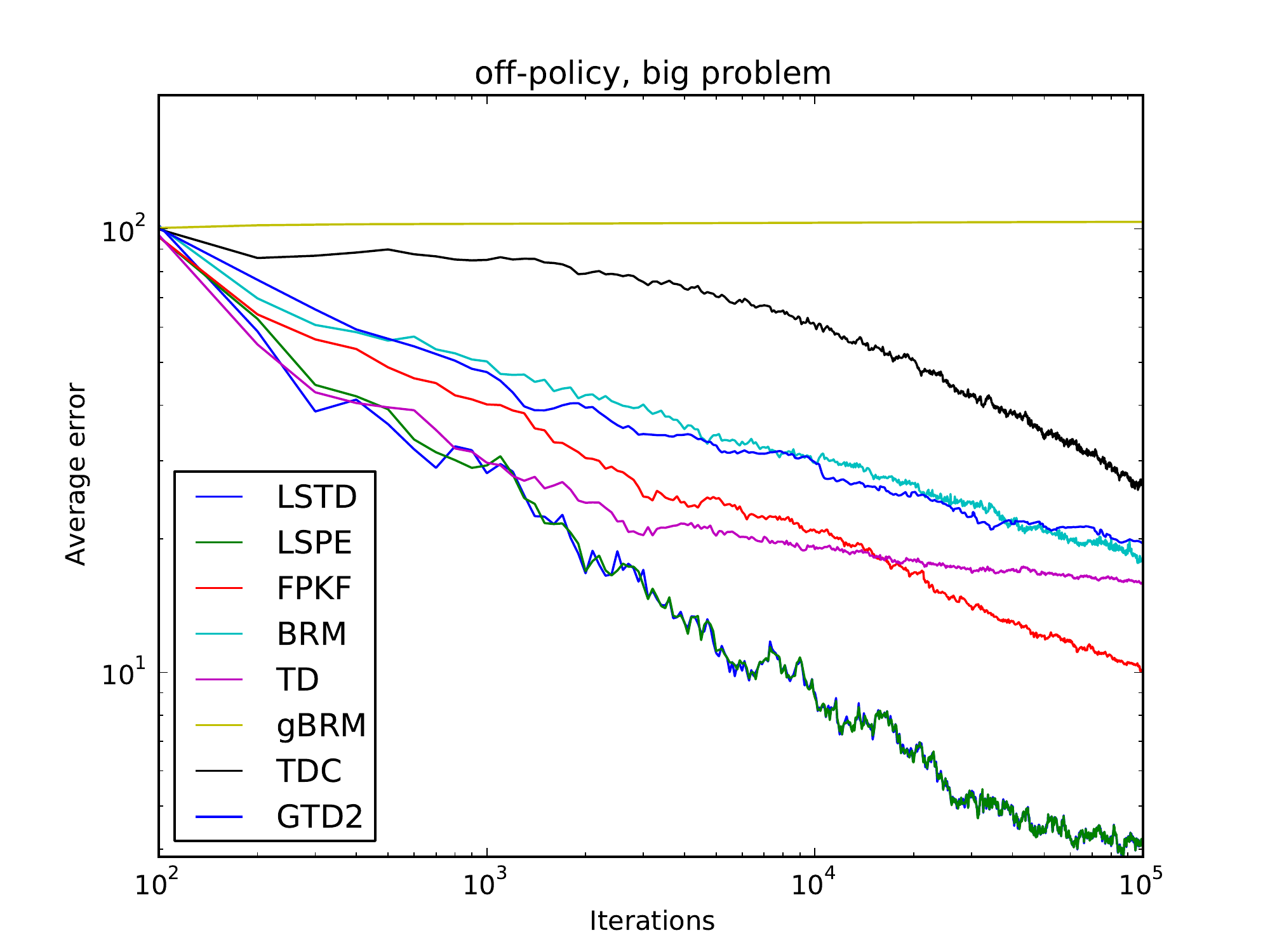}
\end{center}
\end{minipage}
\begin{minipage}[c]{.49\linewidth}
\begin{center}
\includegraphics[width=\linewidth]{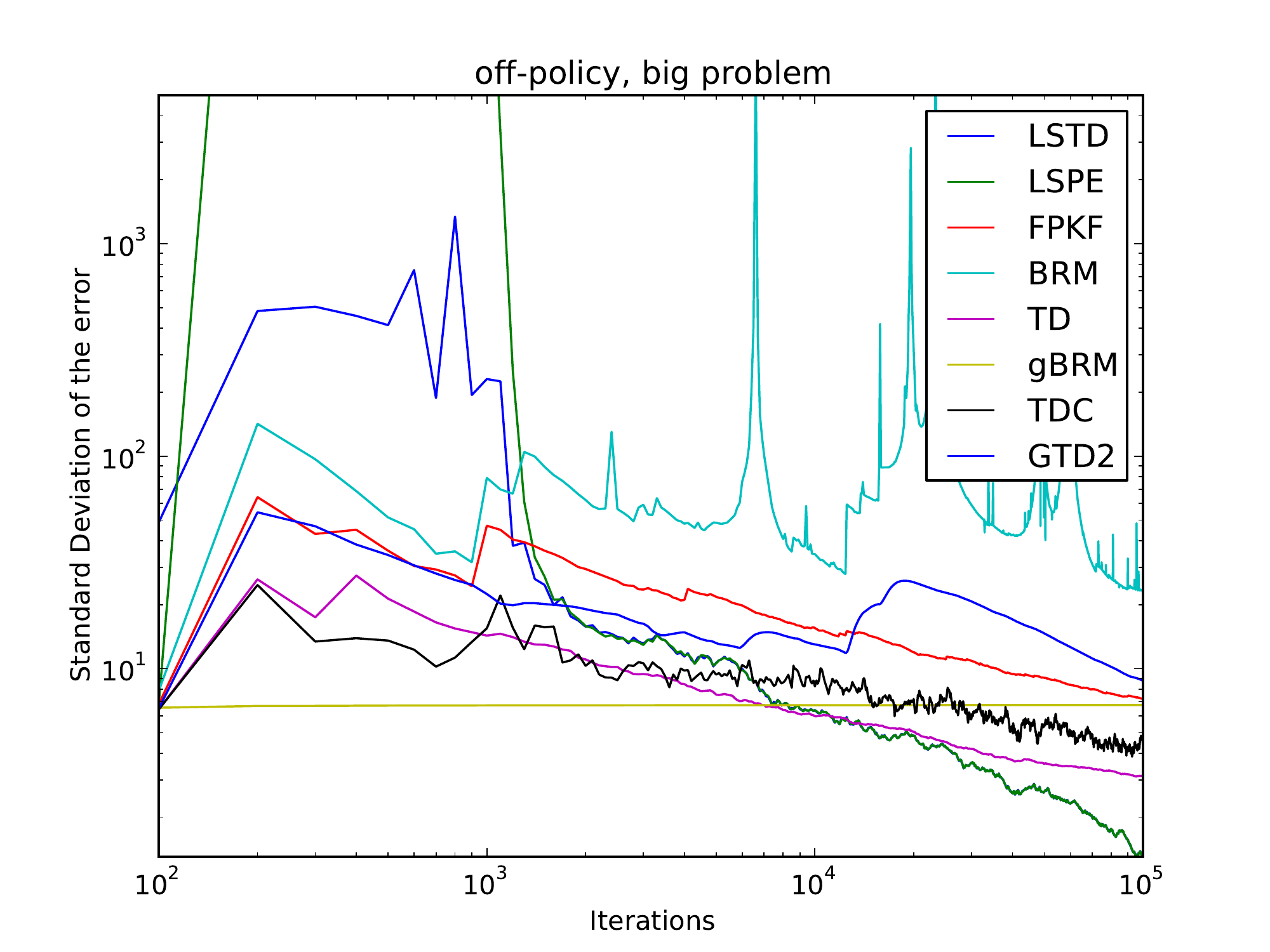}
\end{center}
\end{minipage}
\caption{Performance for big problems ($\mathcal{G}(100, 10, 3, 20)$), on-policy learning ($\pi \neq \pi_0$) (left: average error, right: standard deviation).}
\label{fig:big_off}
\end{figure}
We now consider the off-policy setting. Figure~\ref{fig:small_off}
provides the average performance and standard deviation of the
algorithms (meta-parameters being those of Table~\ref{tab:small_off})
on 100 small problems. Once again, we can see that LSTD/LSPE provide
the best results. The two other least-squares methods (FPKF and BRM)
are overtaken by the gradient-based TD algorithm. GTD2 is quite slow
(slower than TD) and TDC/gBRM (that are identical to TD(1) since they
both use $\lambda=1$) are the slowest algorithms.
Figure~\ref{fig:big_off} provides the same data for the big problems
(with the meta-parameters of Table~\ref{tab:big_off}). These results
are similar to those of the small problems in an off-policy
setting. The main differences are 1) that FPKF appears to converge
faster than TD but with a bigger standard deviation, and 2) gBRM (that
uses $\lambda=0$) does not work anymore (it is here equivalent to the
standard algorithm by \citep{Baird:1995} and probably suffers from the
well-known associated bias issue).

\paragraph{Summary}
Overall, our experiments suggest that the two best algorithms are
LSTD/LSPE, since they converge much faster in all situations. The
gradient-based TD algorithm globally diplays a good behavior and
constitutes a good alternative when the number $p$ of features is too
big for least-squares methods to be implemented.  Though some new
algorithms/extensions show interesting results (FPKF($\lambda$) is
consistently better that the state-of-the-art FPKF
by \cite{Choi:2006}, gBRM works well in the on-policy setting) most of
the other algorithms do not seem to be empirically competitive with
the trio LSTD/LSPE/TD, especially in off-policy situations. In
particular, the algorithm introduced specifically for the off-policy
setting (TDC/GTD2) are much slower than TD. Moreover, the condition
required for the good behavior of LSPE, FPKF and TD -- the contraction
of $\Pi_0 T^\lambda$ -- does not seem to be very restrictive in
practice (at least for the Garnet problems we considered): though it
is possible to build specific pathological examples where these
algorithms diverge\footnote{A preliminary version of this
article~\citep{scherrer:2011} contains such examples, and also an
example where an adverserial choice of $\lambda$ leads to the
divergence of LSTD($\lambda$).}, this never happened in our
experiments.

\section{Conclusion and Future Work}
\label{sec:conclusion}

We have considered least-squares and gradient-based algorithms for
value estimation in an MDP context.  Starting from the on-policy case
with no trace, we have recalled that several algorithms (LSTD, LSPE,
FPKF and BRM for least-squares approaches, TD, gBRM and
TDC/GTD2 for gradient-based approaches) fall in a common algorithmic
pattern (\eqrefb{unify1}). Substituting the original Bellman operator
by an operator that deals with traces and off-policy samples naturally
leads to the state-of-the-art off-policy trace-based versions of LSTD,
LSPE, TD and TDC, and suggests natural extensions of FPKF, BRM, gBRM
and GTD2. This way, we surveyed many known and new off-policy
eligibility traces-based algorithms for policy evaluation. 

We have explained how to derive recursive (memory and time-efficient)
implementations of all these algorithms and discussed their known
convergence properties (including an original analysis of
BRM($\lambda$) for sufficiently small $\lambda$, that implies the so
far not known convergence of GPTD/KTD). Interestingly, it appears that
the analysis of off-policy traces-based stochastic gradient algorithms
under mild assumptions is still an open problem: the only currently
known analysis of TD~\citep{Yu:2010} only applies to a constrained
version of the algorithm, and that of TDC~\citep{Maei:2010} relies on
an assumption on the boundedness of the second moment traces that is
restrictive~\citep{Yu:2010}. Filling this theoretical gap, as well as
providing complete analyses for the other gradient algorithms and
FPFK($\lambda$) and BRM($\lambda$) constitute important future work.

Finally, we have illustrated and compared the behavior of these
algorithms; this constitutes the first exhaustive empirical
comparison of linear methods\footnote{To our knowledge, there does not
even exist any work reporting and comparing empirical results of
LSTD(0) and FPKF(0).}.  Overall, our study suggests that even if the
use of eligibility traces generally improves the efficiency of all
algorithms, LSTD and LSPE consistently provide the best estimates; and
in situations where the computational cost is prohibitive for a
least-squares approach (when the number $p$ of features is large), TD
probably constitutes the best alternative.

% (or GTD2 to have theoretical guarantees, at the cost of a lower efficiency and a harder tuning of meta-parameters).

%We have considered least-squares algorithms for value estimation in an
%MDP context.  Starting from the on-policy case with no trace, we have
%recalled that several algorithms (LSTD, LSPE, FPKF and BRM/GPTD/KTD)
%optimize similar cost functions. Substituting the original Bellman
%operator by an operator that deals with traces and off-policy samples
%leads to the state-of-the-art off-policy trace-based versions of LSTD
%and LSPE, and suggests natural extensions of FPKF and BRM.
%
%We have described recursive implementations of these algorithms and
%discussed their convergence properties. In particular, we have
%provided an original convergence analysis of BRM($\lambda$) for
%sufficiently small $\lambda$, that implies the (so far not known)
%convergence of GPTD/KTD. Finally, we illustrated the behavior of
%these algorithms; to our knowledge, this constitutes the first
%comparison involving all these least-squares algorithms.
%
%Overall, our study suggests that even if the use of eligibility
%traces generally improves the efficiency of the algorithms,
%LSTD($\lambda$) and LSPE($\lambda$) remain in general better than
%FPKF($\lambda$) (that is much slower) and BRM($\lambda$) (which may
%suffer from high bias).  Furthermore, since LSPE($\lambda$) requires
%more conditions for stability, LSTD($\lambda$) probably remains the
%best choice in practice.

\bibliography{biblio}

\begin{thebibliography}{}

\bibitem[Antos {\em et~al.}(2006)Antos, Szepesv{\'a}ri, and Munos]{antos2006}
Antos, A., Szepesv{\'a}ri, C., and Munos, R. (2006).
\newblock {Learning Near-optimal Policies with {B}ellman-residual Minimization
  based Fitted Policy Iteration and a Single Sample Path}.
\newblock In {\em COLT\/}.

\bibitem[Archibald {\em et~al.}(1995)Archibald, McKinnon, and
  Thomas]{Archibald:95}
Archibald, T., McKinnon, K., and Thomas, L. (1995).
\newblock {On the Generation of Markov Decision Processes}.
\newblock {\em Journal of the Operational Research Society\/}, {\bf 46},
  354--361.

\bibitem[Baird(1995)Baird]{Baird:1995}
Baird, L.~C. (1995).
\newblock {Residual Algorithms: Reinforcement Learning with Function
  Approximation}.
\newblock In {\em ICML\/}.

\bibitem[Bertsekas and Ioffe(1996)Bertsekas and Ioffe]{ioffe}
Bertsekas, D. and Ioffe, S. (1996).
\newblock Temporal differences-based policy iteration and applications i n
  neuro-dynamic programming.
\newblock Technical report, MIT.

\bibitem[Bertsekas and Tsitsiklis(1996)Bertsekas and
  Tsitsiklis]{Bertsekas:1996}
Bertsekas, D.~P. and Tsitsiklis, J.~N. (1996).
\newblock {\em {Neuro-Dynamic Programming}\/}.
\newblock {Athena Scientific}.

\bibitem[Bertsekas and Yu(2009a)Bertsekas and Yu]{bertsekas:09proj}
Bertsekas, D.~P. and Yu, H. (2009a).
\newblock Projected equation methods for approximate solution of large linear
  systems.
\newblock {\em Journal of Computational and Applied Mathematics\/}, {\bf 227},
  27--50.

\bibitem[Bertsekas and Yu(2009b)Bertsekas and Yu]{Bertsekas:2009}
Bertsekas, D.~P. and Yu, H. (2009b).
\newblock {Projected Equation Methods for Approximate Solution of Large Linear
  Systems}.
\newblock {\em J. Comp. and Applied Mathematics\/}, {\bf 227}(1), 27--50.

\bibitem[Bottou and Bousquet(2011)Bottou and Bousquet]{bottou-bousquet-2011}
Bottou, L. and Bousquet, O. (2011).
\newblock The tradeoffs of large scale learning.
\newblock In S.~Sra, S.~Nowozin, and S.~J. Wright, editors, {\em Optimization
  for Machine Learning\/}, pages 351--368. MIT Press.

\bibitem[Bradtke and Barto(1996)Bradtke and Barto]{Bradtke:1996}
Bradtke, S.~J. and Barto, A.~G. (1996).
\newblock {Linear Least-Squares algorithms for temporal difference learning}.
\newblock {\em Machine Learning\/}, {\bf 22}(1-3), 33--57.

\bibitem[Choi and Van~Roy(2006)Choi and Van~Roy]{Choi:2006}
Choi, D. and Van~Roy, B. (2006).
\newblock {A Generalized Kalman Filter for Fixed Point Approximation and
  Efficient Temporal-Difference Learning}.
\newblock {\em DEDS\/}, {\bf 16}, 207--239.

\bibitem[Engel(2005)Engel]{Engel:2005}
Engel, Y. (2005).
\newblock {\em {Algorithms and Representations for Reinforcement Learning}\/}.
\newblock Ph.D. thesis, Hebrew University.

\bibitem[Geist and Pietquin(2010a)Geist and Pietquin]{Supelec637}
Geist, M. and Pietquin, O. (2010a).
\newblock {Eligibility Traces through Colored Noises}.
\newblock In {\em {ICUMT}\/}.

\bibitem[Geist and Pietquin(2010b)Geist and Pietquin]{geist:jair}
Geist, M. and Pietquin, O. (2010b).
\newblock {Kalman Temporal Differences}.
\newblock {\em JAIR\/}, {\bf 39}, 483--532.

\bibitem[Geist and Pietquin(2013)Geist and Pietquin]{geist:vfa}
Geist, M. and Pietquin, O. (2013).
\newblock {An Algorithmic Survey of Parametric Value Function Approximation}.
\newblock {\em IEEE Transactions on Neural Networks and Learning Systems\/}.

\bibitem[Kearns and Singh(2000)Kearns and Singh]{kearns:2000}
Kearns, M. and Singh, S. (2000).
\newblock {Bias-Variance Error Bounds for Temporal Difference Updates}.
\newblock In {\em COLT\/}.

\bibitem[Kolter(2011)Kolter]{Kolter:2011}
Kolter, J.~Z. (2011).
\newblock {The Fixed Ponts of Off-Policy TD}.
\newblock In {\em Neural Information Processing Systems (NIPS)\/}.

\bibitem[Maei and Sutton(2010)Maei and Sutton]{Maei:2010}
Maei, H.~R. and Sutton, R.~S. (2010).
\newblock {GQ($\lambda$): A general gradient algorithm for temporal-difference
  prediction learning with eligibility traces}.
\newblock In {\em Conference on Artificial General Intelligence\/}.

\bibitem[Munos(2003)Munos]{Munos:2003}
Munos, R. (2003).
\newblock {Error Bounds for Approximate Policy Iteration}.
\newblock In {\em ICML\/}.

\bibitem[Nedi\'c and Bertsekas(2003)Nedi\'c and Bertsekas]{Nedic:2003}
Nedi\'c, A. and Bertsekas, D.~P. (2003).
\newblock {Least Squares Policy Evaluation Algorithms with Linear Function
  Approximation}.
\newblock {\em DEDS\/}, {\bf 13}, 79--110.

\bibitem[Precup {\em et~al.}(2000)Precup, Sutton, and Singh]{Precup:2000}
Precup, D., Sutton, R.~S., and Singh, S.~P. (2000).
\newblock {Eligibility Traces for Off-Policy Policy Evaluation}.
\newblock In {\em ICML\/}.

\bibitem[Precup {\em et~al.}(2001)Precup, Sutton, and Dasgupta]{Precup:2001}
Precup, D., Sutton, R.~S., and Dasgupta, S. (2001).
\newblock {Off-Policy Temporal-Difference Learning with Function
  Approximation}.
\newblock In {\em Proceedings of the 18th International Conference on Machine
  Learning\/}.

\bibitem[Randhawa and Juneja(2004)Randhawa and Juneja]{Randhawa}
Randhawa, R.~S. and Juneja, S. (2004).
\newblock {Combining importance sampling and temporal difference control
  variates to simulate Markov chains}.
\newblock {\em ACM Trans. Modeling and Computer Simulation\/}, {\bf 14}(1),
  1--30.

\bibitem[Ripley(1987)Ripley]{Ripley:87}
Ripley, B.~D. (1987).
\newblock {\em {Stochastic Simulation}\/}.
\newblock Wiley \& Sons.

\bibitem[Scherrer(2010)Scherrer]{scherrer:2010}
Scherrer, B. (2010).
\newblock {Should one compute the Temporal Difference fix point or minimize the
  Bellman Residual? The unified oblique projection view}.
\newblock In {\em ICML\/}.

\bibitem[Scherrer and Geist(2011)Scherrer and Geist]{scherrer:2011}
Scherrer, B. and Geist, M. (2011).
\newblock {Recursive Least-Squares Learning with Eligibility Traces}.
\newblock In {\em {European Wrokshop on Reinforcement Learning (EWRL 11)}\/},
  Athens, Greece.

\bibitem[Schoknecht(2002)Schoknecht]{Schoknecht:2002}
Schoknecht, R. (2002).
\newblock {Optimality of Reinforcement Learning Algorithms with Linear Function
  Approximation}.
\newblock In {\em NIPS\/}.

\bibitem[Sutton and Barto(1998)Sutton and Barto]{Sutton:1998}
Sutton, R.~S. and Barto, A.~G. (1998).
\newblock {\em {Reinforcement Learning: An Introduction (Adaptive Computation
  and Machine Learning)}\/}.
\newblock MIT Press, 3rd edition.

\bibitem[Sutton {\em et~al.}(2009)Sutton, Maei, Precup, Bhatnagar, Silver,
  Szepesv\'{a}ri, and Wiewiora]{Sutton:2009a}
Sutton, R.~S., Maei, H.~R., Precup, D., Bhatnagar, S., Silver, D.,
  Szepesv\'{a}ri, C., and Wiewiora, E. (2009).
\newblock Fast gradient-descent methods for temporal-difference learning with
  linear function approximation.
\newblock In {\em ICML '09: Proceedings of the 26th Annual International
  Conference on Machine Learning\/}.

\bibitem[Tsitsiklis and Van~Roy(1997)Tsitsiklis and Van~Roy]{vanroy:1997}
Tsitsiklis, J. and Van~Roy, B. (1997).
\newblock {An analysis of temporal-difference learning with function
  approximation}.
\newblock {\em IEEE Transactions on Automatic Control\/}, {\bf 42}(5),
  674--690.

\bibitem[Yu(2010a)Yu]{Yu:2010}
Yu, H. (2010a).
\newblock {Convergence of Least-Squares Temporal Difference Methods under
  General Conditions}.
\newblock In {\em ICML\/}.

\bibitem[Yu(2010b)Yu]{yu:2010tech}
Yu, H. (2010b).
\newblock {Least Squares Temporal Difference Methods: An Analysis Under General
  Condtions}.
\newblock Technical Report C-2010-39, University of Helsinki.

\bibitem[Yu and Bertsekas(2008)Yu and Bertsekas]{yu:2008}
Yu, H. and Bertsekas, D. (2008).
\newblock {New Error Bounds for Approximations from Projected Linear
  Equations}.
\newblock Technical Report C-2008-43, Dept. Computer Science, Univ. of
  Helsinki.

\end{thebibliography}

%%%%%%%%%%%%%%%%%%%%%%%%%%%%%%%%%%%%%%%%%%%%%%%%%%%%%%%%%%%%%%%%%%%%%%%%%%%%% APPENDIX

\newpage

\appendix

\section{Derivation of the recursive formulae for BRM($\lambda$)}
\label{brm:derivation}

We here detail the derivation of off-policy BRM($\lambda$). We will need two technical lemmata.
The first one is the Woodbury matrix identity which generalizes the
Sherman-Morrison formula (given in Lemma~\ref{lemma:sm}).
\begin{lemma}[Woodbury]\label{lemma:wmi}
  Let $A$, $U$, $C$ and $V$ be matrices of correct sizes, then:
  \begin{equation}
    (A + U C V)^{-1} = A^{-1} - A^{-1} U(C^{-1} + V A^{-1} U)^{-1} V
    A^{-1}
  \end{equation}
\end{lemma}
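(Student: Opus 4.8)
The plan is to prove the identity by \emph{direct verification}: I will exhibit the right-hand side as a genuine two-sided inverse of $A + U C V$ by multiplying the two matrices together and checking that the product is the identity. This sidesteps any existence argument and makes explicit the implicit hypotheses under which the statement even makes sense, namely that $A$, $C$, and the ``capacitance'' matrix
\begin{equation}
S = C^{-1} + V A^{-1} U
\end{equation}
are all invertible (so that $B := A^{-1} - A^{-1} U S^{-1} V A^{-1}$, the claimed inverse, is well defined). Here $A$ is $n\times n$, $U$ is $n\times k$, $C$ is $k\times k$, $V$ is $k\times n$, so $S$ is $k\times k$ and $A + UCV$ is again $n \times n$.

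First I would distribute the product $(A + UCV)B$ and use $AA^{-1}=I$ to isolate the identity term:
\begin{align}
(A + UCV)B &= I - U S^{-1} V A^{-1} + UCV A^{-1} - UCV A^{-1} U S^{-1} V A^{-1}.
\end{align}
The three non-identity terms all carry a factor $U$ on the left and a factor $V A^{-1}$ on the right, so I would pull these out to obtain
\begin{align}
(A + UCV)B &= I + U\left[ -S^{-1} + C - C\,(V A^{-1} U)\,S^{-1} \right] V A^{-1}.
\end{align}

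The crux is to show that the bracketed $k\times k$ matrix vanishes. For this I would use the defining relation for $S$, which rearranges to the key substitution $V A^{-1} U = S - C^{-1}$. Plugging this in and simplifying,
\begin{align}
-S^{-1} + C - C(S - C^{-1})S^{-1} &= -S^{-1} + C - C S S^{-1} + C C^{-1} S^{-1} \\
&= -S^{-1} + C - C + S^{-1} = 0,
\end{align}
so that $(A + UCV)B = I$. Since $A + UCV$ and $B$ are square matrices of the same size, a right inverse is automatically a two-sided inverse, which gives $B = (A + UCV)^{-1}$ and hence the lemma; alternatively one checks $B(A+UCV)=I$ by the symmetric computation, this time factoring $A^{-1}U$ on the left and $V$ on the right.

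The only genuine difficulty is bookkeeping: tracking the four terms after distribution and recognizing that the last three share the common left/right factors $U$ and $V A^{-1}$. Once the bracket is isolated, the single substitution $V A^{-1} U = S - C^{-1}$ collapses it to zero with no further work, so there is no analytic or structural obstacle. As a consistency check, specializing to the rank-one case $U = u$, $V = v^T$, $C = 1$ reduces $S$ to the scalar $1 + v^T A^{-1} u$ and recovers exactly the Sherman--Morrison formula of Lemma~\ref{lemma:sm}.
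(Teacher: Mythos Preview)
Your proof is correct: the direct verification by multiplying $(A+UCV)$ against the claimed inverse and collapsing the bracketed term via the substitution $VA^{-1}U = S - C^{-1}$ is the standard and cleanest argument, and every step checks out. Note, however, that the paper does not actually prove this lemma; it is simply stated as the well-known Woodbury identity (generalizing Lemma~\ref{lemma:sm}) and used as a tool in the derivation of the recursive BRM($\lambda$) formulae, so there is no ``paper's proof'' to compare against. Your write-up would serve perfectly well as the missing justification.
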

The second lemma is a rewriting of imbricated sums:
\begin{lemma}\label{lemma:sum2}
  Let $f\in\mathbb{R}^{\mathbb{N}\times \mathbb{N}\times \mathbb{N}}$ and $n\in\mathbb{N}$. We have:
  \begin{equation}
    \sum_{i=1}^n \sum_{j=i}^n \sum_{k=i}^n f(i,j,k) =
    \sum_{i=1}^n \sum_{j=1}^i \sum_{k=1}^j f(k,i,j) +
    \sum_{i=2}^n \sum_{j=1}^{i-1} \sum_{k=1}^j f(k,j,i)
  \end{equation}
\end{lemma}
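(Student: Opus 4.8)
The plan is to show that both sides of the claimed identity equal the single sum
$$
\sum_{(a,b,c)\in\mathcal I} f(a,b,c),
\qquad
\mathcal I = \{(a,b,c)\in\{1,\dots,n\}^3 : a\le b \text{ and } a\le c\},
$$
i.e.\ the sum of $f$ over all index triples in $\{1,\dots,n\}^3$ whose first coordinate is a minimum of the three. For the left-hand side this is immediate: the constraints $1\le i\le n$, $i\le j\le n$, $i\le k\le n$ are exactly $(i,j,k)\in\mathcal I$, and the summand there is $f(i,j,k)$.

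For the right-hand side I would partition $\mathcal I$ according to the order of the two ``upper'' coordinates:
$$
\mathcal I = \mathcal I_{\le}\,\sqcup\,\mathcal I_{>},
\qquad
\mathcal I_{\le}=\{(a,b,c)\in\mathcal I : c\le b\},
\qquad
\mathcal I_{>}=\{(a,b,c)\in\mathcal I : c > b\}.
$$
On $\mathcal I_{\le}$ the constraints read $1\le a\le c\le b\le n$, so the relabelling $(i',j',k')=(b,c,a)$ turns them into $1\le k'\le j'\le i'\le n$ and the summand $f(a,b,c)$ into $f(k',i',j')$; summing reproduces the first triple sum on the right, $\sum_{i'=1}^n\sum_{j'=1}^{i'}\sum_{k'=1}^{j'} f(k',i',j')$. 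On $\mathcal I_{>}$ the constraints read $1\le a\le b$ and $b+1\le c\le n$; the relabelling $(i',j',k')=(c,b,a)$ turns them into $1\le k'\le j'\le i'-1$ with $2\le i'\le n$, and the summand into $f(k',j',i')$, which is the second triple sum on the right, $\sum_{i'=2}^n\sum_{j'=1}^{i'-1}\sum_{k'=1}^{j'} f(k',j',i')$. Since $\mathcal I_{\le}$ and $\mathcal I_{>}$ are disjoint and cover $\mathcal I$, adding the two pieces gives exactly the right-hand side. (This is the same ``swap the summation order'' device as Lemma~\ref{lemma:sum}, applied to the pair of upper indices rather than a single one.)

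Since the argument is pure index bookkeeping, there is no genuinely hard step; the only points that require care are (i) getting the permutation of the three arguments of $f$ right in each piece — the contrast between $f(k,i,j)$ in the first sum and $f(k,j,i)$ in the second encodes which of the two upper indices turns out to be the larger — and (ii) the boundary, namely that the second sum must start at $i'=2$ (equivalently, the range $j'\le i'-1$ is empty when $i'=1$) because $\mathcal I_{>}$ contains no triple with $c=1$. As a consistency check one may take $f\equiv 1$: the left side counts $|\mathcal I|=\sum_{i=1}^n (n-i+1)^2=\sum_{m=1}^n m^2$, while the right side gives $\sum_{i=1}^n \tfrac{i(i+1)}{2}+\sum_{i=2}^n\tfrac{(i-1)i}{2}=\sum_{i=1}^n i^2$, and the two agree.
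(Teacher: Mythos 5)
Your proof is correct. The paper states this lemma without proof (introducing it merely as ``a rewriting of imbricated sums''), and your argument -- identifying both sides with the sum over triples whose first coordinate is minimal, then partitioning according to whether the third coordinate exceeds the second and relabelling each piece -- is precisely the elementary index bookkeeping the authors intend, with the boundary point (the second sum starting at $i=2$) and the $f\equiv 1$ consistency check handled correctly.
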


As stated in \eqrefb{BRM:batchest}, we have the following batch estimate for BRM($\lambda$):
\begin{equation}
  \theta_i = \argmin_{\omega\in\mathbb{R}^p} \sum_{j=1}^i (z_{j\rightarrow  i} - \psi_{j\rightarrow i}^T\omega)^2
= (\tilde A_i)^{-1}\tilde b_i
\end{equation}
where
\begin{equation}
    \psi_{j\rightarrow i} = \sum_{k=j}^i \tilde{\rho}_j^{k-1} \Delta\phi_k
    \text{~and~}  z_{j\rightarrow i} = \sum_{k=j}^i \tilde{\rho}_j^{k-1} \rho_k r_k
\end{equation}
and
\begin{equation}
\label{eq:defAbBRM}
  \tilde A_i = \sum_{j=1}^i \psi_{j\rightarrow
  i} \psi_{j\rightarrow i}^T
\mbox{~~~and~~~}\tilde b_i =  \sum_{j=1}^i  \psi_{j\rightarrow  i} z_{j\rightarrow i}.
\end{equation}

To obtain a recursive formula, these two sums have to be reworked through Lemma~\ref{lemma:sum2}. Let us first focus on the latter:
\begin{align}
  \sum_{j=1}^i  \psi_{j\rightarrow i} z_{j\rightarrow i}
  &= \sum_{j=1}^i \sum_{k=j}^i \sum_{m=j}^i \tilde{\rho}_j^{k-1}
  \Delta\phi_k \tilde{\rho}_j^{m-1} \rho_m r_m
  \\
  &=\sum_{j=1}^i \sum_{k=1}^j \sum_{m=1}^k \tilde{\rho}_m^{j-1}
  \Delta\phi_j \tilde{\rho}_m^{k-1} \rho_k r_k
  +
  \sum_{j=2}^i \sum_{k=1}^{j-1} \sum_{m=1}^k \tilde{\rho}_m^{k-1}
  \Delta\phi_k  \tilde{\rho}_m^{j-1} \rho_j r_j.
\end{align}
Writing
\begin{equation}
  y_k = \sum_{m=1}^k (\tilde{\rho}_m^{k-1})^2 = 1 +
  (\gamma\lambda\rho_{k-1})^2 y_{k-1},
\end{equation}
we have that:
\begin{equation}
  \sum_{m=1}^k \tilde{\rho}_m^{j-1} \tilde{\rho}_m^{k-1} =
  \tilde{\rho}_k^{j-1} y_k.
\end{equation}
Therefore:
\begin{equation}
  \sum_{j=1}^i  \psi_{j\rightarrow i} z_{j\rightarrow i}
  = \sum_{j=1}^i \sum_{k=1}^j \tilde{\rho}_k^{j-1} y_k
  \Delta\phi_j\rho_k r_k
  +
  \sum_{j=2}^i \sum_{k=1}^{j-1} \tilde{\rho}_k^{j-1} y_k \Delta\phi_k \rho_j r_j.
\end{equation}
With the following notations:
\begin{align}
  z_j &= \sum_{k=1}^j \tilde{\rho}_k^{j-1} y_k \rho_k r_k
  = \gamma\lambda \rho_{j-1} z_{j-1} + \rho_j r_j y_j
  \\
  \mbox{and~~~}\mathfrak{D}_j &= \sum_{k=1}^j  \tilde{\rho}_k^{j-1} y_k \Delta\phi_k
  = \gamma\lambda\rho_{j-1} \mathfrak{D}_{j-1} + y_j \Delta\phi_j,
\end{align}
and with the convention that $z_0=0$ and $\mathfrak{D}_0 = 0$, one
can write:
\begin{equation}
  \sum_{j=1}^i  \psi_{j\rightarrow i} z_{j\rightarrow i}
  = \sum_{j=1}^i (\Delta\phi_j \rho_j r_j y_j +
  \gamma\lambda\rho_{j-1}(\Delta\phi_j z_{j-1} + \rho_j r_j
  \mathfrak{D}_{j-1}))
\end{equation}
Similarly, on can show that:
\begin{equation}
  \sum_{j=1}^i  \psi_{j\rightarrow i} \psi_{j\rightarrow i}^T
  = \sum_{j=1}^i (\Delta\phi_j \Delta\phi_j^T y_j +
  \gamma\lambda\rho_{j-1}(\Delta\phi_j \mathfrak{D}_{j-1}^T + \mathfrak{D}_{j-1}\Delta\phi_j^T))
\end{equation}
Denoting
  \begin{align}
    u_j &= \sqrt{y_j}\Delta\phi_j,\\
    v_j& = \frac{\gamma\lambda\rho_{j-1}}{\sqrt{y_j}}\mathfrak{D}_{j-1},
  \end{align}
and $I_2$ the $2\times 2$ identity matrix, we have:
  \begin{align}
    \sum_{j=1}^i \psi_{j\rightarrow i} \psi_{j\rightarrow i}^T
    &= \sum_{j=1}^i ((u_j+v_j)(u_j+v_j)^T - v_j v_j^T)
    \\
    &= \sum_{j=1}^{i-1} \psi_{j\rightarrow i} \psi_{j\rightarrow i}^T +
    \underbrace{\begin{pmatrix}
      u_i+v_i & v_i
    \end{pmatrix}}_{=U_i} I_2
    \underbrace{\begin{pmatrix}
      (u_i + v_i)^T \\ -v_i^T
    \end{pmatrix}}_{=V_i}.
  \end{align}
  % We note $U_i$ and $V_i$ the following $p\times 2$ and $2 \times p$ matrices:
%   \begin{equation}
%     U_i = \begin{pmatrix}
%       u_i+v_i & v_i
%     \end{pmatrix}
%     \text{ and }
%     V_i = \begin{pmatrix}
%       (u_i + v_i)^T \\ -v_i^T
%     \end{pmatrix}
%   \end{equation}
  We can apply the Woodbury identity given in Lemma~\ref{lemma:wmi}:
  \begin{align}
    C_i &= \left(\sum_{j=1}^i \psi_{j\rightarrow i} \psi_{j\rightarrow i}^T\right)^{-1}
    = \left(\sum_{j=1}^{i-1} \psi_{j\rightarrow i} z_{j\rightarrow i} +
    U_i
    I_2 V_i\right)^{-1}
    \\
    &= C_{i-1} - C_{i-1} U_i \left(I_2 + V_i C_{i-1} U_i\right)^{-1}
    V_i C_{i-1}.
  \end{align}
  The other sum can also be reworked:
  \begin{align}
    \tilde b_i &= \sum_{j=1}^i \psi_{j\rightarrow i} z_{j\rightarrow i}
    = \sum_{j=1}^i \Delta\phi_j r_j y_j + \gamma\lambda
    \left(\mathfrak{D}_{j-1}r_j + \Delta\phi_j z_{j-1}\right)
    \\
    &= \tilde b_{i-1} + \Delta\phi_i r_i y_i + \gamma\lambda
    \left(\mathfrak{D}_{i-1}r_i + \Delta\phi_i z_{i-1}\right)
    =  \tilde b_{i-1} + U_i\underbrace{
    \begin{pmatrix}
      \sqrt{y_i} r_i + \frac{\gamma\lambda}{\sqrt{y_i}} z_{i-1} \\ -
      \frac{\gamma\lambda}{\sqrt{y_i}} z_{i-1}
    \end{pmatrix}}_{=W_i}.
  \end{align}
  Finally, the recursive BRM($\lambda$) estimate can be computed as follows:
  \begin{equation}
    \theta_i = C_i \tilde b_i
    = \theta_{i-1} +  C_{i-1} U_i \left(I_2 + V_i C_{i-1}
    U_i\right)^{-1}\left(W_i - V_i \theta_{i-1}\right).
  \end{equation}
  This gives BRM($\lambda$)  as provided in Algorithm~\ref{algo:brm}.

\section{Proof of Theorem~\ref{th2} (Convergence of BRM($\lambda$))}

\label{proofbrm}

The proof of Theorem~\ref{th2} follows the general idea of that of Proposition 4 of \citet{Bertsekas:2009}. It is done in 2 steps. First we argue that the limit of the sequence is linked to that of an alternative algorithm for which one cuts the traces at a certain depth $l$. Then, we show that for all depth $l$, this alternative algorithm converges almost surely, we explicitely compute its limit and make $l$ tend to infinity to obtain the limit of BRM($\lambda$).

We will only show that $\frac{1}i \tilde A_i$ tends to $\tilde A$. The argument is similar for $\frac 1 i b_i \rightarrow \tilde b$.
Consider the following  $l$-truncated version of the algorithm based on the following alternative traces (we here limit the ``memory'' of the traces to a size $l$):
\begin{align}
y_{k,l} &= \sum_{m=\max(1,k-l+1)}^k (\tilde \rho_m^{k-1})^2 \\
\mathfrak{D}_{j,l}&=  \sum_{k=\max(1,j-l+1)}^j \tilde\rho_k^{j-1}
y_{k,l} \Delta \phi_k
\end{align}
and update the following matrix:
\begin{equation}
\tilde A_{i,l} = \tilde A_{i-1,l} + \Delta \phi_i\Delta \phi_i^T
y_{i,l} + \tilde \rho_{i-1}(\Delta \phi_i
\mathfrak{D}_{i-1,l}^T+\mathfrak{D}_{i-1,l}\Delta \phi_i^T).
\end{equation}
The assumption in \eqrefb{condBRM} implies that $\tilde \rho_i^{j-1} \le \beta^{j-i}$, therefore it can be seen that for all $k$,
\begin{equation}
  |y_{k,l}-y_k| = \sum_{m=1}^{\max(0,k-l)} (\tilde \rho_m^{k-1})^2 \leq \sum_{m=1}^{\max(0,k-l)} \beta^{2(k-m)}    \le \frac{\beta^{2l}}{1-\beta^2}=\epsilon_1(l)
\end{equation}
where $\epsilon_1(l)$ tends to 0 when $l$ tends to infinity.
Similarly, using the fact that $y_k \leq \frac{1}{1-\beta^2}$ and writing $K=\max_{s,s'} \| \phi(s)-\gamma \phi(s') \|_\infty$, one has for all $j$,
\begin{align}
\| \mathfrak{D}_{j,l}-\mathfrak{D}_j \|_\infty & \le \sum_{k=1}^{\max(0,j-l)}\tilde\rho_k^{j-1} \|y_k\Delta \phi_k\|_\infty +  \sum_{k=\max(1,j-l+1)}^j \tilde\rho_k^{j-1} |y_{k,l}-y_k| \|\Delta \phi_k\|_\infty\\
& \le \sum_{k=1}^{\max(0,j-l)}\tilde\rho_k^{j-1} \frac{1}{1-\beta^2}K + \sum_{k=\max(1,j-l+1)}^j \tilde\rho_k^{j-1} \frac{\beta^{2l}}{1-\beta^2}K \\
& \le \frac{\beta^l}{1-\beta}\frac{1}{1-\beta^2}K + \frac{1}{1-\beta}\frac{\beta^{2l}}{1-\beta^2}K=\epsilon_2(l)
\end{align}
where $\epsilon_2(l)$ also tends to 0. Then, it can be seen that:
\begin{align}
\|\tilde A_{i,l} - \tilde A_{i} \|_\infty & = \left\| \tilde A_{i-1,l}-\tilde A_{i-1} + \Delta \phi_i\Delta \phi_i^T (y_{i,l}-y_i) \right. \\
& \hspace{1cm} + \left. \tilde \rho_{i-1}(\Delta \phi_i (\mathfrak{D}_{i-1,l}^T-\mathfrak{D}_{i-1}^T)+(\mathfrak{D}_{i-1,l}-\mathfrak{D}_{i-1})\Delta \phi_i^T)\right\|_\infty\\
& \leq  \|\tilde A_{i-1,l} - \tilde A_{i-1}\|_\infty + \|\Delta \phi_i\Delta \phi_i^T\|_\infty|y_{k,l}-y_k| + 2 \beta \|\Delta \phi_i\|_\infty \| \mathfrak{D}_{i-1,l}-\mathfrak{D}_i \|_\infty \\
&\le \|\tilde A_{i-1,l} - \tilde A_{i-1}\|_\infty + K^2 \epsilon_1(l) + 2\beta K \epsilon_2(l)
\end{align}
and, by a recurrence on $i$, one obtains
\begin{equation}
\left \|\frac{\tilde A_{i,l}}{i} - \frac{\tilde A_{i}}{i} \right\|_\infty \leq \epsilon(l)
\end{equation}
where $\epsilon(l)$ tends to 0 when $l$ tends to infinity.
This implies that:
\begin{equation}
\liminf_{l \rightarrow \infty} \frac{\tilde A_{i,l}}{i}-\epsilon(l) \leq \liminf_{l \rightarrow \infty} \frac{\tilde A_{i}}{i} \leq \limsup_{l \rightarrow \infty} \frac{\tilde A_{i}}{i} \le \limsup_{l \rightarrow \infty} \frac{\tilde A_{i,l}}{i} + \epsilon(l).
\end{equation}
In other words, one can see that $ \lim_{i \rightarrow \infty} \frac{\tilde A_{i}}{i}$ and $\lim_{l \rightarrow \infty} \lim_{i \rightarrow \infty}\frac{\tilde A_{i,l}}{i}$ are equal if the latter exists. In the remaing of the proof, we show that the latter limit indeed exists and we compute it explicitely.

Let us fix some $l$ and let us consider the sequence $(\frac{\tilde
A_{i,l}}{i})$. At some index $i$, $y_{i,l}$ depends only on the last
$l$ samples, while $\mathfrak{D}_{i,l}$ depends on the same samples
and the last $l$ values of $y_{j,l}$, thus on the last $2l$ samples.
It is then natural to view the computation of  ${\tilde A_{i,l}}$,
which is based on $y_{i,l}$, $\mathfrak{D}_{i-1,l}$ and $\Delta
\phi_i=\phi_i-\gamma\rho_i \phi_{i+1}$, as being related to a Markov
chain of which the states are the $2l+1$ consecutive states of the
original chain $(s_{i-2l},\dots,s_i,s_{i+1})$. Write $E_0$ the
expectation with respect to its stationary distribution. By the
Markov chain Ergodic Theorem, we have with probability 1:
\begin{equation}
\label{eq:thelimit} \lim_{i \rightarrow \infty}\frac{\tilde
A_{i,l}}{i}=E_0\left[ \Delta \phi_{2l}\Delta \phi_{2l}^T y_{2l,l} +
\lambda\gamma \rho_{2l-1}(\Delta \phi_{2l}
\mathfrak{D}_{2l-1,l}^T+\mathfrak{D}_{2l-1,l}\Delta
\phi_{2l}^T)\right].
\end{equation}
Let us now explicitely compute this expectation.
Write $x_i$ the indicator vector (of which the $k^{th}$ coordinate
equals $1$ when the state at time $i$ is $k$ and $0$ otherwise). One
has the following relations: $\phi_i=\Phi^T x_i$.
Let us first look at the left part of the above limit:
{\scriptsize \begin{align}
E_0\left[\Delta \phi_{2l}\Delta \phi_{2l}^T y_{2l,l}\right]&=E_0\left[(\phi_{2l}-\gamma\rho_{2l}\phi_{2l+1})(\phi_{2l}-\gamma\rho_{2l}\phi_{2l+1})^T y_{2l,l}\right]\\
&=E_0\left[\Phi^T (x_{2l}-\gamma \rho_{2l}x_{2l+1})(x_{2l}-\gamma \rho_{2l}x_{2l+1})^T \Phi \left(\sum_{m=l+1}^{2l}(\lambda\gamma)^{2(2l-m)}(\rho_m^{2l-1})^2 \right)\right]\\
&=\Phi^T \left\{ \sum_{m=l+1}^{2l} (\lambda\gamma)^{2(2l-m)} E_0\left[ (\rho_m^{2l-1})^2 (x_{2l}-\gamma \rho_{2l}x_{2l+1})(x_{2l}-\gamma \rho_{2l}x_{2l+1})^T \right] \right\} \Phi\\
&=\Phi^T \left\{ \sum_{m=l+1}^{2l}  (\lambda\gamma)^{2(2l-m)}  E_0\left[ (X_{m,2l,2l}-\gamma X_{m,2l,2l+1}-\gamma X_{m,2l+1,2l}+\gamma^2 X_{m,2l+1,2l+1}) \right] \right\} \Phi
\end{align}}
where we used  the definiton $\tilde \rho_j^{k-1}=(\lambda\gamma)^{k-j}\rho_j^{k-1}$ and the notation $X_{m,i,j}=\rho_m^{i-1}\rho_m^{j-1}x_i x_j^T$.
To finish the computation, we will mainly rely on the following Lemma:
\begin{lemma}[Some identities]\label{lemma:ids}
\label{identities}
Let $\tilde P$ be the matrix of which the coordinates are
$\tilde p_{s s'}=\sum_a \pi(s,a)\rho(s,a)T(s,a,s')$, which is in
general not a stochastic matrix. Let $\mu_0$ be the stationary distribution of the behavior policy $\pi_0$. Write $\tilde D_i=\diag\left((\tilde P^T)^{i} \mu_0 \right)$. Then
\begin{align}
\forall m \le i, ~ E_0[X_{m,i,i}] &= \tilde D_{i-m}   \\
\forall m \le i \le j, ~ E_0[X_{m,i,j}]&=\tilde D_{i-m} P^{j-i}\\
\forall m \le j \le i, ~ E_0[X_{m,i,j}]&=(P^T)^{j-i}\tilde D_{i-m}
\end{align}
\end{lemma}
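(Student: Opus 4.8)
The plan is to reduce everything to two elementary ``change of measure'' facts about the importance ratios under the stationary law $E_0$ of the chain generated by $\pi_0$. A \emph{single} factor $\rho_k=\rho(s_k,a_k)$ turns a $\pi_0$-step into a $\pi$-step: for every $s$,
\begin{equation}
\sum_a \pi_0(a|s)\,\rho(s,a)\,P(s'|s,a)=\sum_a \pi(a|s)\,P(s'|s,a)=P_{ss'}
\end{equation}
(with $P=P^\pi$), whereas a \emph{squared} factor $\rho_k^2$ produces exactly the kernel $\tilde P$ of the statement, since $\sum_a \pi_0(a|s)\,\rho(s,a)^2\,P(s'|s,a)=\sum_a \frac{\pi(a|s)^2}{\pi_0(a|s)}\,P(s'|s,a)=\tilde p_{ss'}$. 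I will also use that $x_k$ is the $s_k$-th canonical basis vector (so $x_k x_k^T=\diag(x_k)$) and that the $E_0$-marginal of every $s_k$ is $\mu_0$; on a finite space with $\pi_0(a|s)>0$ whenever $\pi(a|s)>0$ the ratio $\rho$ is bounded, so all expectations below are finite.

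The core step is a recursion for the ``$\rho^2$-reweighted occupation vector''. Fix $m$ and, for $k\ge m$, set $v_k=E_0\big[(\rho_m^{k-1})^2\,x_k\big]\in\mathbb{R}^S$. Since $\rho_m^{m-1}=1$ by convention, $v_m=E_0[x_m]=\mu_0$. Conditioning on $s_k$ and using the squared-ratio identity,
\begin{equation}
v_{k+1}=E_0\big[(\rho_m^{k-1})^2\,\rho_k^2\,x_{k+1}\big]=E_0\big[(\rho_m^{k-1})^2\,\tilde P^T x_k\big]=\tilde P^T v_k,
\end{equation}
hence $v_k=(\tilde P^T)^{k-m}\mu_0$ and $\diag(v_k)=\tilde D_{k-m}$. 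The first identity follows at once: $X_{m,i,i}=(\rho_m^{i-1})^2\,\diag(x_i)$, so $E_0[X_{m,i,i}]=\diag(v_i)=\tilde D_{i-m}$.

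For the second identity ($m\le i\le j$) I would split the ratio product along the common prefix, $\rho_m^{i-1}\rho_m^{j-1}=(\rho_m^{i-1})^2\,\rho_i^{j-1}$, and condition on the history up to the state $s_i$ --- a $\sigma$-algebra that does not include the action $a_i$, so the tail ratio is still free to change the measure. A $(j-i)$-fold application of the single-ratio identity gives $E_0[\rho_i^{j-1}x_j^T\mid s_i]=e_{s_i}^T P^{j-i}$, and since $(\rho_m^{i-1})^2 x_i$ is measurable with respect to that history and $x_i e_{s_i}^T=\diag(x_i)$,
\begin{equation}
E_0[X_{m,i,j}]=E_0\big[(\rho_m^{i-1})^2\,\diag(x_i)\big]\,P^{j-i}=\diag(v_i)\,P^{j-i}=\tilde D_{i-m}\,P^{j-i}.
\end{equation}
The third identity is the transpose of the second: from $X_{m,i,j}=X_{m,j,i}^T$ and the symmetry of $\tilde D_{\cdot}$, for $m\le j\le i$ one gets $E_0[X_{m,i,j}]=\big(E_0[X_{m,j,i}]\big)^T=\big(\tilde D_{j-m}P^{i-j}\big)^T=(P^T)^{i-j}\tilde D_{j-m}$.

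The only genuinely delicate part is the bookkeeping in the mixed cases: one must see that $\rho_m^{i-1}\rho_m^{j-1}$ decomposes into a squared common prefix (the ratios up to time $\min(i,j)-1$, which $\tilde P$ propagates) and a single-power tail (the ratios from $\min(i,j)$ to $\max(i,j)-1$, which $P$ propagates), and then condition on precisely the $\sigma$-algebra generated by $s_m,a_m,\dots,s_{\min(i,j)}$ (but not $a_{\min(i,j)}$), so that the Markov property applies and the tail ratios remain available to convert $\pi_0$-steps into $\pi$-steps. Once this decomposition is in place, each of the three identities reduces to a one-line computation.
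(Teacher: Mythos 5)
Your proposal is correct and takes essentially the same route as the paper's proof: the squared-ratio change of measure $E_0[\rho_i^2 x_{i+1}\mid s_i]=\tilde P^T x_i$ with an induction giving $E_0[(\rho_m^{k-1})^2 x_k]=(\tilde P^T)^{k-m}\mu_0$, the split $\rho_m^{i-1}\rho_m^{j-1}=(\rho_m^{i-1})^2\rho_i^{j-1}$ with conditioning on the history up to $s_i$ (excluding $a_i$) for the mixed case, and transposition for the last case. Note that your third identity, $(P^T)^{i-j}\tilde D_{j-m}$ for $m\le j\le i$, is the correct form; the indices displayed in the lemma statement are a typo, as confirmed by how it is used in the main computation (e.g. $E_0[X_{m,2l+1,2l}]=P^T\tilde D_{2l-m}$).
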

\begin{proof}
We first observe that:
\begin{eqnarray*}
E_0[X_{m,i,i}] &=& E_0[(\rho_m^{i-1})^2 x_i x_i^T] \\
& = & E_0[(\rho_m^{i-1})^2 \diag(x_i)] \\
& = & \diag\left( E_0[(\rho_m^{i-1})^2 x_i\right)
\end{eqnarray*}
To provide the identity, we will thus simply provide a proof by
recurrence that $E_0[(\rho_m^{i-1})^2 x_i]=(\tilde P^T)^{m-i}\mu_0$. For
$i=m$, we have $E_0[x_m]=\mu_0$. Now suppose the relation holds for $i$
and let us prove it for $i+1$.
\begin{eqnarray*}
E_0[(\rho_m^i)^2 x_{i+1}] & = & E_0 \left[E_0[(\rho_m^i)^2 x_{i+1}|{\cal F}_i] \right] \\
& = & E_0\left[E_0[(\rho_m^{i-1})^2 (\rho_i)^2 x_{i+1}|{\cal F}_i] \right] \\
& = & E_0\left[(\rho_m^{i-1})^2 E_0[(\rho_i)^2 x_{i+1}|{\cal F}_i]\right].
\end{eqnarray*}
Write ${\cal F}_i$ the realization of the process until time $i$.
Recalling that $s_i$ is the state at time $i$ and $x_i$ is the
indicator vector corresponding to $s_i$, one has for all $s'$:
\begin{eqnarray*}
E_0[(\rho_i)^2 x_{i+1}(s')|{\cal F}_i] & = & \sum_{a}\pi_0(s_i,a)\rho(s_i,a)^2 T(s_i,a,s') \\
%& = & \sum_{a}\pi_0(s_i,a)\left( \frac{\pi(s_i,a)}{\pi_0(s_i,a)} \right)^2 T(s_i,a,s') \\
%& = & \sum_{a}\pi(s_i,a) \frac{\pi(s_i,a)}{\pi_0(s_i,a)} T(s_i,a,s') \\
& = & \sum_{a}\pi(s_i,a)\rho(s_i,a)T(s_i,a,s') \\
& = &  \tilde p_{s_i,s'} \\
& = & [\tilde P^T x_i](s').
\end{eqnarray*}
As this is true for all $s'$, we deduce that $E_0[(\rho_i)^2 x_{i+1}|{\cal F}_i]=\tilde P^T x_i$ and
\begin{eqnarray*}
E_0[(\rho_m^i)^2 x_{i+1}] &  = & E_0[(\rho_m^{i-1})^2 \tilde P^T x_i] \\
& = & \tilde P^T E_0[(\rho_m^{i-1})^2 \tilde P^T x_i] \\
& = & \tilde P^T (\tilde P^T)^{i} \mu_0 \\
& = & (\tilde P^T)^{i+1}\mu_0
\end{eqnarray*}
which concludes the proof by recurrence.

Let us consider the next identity. For $i \leq j$,
\begin{eqnarray*}
E_0[\rho_m^{i-1} \rho_m^{j-1}x_i x_j^T]& = & E_0[E_0[\rho_m^{i-1} \rho_m^{j-1} x_i x_j^T | {\cal F}_i]] \\
& = & E_0[(\rho_m^{i-1})^2 x_i E_0[ \rho_i^{j-1}x_j^T | {\cal F}_i]] \\
& = & E_0[(\rho_m^{i-1})^2 x_i x_i^T P^{j-i}] \\
& = & \diag\left( (\tilde P^T)^{m-i}\mu_0 \right) P^{j-i}.\\
\end{eqnarray*}

Eventually, the last identity is obtained by considering $Y_{m,i,j}=X_{m,j,i}^T. $
\end{proof}

Thus, coming back to our calculus,
{\small \begin{align}
E_0\left[\Delta \phi_{2l}\Delta \phi_{2l}^T y_{2l,l}\right]&=\Phi^T \left\{ \sum_{m=l+1}^{2l}  (\lambda\gamma)^{2(2l-m)} \left(\tilde D_{2l-m}-\gamma \tilde D_{2l-m} P - \gamma P^T \tilde D_{2l-m} + \gamma^2 \tilde D_{2l+1-m} \right) \right\}\Phi\\
&=\Phi^T (D_l - \gamma D_l P - \gamma P^T D_l + \gamma^2 D'_l )\Phi \label{eq:part1}
\end{align}}
\begin{equation}
\mbox{with~~~}D_l  =  \sum_{j=0}^{l-1} (\lambda\gamma)^{2j} \tilde D_j,~~~\mbox{and}~~~
 D_l'  =  \sum_{j=0}^{l-1} (\lambda\gamma)^{2j} \tilde D_{j+1}.
\end{equation}

Similarly, the second term on the right side of \eqrefb{thelimit} satisfies:
{\scriptsize
\begin{align}
&E_0\left[ \rho_{2l-1}\mathfrak{D}_{2l-1,l}\Delta \phi_{2l}^T\right]=E_0\left[ \rho_{2l-1}\sum_{k=l}^{2l-1}\tilde \rho_k^{2l-2}y_{k,l}\Delta \phi_k \Delta \phi_{2l}^T \right]\\
&=E_0\left[\sum_{k=l}^{2l-1}(\lambda\gamma)^{2l-1-k}\rho_k^{2l-1}\left(\sum_{m=k-l+1}^k(\tilde \rho_m^{k-1})^2\right)\Phi^T (x_k-\gamma \rho_k x_{k+1})(x_{2l}-\gamma \rho_{2l}x_{2l+1})^T \Phi \Delta \phi_{2l}^T \right]\\
&=\Phi^T \left( \sum_{k=l}^{2l-1} (\lambda\gamma)^{2l-1-k}\sum_{m=k-l+1}^k (\lambda\gamma)^{2(k-m)} E_0\left[ \rho_m^{2l-1} \rho_m^{k-1}(x_k-\gamma \rho_k x_{k+1})(x_{2l}-\gamma \rho_{2l}x_{2l+1})^T\right] \right) \Phi\\
&= \Phi^T \left(  \sum_{k=l}^{2l-1} (\lambda\gamma)^{2l-1-k}\sum_{m=k-l+1}^k (\lambda\gamma)^{2(k-m)}  E_0\left[X_{m,k,2l}-\gamma X_{m,k+1,2l}-\gamma X_{m,k,2l+1}+\gamma^2 X_{m,k+1,2l+1}\right] \right) \Phi \\
&= \Phi^T \left(  \sum_{k=l}^{2l-1} (\lambda\gamma)^{2l-1-k}\sum_{m=k-l+1}^k (\lambda\gamma)^{2(k-m)}  \left( \tilde D_{k-m}P^{2l-k} -\gamma \tilde D_{k+1-m}P^{2l-k-1} -\gamma \tilde D_{k-m}P^{2l+1-k} + \gamma^2 \tilde D_{k+1-m}P^{2l-k}  \right) \right) \Phi \\
&= \Phi^T \left(  \sum_{k=l}^{2l-1} (\lambda\gamma)^{2l-1-k}\sum_{m=k-l+1}^k (\lambda\gamma)^{2(k-m)}  \left( \tilde D_{k-m}P^{2l-k}(I-\gamma P) - \gamma \tilde D_{k+1-m}P^{2l-1-k}(I-\gamma P) \right) \right) \Phi \\
&= \Phi^T \left(  \sum_{k=l}^{2l-1} (\lambda\gamma)^{2l-1-k}\sum_{m=k-l+1}^k (\lambda\gamma)^{2(k-m)}  \left( \tilde D_{k-m}P - \gamma \tilde D_{k+1-m}\right) P^{2l-1-k} (I-\gamma P) \right) \Phi \\
&=\Phi^T \left(  \sum_{k=l}^{2l-1} (\lambda\gamma)^{2l-1-k}  \left( D_l P - \gamma D'_l\right) P^{2l-1-k} (I-\gamma P) \right) \Phi \\
&=\Phi^T \left( D_l P - \gamma D'_l\right) Q_l (I-\gamma P)  \Phi
\end{align}}
with $Q_l=\sum_{j=0}^{l-1} (\lambda\gamma P)^j.$

Gathering this and \eqrefb{part1}, we see that the limit of $\frac{A_{i,l}}{i}$ expressed in \eqrefb{thelimit} equals:
\begin{equation}
\Phi^T\left[ D_l - \gamma D_l P - \gamma P^T D_l + \gamma^2 D'_l + \lambda\gamma \left( (D_l P - \gamma D'_l)Q_l (I-\gamma P) + (I-\gamma P^T)Q_l^T(P^T D_l-\gamma D'_l) \right) \right]\Phi.
\end{equation}
When $l$ tends to infinity, $Q_l$ tends to $Q=(I-\lambda\gamma P)^{-1}$.
The assumption of \eqrefb{condBRM} ensures that $(\lambda\gamma)\tilde P$ has spectral radius smaller than 1, and thus when $l$ tends to infinity, $D_l$ tends to $D=\diag\left((I-(\lambda\gamma)^2 \tilde P^T)^{-1}\mu_0\right)$ and $D_l'$ to $D'=\diag\left(\tilde P^T(I-(\lambda\gamma)^2 \tilde P^T)^{-1}\mu_0\right)$. In other words, $\lim_{l \rightarrow \infty}\lim_{i \rightarrow \infty} \frac{\tilde A_{i,l}}{i}$ exists with probability 1 and equals:
\begin{equation}
\Phi^T\left[ D - \gamma D P - \gamma P^T D + \gamma^2 D' + \lambda\gamma \left( (D P - \gamma D')Q (I-\gamma P) + (I-\gamma P^T)Q^T(P^T D-\gamma D') \right) \right]\Phi.
\end{equation}
Eventually, this shows that $\lim_{i \rightarrow \infty} \frac{\tilde A_{i}}{i}$ exists with probability 1 and shares the same value.

A similar reasoning allows to show that $\lim_{i \rightarrow \infty} \frac{\tilde b_{i}}{i}$ exists and equals
\begin{equation}
\Phi^T \left[ (I-\gamma P^T)Q^T D + \lambda\gamma (D P-\gamma D')Q \right] R^\pi. ~~~\qed
\end{equation}

\section{Proof of Proposition~\ref{prop:beurk}}

\label{appbeurk}

To prove Proposition~\ref{prop:beurk}, we need the following technical
lemma.
\begin{lemma}
\label{lemma:technical}
  Forget the notations used so far.
  Let $\alpha_i$ and $\beta_i$ be two forward recursions defined as
  \begin{align}
    \alpha_i &= a_i + \eta_i \alpha_{i+1}
    \\
    \text{ and }
    \beta_i &= b_i + \eta_i \beta_{i+1}.
  \end{align}
  Assume that for any function $f$ we have that\footnote{This is
  typically true if the index $i$ refers to a state sampled
  according to some stationary distribution, which is the case we
  are interested in.}
  \begin{equation}
    E[f(a_i,b_i,\eta_i)] = E[f(a_{i-1},b_{i-1},\eta_{i-1})].
  \end{equation}
  Let also $u_i$, $v_i$ and $w_i$ be the backward recursions defined
  as:
  \begin{align}
    w_i &= 1 + \eta_{i-1}^2 w_{i-1}
    \\
    u_i &= a_i w_i + \eta_{i-1} u_{i-1}
    \\
    v_i &= b_i w_i + \eta_{i-1} v_{i-1}
  \end{align}
  Then, we have:
  \begin{equation}
    E[\alpha_i\beta_i] = E[a_i v_i + b_i u_i - a_i b_i w_i]
  \end{equation}
\end{lemma}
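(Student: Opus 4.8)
The plan is to make the two forward recursions explicit, exploit the stationarity hypothesis to realign time indices, and match the result against the closed forms of the backward recursions --- essentially the strategy already used in the proof of Proposition~\ref{prop:phi_delta}, pushed one order further because here we face a product of \emph{two} forward views. Write $\Theta_j^k = \prod_{l=j}^{k-1}\eta_l$ (empty product equal to $1$), which obeys the cocycle identity $\Theta_i^k\Theta_k^m = \Theta_i^m$ for $i\le k\le m$. Unrolling the recursions in steady state (ignoring the transient boundary terms, which vanish asymptotically), one gets $\alpha_i = \sum_{k\ge i}\Theta_i^k a_k$, $\beta_i = \sum_{m\ge i}\Theta_i^m b_m$, and likewise $w_i = \sum_{j\le i}(\Theta_j^i)^2$, $u_i = \sum_{j\le i}\Theta_j^i a_j w_j$ and $v_i = \sum_{j\le i}\Theta_j^i b_j w_j$.

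First I would multiply the two series and split the double sum along its diagonal, using the cocycle identity to factor out the $\eta$-products:
\[
\alpha_i\beta_i = \sum_{k\ge i}(\Theta_i^k)^2 a_k b_k \;+\; \sum_{i\le k<m}(\Theta_i^k)^2\Theta_k^m\, a_k b_m \;+\; \sum_{i\le m<k}(\Theta_i^m)^2\Theta_m^k\, a_k b_m.
\]
The crucial move is then to take expectations term by term and shift each summand backward in time so that its \emph{latest} index becomes $i$: since the process $(a_i,b_i,\eta_i)_i$ is in steady state, its law is shift-invariant, so the expectation of an expression depending on times $i,\dots,i+d$ equals that of the same expression at times $i-d,\dots,i$. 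After this shift and a straightforward re-indexing (turning a pair of relative offsets into a pair $p\le j\le i$), the diagonal sum collapses to $E[a_i b_i w_i]$, the sum over $k<m$ becomes $E[b_i\sum_{j<i}\Theta_j^i a_j w_j] = E[b_i(u_i-a_i w_i)]$, and the sum over $m<k$ becomes $E[a_i\sum_{j<i}\Theta_j^i b_j w_j] = E[a_i(v_i-b_i w_i)]$. Adding the three pieces, the three copies of $a_i b_i w_i$ combine into a single one with the right sign, yielding exactly $E[\alpha_i\beta_i] = E[a_i v_i + b_i u_i - a_i b_i w_i]$. The same computation can also be organized without ever writing the double sum, by repeatedly substituting $\alpha_i=a_i+\eta_i\alpha_{i+1}$ and $\beta_i=b_i+\eta_i\beta_{i+1}$ and using the one-step shift $E[\eta_i^2\alpha_{i+1}\beta_{i+1}] = E[\eta_{i-1}^2\alpha_i\beta_i]$; this is closer in spirit to the proofs of Propositions~\ref{prop:phi_delta} and~\ref{prop:g_phi}, and I would probably present it that way.

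The part that needs care is purely the bookkeeping: one must keep track of which index is the latest in every summand so as to shift by the correct amount, and check that the re-indexings are genuine bijections so that no term is dropped or double counted. I would not dwell on the analytic side --- absolute convergence of the series and the exchange of expectation and summation --- since we work under the implicit steady-state assumptions of this section (they hold, for instance, when $\gamma\lambda$ times the importance weights is controlled as in the hypothesis of Theorem~\ref{th2}); alternatively one may carry the argument out on the finite trajectory and let the transient terms vanish. Modulo these routine matters, the identity is a mechanical consequence of the cocycle relation for $\Theta$ and of stationarity.
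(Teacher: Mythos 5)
Your proposal is correct and rests on exactly the same two ingredients as the paper's proof: shift-invariance of the stationary law of $(a_i,b_i,\eta_i)$ and the algebra of the recursions, arriving at the same identification of $w_i$ with $1+\eta_{i-1}^2+(\eta_{i-1}\eta_{i-2})^2+\dots$ and of the cross terms with $E[b_iu_i]$ and $E[a_iv_i]$. The only difference is presentational -- you unroll both recursions into explicit series and split the double sum along its diagonal, whereas the paper reaches the same decomposition by repeated one-step substitution of $\alpha_i=a_i+\eta_i\alpha_{i+1}$, $\beta_i=b_i+\eta_i\beta_{i+1}$ together with the shift $E[\eta_i^2\alpha_{i+1}\beta_{i+1}]=E[\eta_{i-1}^2\alpha_i\beta_i]$ -- which is precisely the recursive variant you mention at the end and say you would present, so the two arguments coincide in substance (and your level of rigor about convergence and transient terms matches the paper's).
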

\begin{proof}
  The proof looks like the one of Proposition~\ref{prop:phi_delta}, but is a little bit more
  complicated. A key equality, to be applied repeatedly, is:
  \begin{align}
    \alpha_i \beta_i &= (a_i + \eta_i \alpha_{i+1})(b_i + \eta_i
    \beta_{i+1})
    \\
    &= a_i \beta_i + b_i \alpha_i + \eta_i^2 \alpha_{i+1}
    \beta_{i+1} - a_i b_i.
  \end{align}
  Another equality to be used repeatedly makes use of the
  ``stationarity''
  assumption. For any $k\geq 0$ we have:
  \begin{equation}
    E[(\prod_{j=0}^k \eta_{i-j}^2) \alpha_{i+1}\beta_{i+1}] =
    E[(\prod_{j=1}^{k+1} \eta_{i-j}^2) \alpha_{i}\beta_{i}].
  \end{equation}
  These two identities can be used to work the term of interest:
  \begin{align}
    E[\alpha_i\beta_i] &= E[(a_i + \eta_i \alpha_{i+1})(b_i + \eta_i \beta_{i+1})]
    \\
    &= E[a_i \beta_i] + E[b_i\alpha_i] + E[\eta_i^2
    \alpha_{i+1}\beta_{i+1}] - E[a_i b_i]
    \\
    &= E[a_i \beta_i] + E[b_i\alpha_i] - E[a_i b_i] + E[\eta_{i-1}^2(a_i + \eta_i \alpha_{i+1})(b_i + \eta_i \beta_{i+1})]
    \\
    &= E[a_i (1 + \eta_{i-1}^2) \beta_i] + E[b_i (1 +
    \eta_{i-1}^2)\alpha_i] - E[a_i b_i (1 +
    \eta_{i-1}^2)] + E[(\eta_{i-1}\eta_i)^2
    \alpha_{i+1}\beta_{i+1}]
  \end{align}
  This process can be repeated, giving
  \begin{equation}
    E[\alpha_i\beta_i] = E[(a_i\beta_i + b_i \alpha_i - a_i b_i) (1 + \eta_{i-1}^2 + (\eta_{i-1}\eta_{i-2})^2 + \dots)].
  \end{equation}
  We have that
  \begin{equation}
    w_i = 1 + \eta_{i-1}^2 w_{i-1} = 1 + \eta_{i-1}^2 + (\eta_{i-1}\eta_{i-2})^2
    + \dots,
  \end{equation}
  therefore:
  \begin{equation}
    E[\alpha_i\beta_i] = E[a_i w_i \beta_i] + E[b_i w_i \alpha_i] -
    E[a_i b_i w_i]
  \end{equation}
  We can work on the first term:
  \begin{align}
    E[a_i w_i \beta_i] &= E[a_i w_i (b_i + \eta_i \beta_{i+1})]
    \\
    &= E[a_i w_i b_i] + E[a_{i-1} w_{i-1} \eta_{i-1}(b_i + \eta_i \beta_{i+1})]
    \\
    &= E[b_i(a_i w_i + \eta_{i-1} (a_{i-1}w_{i-1}) +
    \eta_{i-1}\eta_{i-2} (a_{i-2}w_{i-2}) + \dots)]
    \\
    &= E[b_i u_i].
  \end{align}
  The work on the second term is symmetric:
  \begin{equation}
    E[b_i w_i \alpha_i] = E[a_i v_i].
  \end{equation}
  This finishes proving the result.
\end{proof}

The proof of Proposition~\ref{prop:beurk} is a simple application of the
preceding technical lemma.   By lemma~\ref{lemma:td_fwd_rec}, we
have that
  \begin{equation}
    \underbrace{\delta_i^\lambda}_{\doteq \alpha_i} = \underbrace{\delta_i}_{\doteq a_i} + \underbrace{\gamma \lambda
    \rho_i}_{\doteq \eta_i}
    \underbrace{\delta_{i+1}^\lambda}_{\doteq \alpha_{i+1}}.
  \end{equation}
  By lemma~\ref{lemma:gradTrec}, we have that
  \begin{equation}
    \underbrace{g_i^\lambda}_{\doteq \beta_i} = \underbrace{\gamma \rho_i (1-\lambda) \phi_{i+1}}_{\doteq b_i} + \underbrace{\gamma \lambda
    \rho_i}_{\doteq \eta_i} \underbrace{g_{i+1}^\lambda}_{\doteq
    \beta_{i+1}}.
  \end{equation}
  The result is then a direct application of
  lemma~\ref{lemma:technical}.

%%%%%%%%%%%%%%%%%%%%%%%%%%%%%%%%%%%%%%%%%%%%%%%%%%%%%%%%%%%%%%%%%%%%%%%%%%

\end{document}